\newif\ifincludenotationtable
\newif\ifarxiv
\newif\ifpaper
\title{Noise-tolerant fair classification}
\date{\vspace{-5ex}}
\author{Alexandre Louis Lamy\footnote{Equal contribution} \footnote{Department of Computer Science, Columbia University, New York, US, \texttt{\url{alexandre.l.lamy@columbia.edu}, \url{ziyuan.zhong@columbia.edu}, \url{verma@cs.columbia.edu}}} \and Ziyuan Zhong\addtocounter{footnote}{-1}\footnotemark[\value{footnote}] \addtocounter{footnote}{1}\footnotemark[\value{footnote}] \and Aditya Krishna Menon\footnote{Google, New York, US, \texttt{\url{adityakmenon@google.com}}} \and \addtocounter{footnote}{-1} Nakul Verma\footnotemark[\value{footnote}]}
\newif\ifarxiv
\newif\ifpaper
\theoremstyle{plain}
\newtheorem{theorem}{Theorem}
\newtheorem{corollary}[theorem]{Corollary}
\newtheorem{lemma}[theorem]{Lemma}
\newtheorem*{claim*}{Claim}
\theoremstyle{definition}
\theoremstyle{remark}
\def\ddefloop#1{\ifx\ddefloop#1\else\ddef{#1}\expandafter\ddefloop\fi}
\def\ddef#1{\expandafter\def\csname bb#1\endcsname{\ensuremath{\mathbb{#1}}}}
\def\ddef#1{\expandafter\def\csname c#1\endcsname{\ensuremath{\mathcal{#1}}}}
\def\ddef#1{\expandafter\def\csname v#1\endcsname{\ensuremath{\boldsymbol{#1}}}}
\def\ddef#1{\expandafter\def\csname v#1\endcsname{\ensuremath{\boldsymbol{\csname #1\endcsname}}}}
\DeclareMathOperator*{\argmin}{arg\,min}
\DeclareMathOperator{\Ex}{\mathbb{E}}
\DeclareMathOperator{\R}{\mathbb{R}}
\DeclareMathOperator{\sign}{sign}
\newcommand{\set}[1]{\ensuremath{\left\{#1\right\}}}
\newcommand{\abs}[1]{\ensuremath{\left|#1\right|}}
\DeclareRobustCommand{\1}{\text{\usefont{U}{bbold}{m}{n}1}}
\newcommand{\ellf}{\bar \ell}
\newcommand{\Lf}{\bar L}
\newcommand{\ellc}{\ell}
\newcommand{\Lc}{L}
\newcommand{\signf}{c_f}
\newcommand{\FINALCHANGE}[2]{#2}
\newcommand{\AKMEDIT}[1]{{#1}}
\begin{document}

\maketitle

\begin{abstract}
%auto-ignore
%Fair machine learning concerns the analysis and design of learning algorithms that do not exhibit systematic bias with respect to some sensitive feature (e.g., race, gender).
%This subject
% Fair machine learning has %received sustained interest in the past few years, with
% seen considerable progress on both devising sensible measures of fairness, and means of achieving them.
% Typically, the latter
Fairness-aware learning involves designing algorithms that do not discriminate with respect to some sensitive feature (e.g., race or gender). 
Existing work on the problem operates under the assumption that the sensitive feature available in one's training sample is perfectly reliable.
This assumption may be violated in many real-world cases:
for example, respondents to a survey may choose to conceal or obfuscate their group identity out of fear of potential discrimination. 
This poses the question of whether one can still learn fair classifiers given \emph{noisy} sensitive features.
In this paper, we answer the question in the affirmative:
we show that if one measures fairness using the \emph{mean-difference score},
and sensitive features are
subject to noise from
the \emph{mutually contaminated learning} model,
then owing to a simple identity
we only need to change the desired fairness-tolerance.
The requisite tolerance can be estimated by leveraging existing noise-rate estimators from the label noise literature.
%\AKMEDIT{from the label noise literature}.
We finally show that our procedure is empirically effective on two case-studies involving sensitive feature censoring.

\end{abstract}

\section{Introduction}
\label{sec:intro}
%auto-ignore
Classification is concerned with maximally discriminating between a number of pre-defined groups.
%(e.g., determining if an individual is likely to repay a loan, or not).
\emph{Fairness-aware} classification concerns the analysis and design of classifiers that
do not discriminate with respect to some sensitive feature (e.g., race, gender, age, income).
Recently, much progress has been made
on devising {appropriate} measures of fairness~\citep{Calders,Dwork:2011,Feldman,Hardt2016,ZafarWWW,Zafar:2017b,Kusner,Kim,Speicher:2018,Heidari:2019},
and
means of achieving them~\citep{ZemelICML,ZafarWWW,Calmon:2017,Dwork:2018,reduction,Donini:2018,Cotter:2018,Williamson:2019,Mohri:2019}.

Typically, fairness is achieved by adding constraints which depend on the sensitive feature, and then correcting one's learning procedure to achieve these fairness constraints.
For example, suppose the data comprises of pairs of individuals and their loan repay status,
and the sensitive feature is gender.
Then, we may add a constraint that we should predict equal loan repayment for both men and women (see \S\ref{section:fairnessdef} for a more precise statement).
However, this and similar approaches assume that we are able to correctly measure or obtain the sensitive feature. 
In many real-world cases, one may only observe noisy versions of the sensitive feature.
For example, survey respondents may choose to conceal or obfuscate their group identity out of %privacy concerns.
{concerns of potential mistreatment or outright discrimination}. 

One is then brought to ask whether fair classification in the presence of such \emph{noisy} sensitive features is still possible.
Indeed, if the noise is high enough and all original information about the sensitive features is lost, then it is as if the sensitive feature was not provided.
Standard learners can then be 
%\aditya{I don't know if ``necessarily'' is true.}
%\EDIT{necessarily} 
unfair on such data~\citep{Dwork:2011, pedreshi2008discrimination}.
Recently,~\citet{hashimoto2018fairness} showed that progress is  possible, albeit for specific fairness measures.
%(and potentially impossible).
The question of what can be done under a smaller amount of noise is thus both interesting and non-trivial.

In this paper, we consider two {practical} scenarios where we may only observe noisy sensitive features:
\begin{enumerate}[itemsep=0pt,topsep=0pt,leftmargin=16pt,label={(\arabic*)}]
    \item suppose we are releasing data involving human participants.
    %Even if noise-free sensitive features are available,
    %for legal or ethical reasons,
    %we may wish to \emph{add} noise so as to obfuscate sensitive attributes,
    %and thus protect the participants' privacy.
    % \EDIT{Even if noise-free sensitive features are available,
    % we may wish to \emph{add} noise so as to obfuscate sensitive attributes, for legal or ethical reasons
    % to protect participant data from potential misuse.}
    Even if noise-free sensitive features are available,
    we may wish to \emph{add} noise so as to obfuscate sensitive attributes,
    and thus protect participant data from potential misuse.
    % in some cases, while the noise-free sensitive feature might be available, it may be desirable to add noise to it before being able to publish the data.
    % Indeed, one might legally or ethically have to obfuscate sensitive attributes so as to protect the privacy of a study's participants.
    Thus, being able to learn fair classifiers under sensitive feature noise is a way to achieve both privacy \emph{and} fairness.
    
    \item suppose we wish to analyse data where 
    the presence of the sensitive feature is only known for a subset of individuals,
    while for others the feature value is unknown.
    For example,
    % consider a historical dataset where the presence of the sensitive feature (e.g., disability) was occasionally recorded.
    % Alternatively,
    patients filling out a form may feel comfortable disclosing that they do not have a pre-existing medical condition; however, some who do have this condition may %not provide truthful responses.   
    {wish to refrain from responding}.
    This can be seen as a variant of the \emph{positive and unlabelled} (PU) setting~\citep{Denis:1998a},
    where the sensitive feature is present (positive) for some individuals,
    but absent (unlabelled) for others.
    % The sensitive feature can be considered  (positive points) but where most entries could have been either positive or negative (thus unlabelled points).
    % we consider cases where 
    % That is, rather than being able to tell whether the sensitive feature is positive or negative, we are simply given positive and unlabeled entries.
\end{enumerate}

%By considering a general measure of fairness and model of noise, 
By considering {popular measures of fairness and a general model of noise},
we show that fair classification is possible under 
%many (including the above) settings.
many settings, including the above.
Our precise contributions are:
\begin{enumerate}[itemsep=0pt,topsep=0pt,leftmargin=24pt,label=(\textbf{C\arabic*})]
    \item we show that if the sensitive features are subject to noise as per the \emph{mutually contaminated learning model}~\citep{Scott13},
and one measures fairness using the \emph{mean-difference score}~\citep{Calders2010},
then a simple identity (Theorem~\ref{thm: sensitive noise reduction}) yields that we only need to change the desired fairness-tolerance.
The requisite tolerance can be estimated by leveraging existing noise-rate estimators
from the label noise literature,
% \AKMEDIT{from the label noise literature},
yielding a reduction (Algorithm~\ref{alg:reduction}) to regular noiseless fair classification.

    \iffalse \item We additionally show that the framework can handle noise in both the sensitive \emph{and} target feature.\fi

    \item we show that our procedure is empirically effective on both case-studies mentioned above.
    %in which the sensitive feature is purposefully obfuscated for privacy reasons, or only available in the PU setting.
\end{enumerate}

In what follows,
we review the existing literature on learning fair and noise-tolerant classifiers in~\S\ref{sec:related},
and introduce the %(to our knowledge novel)
novel
problem formulation of noise-tolerant fair learning %in~\S\ref{sec:problem-formulation}.
in~\S\ref{sec:background}.
We then detail how to address this problem in~\S\ref{section:sensnoise},
and empirically confirm the efficacy of our approach in~\S\ref{sec:experiments}.

\section{Related work}
\label{sec:related}
%auto-ignore
We review relevant literature on fair and noise-tolerant %, and privacy-preserving
machine learning.

\subsection{Fair machine learning}
\label{sec:bg-fairness}

Algorithmic fairness has gained significant attention recently because of the undesirable social impact caused by bias in machine learning algorithms~\citep{COMPAS,gendershades,Lahoti}.
There are two central %problems:
{objectives}:
%what is an appropriate fairness definition, and
%how to optimize the original objective while satisfying the chosen fairness definition.
% \EDIT{designing appropriate application-specific fairness criterion, and
% how to develop predictors that respect the chosen fairness conditions}.
designing appropriate application-specific fairness criterion, and
developing predictors that respect the chosen fairness conditions.

%To deal with the first problem, multiple fairness definitions have been proposed.
%\aditya{US English: categorized}
{Broadly, fairness objectives can be categorised} into
%These fall into two main categories: 
individual- and group-level fairness.
Individual-level fairness~\citep{Dwork:2011,Kusner,Kim} requires the treatment of ``similar'' individuals to be similar.
Group-level fairness 
asks the treatment of the groups divided based on some sensitive attributes (e.g., gender, race) to be similar.
Popular notions of group-level fairness include
demographic parity~\citep{Calders}
and
equality of opportunity~\citep{Hardt2016}%
%, and disparate mistreatment~\citep{ZafarWWW}
;
see \S\ref{section:fairnessdef} for formal definitions. 

% \EDIT{Group-level fairness criteria tend to be more conducive to algorithmic design and analysis, and is achieved by three possible ways:}
%\aditya{Don't want to say ``more conducive''. Someone working on individual fairness could get irate.}
{Group-level fairness criteria have been the subject of significant algorithmic design and analysis,}
and are achieved in three possible ways:
%
%There are three main types of method
%to achieve fairness while optimizing the original objective:
\begin{itemize}[itemsep=0pt,topsep=0pt,leftmargin=16pt]
    \item[--] pre-processing methods~\citep{ZemelICML,ZemelICLR,Lum,Johndrow:2017,Calmon:2017,delBarrio:2018,Adler:2018},
    which
    %usually embed the representation of data into a new space such that the bias is removed.
    {usually find a new representation of the data where the bias with respect to the sensitive feature is explicitly removed.}
    
    \item[--] methods enforcing fairness during training~\citep{Calders,Woodworth,ZafarWWW,reduction},
    which
    usually add a constraint that is a proxy of the fairness criteria or add a regularization term to penalise fairness violation.

    \item[--] post-processing methods~\citep{Feldman,Hardt2016},
    which usually apply a thresholding function to make the prediction satisfying the chosen fairness notion across groups.
\end{itemize}

\subsection{Noise-tolerant classification}

Designing noise-tolerant classifiers is a classic topic of study, concerned with the setting where one's training labels are corrupted in some manner.
Typically, works in this area postulate a particular model of label noise, and study the viability of learning under this model.
% Prominent noise models include
% symmetric and class-conditional noise~\citep{Angluin88,Bylander:1994,Blum:1996,Blum:1998},
% constant-partition noise~\citep{Decatur:1997,Ralaivola:2006},
% and instance-dependent noise~\citep{Awasthi:2015}.
% It has also been shown that learning performance degrades as label noise increases~\citep{Nettleton2010}.
Class-conditional noise (CCN)~\citep{Angluin88} is one such %well-studied 
{effective}
noise model.
Here, samples from each class have their labels flipped with some constant (but class-specific) probability.
Algorithms that deal with CCN corruption have been well studied~\citep{natarajan2013learningw,Liu2016ClassificationWN,northcutt2017rankpruning}.
These methods typically first estimate the noise rates, which are then used for prediction.
A special case of CCN learning is learning from positive and unlabelled data (PU learning)~\citep{Elkan},
where in lieu of explicit negative samples, one has a pool of unlabelled data.

Our interest in this paper will be the \emph{mutually contaminated} (MC) \emph{learning} noise model~\citep{Scott13}.
This model (described in detail in \S\ref{sec:mc-learning}) captures both CCN and %(both settings of)
PU learning as special cases~\citep{pmlr-v30-Scott13,corruption}, {as well as other interesting noise models}.

\section{Background and notation}

\label{sec:background}
We recall the settings of standard and fairness-aware binary classification%
\footnote{For simplicity, we consider the setting of binary target and sensitive features.
However, our derivation and method can be easily extended to the multi-class setting.},
and establish notation.
%used in the sequel.

Our notation is summarized in Table \ref{table:notation}.

\subsection{Standard binary classification}

Binary classification concerns predicting the label or \emph{target feature} $Y \in \set{0,1}$ that best corresponds to a given instance $X \in \cX$.
Formally, suppose $D$ is a distribution over (instance, target feature) pairs from $\cX \times \set{0,1}$.
Let $f \colon \cX \to \R$ be a score function,
and $\cF \subset \R^{\cX}$ be a user-defined class of such score functions.
Finally, let $\ellc: \R \times \set{0,1} \to \R_+$ be a loss function measuring the disagreement between a given score and binary label.
%Without loss of generality we suppose that the classification is done via taking the sign of a score function $f: \R^d \to \R$.
%And:
%$$\Lc_D(f) := \Ex_{(x, a, y) \sim D}[\ellc(f(x), y)]$$
The goal of binary classification is to minimise %$f^*$, where
\begin{align}
    \label{eqn:standard-risk}
    % f^*    &:= \argmin_{f\in \mathcal{F}} \Lc_D(f) \\
    \Lc_D(f) &:= \Ex_{(X, Y) \sim D}[\ellc(f(X), Y)].
\end{align}
% In practice, one only observes samples from $D$, and minimizes the empirical counterpart to $\Lc_D$.

%
\subsection{Fairness-aware classification}
\label{section:fairnessdef}

In fairness-aware classification, the goal of accurately predicting the target feature $Y$ remains.
However, there is an additional \emph{sensitive feature} $A \in \set{0,1}$ upon which we do not wish to discriminate.
Intuitively, some user-defined fairness loss should be roughly the same regardless of $A$.

Formally, suppose $D$ is a distribution over 
(instance, sensitive feature, target feature) triplets from
$\cX \times \{0,1\} \times \{0,1\}$.
%where when $(X,A,Y) \sim D$ we have that $X$ is the feature vector, $A$ is the sensitive attribute, and $Y$ is the label.
%The goal in fair machine learning is to learn a good classifier while keeping unfairness low.
%In fair binary classification, the problem can be formulated as finding $f^*$ where
The goal of \emph{fairness-aware} binary classification is to find%
\footnote{Here, $f$ is assumed to not be allowed to use $A$ at test time, which is a common legal restriction~\citep{lipton2018does}.
Of course, $A$ can be used at training time to find an $f$ which satisfies the constraint.}
%$f^*$, where
\begin{equation}
    \label{eqn:fairness-objective}
    \begin{aligned}
        %f^* := \argmin_{f\in \mathcal{F}} \Lc_D(f) \textrm{ s.t. some fairness constraint(s) }
        f^*    &:= \argmin_{f\in \mathcal{F}} \Lc_D(f), \textrm{ such that } \Lambda_D( f ) \leq \tau \\
        \Lc_D(f) &:= \Ex_{(X, A, Y) \sim D}[\ellc(f(X), Y)],
    \end{aligned}
\end{equation}
for user-specified 
\emph{fairness tolerance} $\tau \geq 0$, and
\emph{fairness constraint} $\Lambda_D \colon \cF \to \R_+$. %is a user-specified the fairness constraint(s) are as described in section \ref{section:fairnessdef}.
Such constrained {optimisation} problems can  be solved in various ways,
e.g., convex relaxations~\citep{Donini:2018},
alternating {minimisation}~\citep{ZafarWWW,Cotter:2018},
or {linearisation}~\citep{Hardt2016}.
% or by 
% considering randomized classifiers to linearize the constraints~\citep{Hardt2016,cost}.

%this definition does not allow $A$ to be used for classification.
% While it would be interesting to extend the approach to when $A$ is available as input to $f$ as well, this proves challenging when subsequently considering the effect of noise on $A$.
%as the noise would then also come into the classifier.

%
%\subsection{Common Group Fairness Definitions}\label{section:fairnessdef}

A number of fairness constraints $\Lambda_D( \cdot )$ have been proposed in the literature.
We focus on two {important and} specific choices in this paper, inspired by~\citet{Donini:2018}:
\begin{align}
    \label{eqn:ddp}
    %&\textbf{DDP}_{\ellf}: 
    \Lambda^{\mathrm{DP}}_D( f ) &:= \abs{ \Lf_{D_{0, \cdot}}(f) - \Lf_{D_{1, \cdot}}(f) } \\ 
    %\leq \tau \\
    %&\textbf{DEO}_{\ellf}: 
    \label{eqn:deo}
    \Lambda^{\mathrm{EO}}_D( f ) &:= \abs{ \Lf_{D_{0, 1}}(f) - \Lf_{D_{1, 1}}(f) }, 
    %\leq \tau,
\end{align}
where we denote by $D_{a,\cdot},D_{\cdot,y},$ and $D_{a,y}$ the distributions over $\cX \times \{0,1\} \times \{0,1\}$ given by $D_{\mid A=a}, D_{\mid Y=y}, $ and $D_{\mid A=a, Y=y}$ and $\ellf: \R \times \set{0,1} \to \R_+$ is the user-defined fairness loss with corresponding $\Lf_D(f) := \Ex_{(X, A, Y) \sim D}[\ellf(f(X), Y)]$.
Intuitively, these measure the difference in the average of the fairness loss incurred among the instances with and without the sensitive feature.

Concretely, if $\ellf$ is taken to be $\ellf(s, y) = \1[\sign(s) \neq 1]$ and the 0-1 loss $\ellf(s, y) = \1[\sign(s) \neq y]$ respectively, then 
for $\tau = 0$,
\eqref{eqn:ddp} and \eqref{eqn:deo}
% simplify to
% \begin{align*}
%     %&\textbf{DDP}_{01}: 
%     \Lambda^{\mathrm{DP}}_D( f ) &= \abs{ \mathbb{P}_{D_{0, \cdot}}[ f(X) \geq 0 ] - \mathbb{P}_{D_{1, \cdot}}[ f(X) \geq 0 ] } \\
%     %\leq \tau \\
%     %&\textbf{DEO}_{01}: 
%     \Lambda^{\mathrm{EO}}_D( f ) &= \abs{ \mathbb{P}_{D_{0, 1}}[ f(X) \geq 0 ] - \mathbb{P}_{D_{1, 1}}[ f(X) \geq 0 ] } 
%     %\leq \tau.
% \end{align*}
% For $\tau = 0$,
% these constraints
correspond to the \textit{demographic parity}~\citep{Dwork:2011} and \textit{equality of opportunity}~\citep{Hardt2016} constraints. 
Thus, we denote these two relaxed fairness measures \textit{disparity of demographic parity} (DDP) and \textit{disparity of equality of opportunity} (DEO).
These quantities are also known as the \emph{mean difference score} in~\citet{Calders2010}.% 
\footnote{Keeping $\ellf$ generic allows us to capture a range of group fairness definitions, not just demographic parity and equality of opportunity; e.g., disparate mistreatment~\citep{ZafarWWW} corresponds to using the 0-1 loss and $\Lambda^{\mathrm{DP}}_D$, and equalized odds can be captured by simply adding another constraint for $Y=0$ along with $\Lambda^{\mathrm{EO}}_D$. }

\begin{table}[t]
\caption {Glossary of commonly used symbols} \label{table:notation}
\centering

\resizebox{0.99\linewidth}{!}{
\begin{tabular}{llll}
\toprule
\toprule
\textbf{Symbol} &  \textbf{Meaning} & \textbf{Symbol} &  \textbf{Meaning} \\ 
\toprule
$X$ &  instance & $D_{\textrm{corr}}$ &  corrupted distribution $D$\\
$A$ &  sensitive feature & $f$ &  score function $f: \cX \to \R$\\
$Y$ &  target feature & $\ellc$ & accuracy loss $\ellc: \R \times \set{0, 1} \to \R_+$ \\
$D$ &  distribution $\mathbb{P}(X,A,Y)$ &  $\Lc_D$ &  expected accuracy loss on $D$\\
$D_{a,\cdot}$ &  distribution $\mathbb{P}(X,A,Y|A=a)$ & $\ellf$ & fairness loss $\ellf: \R \times \set{0, 1} \to \R_+$\\
$D_{\cdot,y}$ &  distribution $\mathbb{P}(X,A,Y|Y=y)$ & $\Lf_D$ & expected fairness loss on $D$ \\
$D_{a,y}$ &  distribution $\mathbb{P}(X,A,Y|A=a,Y=y)$  & $\Lambda_D$ &  fairness constraint  \\ 
\bottomrule
\end{tabular}
}
\end{table}

\subsection{Mutually contaminated learning}
\label{sec:mc-learning}

In the framework of learning from mutually contaminated distributions (MC learning)~\citep{pmlr-v30-Scott13},
instead of observing samples from the ``true'' (or ``clean'') joint distribution $D$,
one observes samples from a corrupted distribution $D_{\mathrm{corr}}$.
The corruption is such that the observed \emph{class-conditional} distributions are mixtures of their true counterparts.
More precisely, let $D_{y}$ denote the conditional distribution for label $y$.
Then, one assumes that
\FINALCHANGE{
\begin{equation}
    \label{equation:mc-model}
    \begin{aligned}
        D_{0, \mathrm{corr}} &= (1-\alpha) \cdot D_{1} + \alpha \cdot  D_{0}\\
        D_{1, \mathrm{corr}} &= \beta \cdot D_{1} + (1-\beta) \cdot  D_{0},
    \end{aligned}
\end{equation}
}{
\begin{equation}
    \label{equation:mc-model}
    \begin{aligned}
        D_{1, \mathrm{corr}} &= (1-\alpha) \cdot D_{1} + \alpha \cdot  D_{0}\\
        D_{0, \mathrm{corr}} &= \beta \cdot D_{1} + (1-\beta) \cdot  D_{0},
    \end{aligned}
\end{equation}
}
where $\alpha, \beta \in (0,1)$ are (typically unknown) noise parameters with $\alpha + \beta < 1$.%
\footnote{\AKMEDIT{The constraint imposes no loss of generality:
when $\alpha+\beta>1$, we can simply flip the two labels and apply our theorem.
When $\alpha+\beta=1$, all information about the sensitive attribute is lost.
This pathological case 
is equivalent to not measuring the sensitive attribute at all.}}
Further, the corrupted base rate $\pi_{\mathrm{corr}} := \mathbb{P}[ Y_{\mathrm{corr}} = 1 ]$ may be arbitrary.
%and in general is distinct from the clean base rate $\pi := \mathbb{P}[ Y = 1 ]$.
The MC learning framework subsumes CCN and PU learning~\citep{pmlr-v30-Scott13,corruption}, \AKMEDIT{which are prominent noise models that have seen sustained study in recent years~\citep{Jain:2017,Kiryo:2017,vanRooyen:2018,Katz:2019,Charoenphakdee:2019}.}
% Thus, it is {a flexible and} appealing noise model.

%
%\section{Fairness under corruption: formulation}
%\label{sec:problem-formulation}
%\input{problem_formulation.tex}

%2
\section{Fairness under sensitive attribute noise}
\label{section:sensnoise}
\arxivtrue
\paperfalse
%auto-ignore
The standard fairness-aware learning problem assumes we have access to the true sensitive attribute,
so that we can both measure and control our classifier's unfairness as measured by, e.g., Equation~\ref{eqn:ddp}.
Now suppose that rather than being given the sensitive attribute, we get a noisy version of it. 
We will show that the fairness constraint on the clean distribution is \emph{equivalent} to a \emph{scaled} constraint on the 
noisy distribution.
% That is,
% we show that fairness constraints are naturally robust to noise in the sensitive attribute.
This gives a simple reduction from fair machine learning in the presence of noise to
the regular fair machine learning, which can be done in a variety of ways as discussed in~\S\ref{sec:bg-fairness}.

\subsection{Sensitive attribute noise model}
\label{section:sensnoisesetup}

As previously discussed, we use MC learning as our noise model,
as this captures both CCN and PU learning as special cases;
hence, we automatically obtain results for both these interesting settings.

Our specific formulation of MC learning noise on the sensitive feature is as follows.
Recall from \S\ref{section:fairnessdef} that $D$ is a distribution over $\mathscr{X} \times \set{0,1} \times \set{0,1}$.
Following (\ref{equation:mc-model}),
for unknown noise parameters
$\alpha,\beta \in (0,1)$ with $\alpha + \beta < 1$,
we assume that the corrupted class-conditional distributions are:
\begin{equation}\label{equation:mcnoise1}
    \begin{aligned}
        &D_{1,\cdot,\mathrm{corr}} = (1-\alpha) \cdot D_{1,\cdot} + \alpha \cdot  D_{0,\cdot}\\
        &D_{0,\cdot,\mathrm{corr}} = \beta \cdot D_{1,\cdot} + (1-\beta) \cdot  D_{0,\cdot},
    \end{aligned}
\end{equation}
and that the corrupted base rate is $\pi_{a,\mathrm{corr}}$ (we write the original base rate, $\mathbb{P}_{(X, A, Y) \sim D}[A=1]$ as $\pi_a$).
%In words,
{That is,} the distribution over (instance, label) pairs for the group with $A = 1$, i.e. $\mathbb{P}( X, Y \mid A = 1 )$,
is assumed to be mixed with the distribution for the group with $A = 0$, and vice-versa.

Now, when interested in the EO constraint, it can be simpler to assume that the noise instead satisfies
\begin{equation}\label{equation:mcnoise2}
    \begin{aligned}
        &D_{1,1,\mathrm{corr}} = (1-\alpha') \cdot D_{1,1} + \alpha' \cdot D_{0,1}\\
        &D_{0,1,\mathrm{corr}} = \beta' \cdot D_{1,1} + (1-\beta') \cdot D_{0,1},
    \end{aligned}
\end{equation}
for noise parameters $\alpha', \beta' \in (0, 1)$.
As shown by the following, this is not a different assumption.
%but simply a direct implication of \eqref{equation:mcnoise1}. 

\begin{lemma}\label{lemma:mcrelation}
    Suppose there is noise in the sensitive attribute only, as given in Equation \eqref{equation:mcnoise1}.
    Then,
    there exists constants $\alpha', \beta'$ such that
    Equation \eqref{equation:mcnoise2} holds.
    % \begin{align*}
    % \alpha' &= \frac{\alpha\mathbb{P}[Y=1 \mid A = 0] }{(1-\alpha)\mathbb{P}[Y=1 \mid A = 1] + \alpha \mathbb{P}[Y=1 \mid A=0]} \\
    % \beta' &= \frac{\beta\mathbb{P}[Y=1 \mid A = 1] }{(1-\beta)\mathbb{P}[Y=1 \mid A = 0] + \beta \mathbb{P}[Y=1 \mid A=1]}
    % \end{align*}
\end{lemma}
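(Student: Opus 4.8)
The plan is to derive Equation \eqref{equation:mcnoise2} from Equation \eqref{equation:mcnoise1} by conditioning both sides on $Y = 1$. Recall that $D_{1,\cdot} = D_{\mid A = 1}$ and $D_{1,1} = D_{\mid A=1, Y=1}$, so $D_{1,1}$ is obtained from $D_{1,\cdot}$ by conditioning on $Y=1$; similarly $D_{0,1}$ from $D_{0,\cdot}$. The key observation is that conditioning a mixture distribution on an event is again a mixture of the conditioned components, but with \emph{reweighted} mixing coefficients — the reweighting being proportional to the probability each component assigns to that event. So I would start from $D_{1,\cdot,\mathrm{corr}} = (1-\alpha)\cdot D_{1,\cdot} + \alpha\cdot D_{0,\cdot}$, condition on $Y=1$, and track how the weights $(1-\alpha)$ and $\alpha$ transform.

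Concretely, first I would set up notation for the label base rates within each clean group: let $p_1 := \mathbb{P}_{D_{1,\cdot}}[Y=1]$ and $p_0 := \mathbb{P}_{D_{0,\cdot}}[Y=1]$. Then for the corrupted group $A=1$, the probability of $Y=1$ is $(1-\alpha)p_1 + \alpha p_0$. Applying Bayes' rule to the mixture, $D_{1,1,\mathrm{corr}} = D_{1,\cdot,\mathrm{corr} \mid Y=1}$ is the mixture of $D_{1,\cdot \mid Y=1} = D_{1,1}$ and $D_{0,\cdot \mid Y=1} = D_{0,1}$ with weights proportional to $(1-\alpha)p_1$ and $\alpha p_0$ respectively. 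Normalizing, this gives
\begin{equation*}
    D_{1,1,\mathrm{corr}} = \frac{(1-\alpha)p_1}{(1-\alpha)p_1 + \alpha p_0}\cdot D_{1,1} + \frac{\alpha p_0}{(1-\alpha)p_1 + \alpha p_0}\cdot D_{0,1},
\end{equation*}
so one reads off $\alpha' = \frac{\alpha p_0}{(1-\alpha)p_1 + \alpha p_0}$. An entirely symmetric computation on the $A=0$ equation, using that $D_{0,\cdot,\mathrm{corr}}$ assigns probability $\beta p_1 + (1-\beta)p_0$ to $Y=1$, yields $\beta' = \frac{\beta p_1}{\beta p_1 + (1-\beta)p_0}$. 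I would then note that $\alpha', \beta' \in (0,1)$ since $\alpha,\beta \in (0,1)$ and (implicitly) $p_0, p_1 > 0$, which is the only genuine side condition needed — if a clean group never has $Y=1$ the EO constraint is vacuous for it anyway, so this is a harmless assumption to state.

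The step I expect to require the most care is not the algebra but making the conditioning argument rigorous at the level of distributions rather than just densities: one should phrase ``condition the mixture on $Y=1$'' as a statement about measures, i.e. for any event $E$, $\mathbb{P}_{D_{1,\cdot,\mathrm{corr}}}[E \mid Y=1] = \mathbb{P}_{D_{1,\cdot,\mathrm{corr}}}[E, Y=1]/\mathbb{P}_{D_{1,\cdot,\mathrm{corr}}}[Y=1]$, expand the numerator via linearity of the mixture, and recognize each term as an unnormalized conditional of a clean component. A minor subtlety worth a remark: the lemma as stated only claims existence of $\alpha',\beta'$, so I need not dwell on whether $\alpha' + \beta' < 1$ — but it is worth checking, since the downstream reduction (Theorem~\ref{thm: sensitive noise reduction}) presumably wants that, and in fact it does hold whenever $\alpha+\beta<1$, which I would verify as a short addendum. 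Everything else is bookkeeping.
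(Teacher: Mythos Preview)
Your proposal is correct and is essentially the same argument as the paper's: both condition the mixture in Equation~\eqref{equation:mcnoise1} on $Y=1$ at the level of events, expand via Bayes' rule, and read off $\alpha' = \frac{\alpha\,\mathbb{P}[Y=1\mid A=0]}{(1-\alpha)\mathbb{P}[Y=1\mid A=1] + \alpha\,\mathbb{P}[Y=1\mid A=0]}$ (your $p_0,p_1$ are exactly the paper's $\mathbb{P}[Y=1\mid A=0],\mathbb{P}[Y=1\mid A=1]$). Your remarks about the positivity side condition and the optional check that $\alpha'+\beta'<1$ go slightly beyond what the paper records, but the core derivation is identical.
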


% In the proof of Lemma~\ref{lemma:mcrelation}, we see
% that $\alpha=\alpha'$ when $Y \independent A$, i.e., $\mathbb{P}[Y=1|A=0]=\mathbb{P}[Y=1|A=1]$.
% However, in practice this will rarely be the case as it implies there is no intrinsic unfairness.
% Furthermore,
Although the lemma %shows that \eqref{equation:mcnoise1} implies \eqref{equation:mcnoise2} and
gives a way to calculate $\alpha', \beta'$ from $\alpha, \beta$, in practice it may be useful to consider \eqref{equation:mcnoise2} independently.
Indeed, when one is interested in the EO constraints we will show below that only knowledge of $\alpha', \beta'$ is required.
%Rather than estimating all of $\alpha, \beta, \mathbb{P}[Y=1 | A=1],$ and $\mathbb{P}[Y=1 | A=0]$
It is often much easier to estimate $\alpha', \beta'$ directly
(which can be done in the same way as estimating $\alpha, \beta$ simply by considering $D_{\cdot, 1, \mathrm{corr}}$ rather than $D_{\mathrm{corr}}$).

\subsection{Fairness constraints under MC learning}

We now show that the previously introduced fairness constraints for demographic parity and equality of opportunity are automatically robust to MC learning noise in $A$.

\begin{theorem}\label{thm: sensitive noise reduction}
    Assume that we have noise as per Equation \eqref{equation:mcnoise1}.
    Then, for any $\tau > 0$ and $f \colon X \to \mathbb{R}$,
    \begin{align*}
    %  &\abs{\Lf_{D_{0,\cdot}}(f)-\Lf_{D_{1,\cdot}}(f)} \leq \tau \iff \\ &\abs{\Lf_{D_{0,\cdot, \text{corr}}}(f)-\Lf_{D_{1,\cdot, \text{corr}}}(f)} \leq \tau(1-\alpha - \beta).
     \Lambda^{\mathrm{DP}}_{D}( f ) \leq \tau &\iff \Lambda^{\mathrm{DP}}_{D_{\mathrm{corr}}}(f) \leq \tau \cdot (1-\alpha - \beta) \\
    %\end{align*}
    %Additionally, for the EO-like constraints,
    %\begin{align*}
        % &\abs{\Lf_{D_{0,1}}(f)-\Lf_{D_{1,1}}(f)} \leq \tau \iff \\
        % &\abs{\Lf_{D_{0,1, \text{corr}}}(f)-\Lf_{D_{1,1, \text{corr}}}(f)} \leq \tau(1-\alpha' - \beta'),        
         \Lambda^{\mathrm{EO}}_{D_{\cdot, 1}}( f ) \leq \tau &\iff \Lambda^{\mathrm{EO}}_{D_{\mathrm{corr}, \cdot, 1}}(f) \leq \tau \cdot (1-\alpha' - \beta'),
    \end{align*}
    where $\alpha'$ and $\beta'$ are as per Equation~\eqref{equation:mcnoise2} and Lemma~\ref{lemma:mcrelation}.
\end{theorem}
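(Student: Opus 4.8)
The plan is to exploit the single structural fact that the fairness loss is an \emph{affine} functional of the distribution: since $\Lf_D(f) = \Ex_{(X,A,Y)\sim D}[\ellf(f(X),Y)]$, linearity of expectation gives that for any mixture $D' = \lambda D_1 + (1-\lambda)D_2$ one has $\Lf_{D'}(f) = \lambda\,\Lf_{D_1}(f) + (1-\lambda)\,\Lf_{D_2}(f)$ (and similarly for $\Lc$). Everything reduces to plugging the MC mixture equations into this identity.

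For the DP constraint, I apply the observation to the two lines of Equation~\eqref{equation:mcnoise1} to write $\Lf_{D_{1,\cdot,\mathrm{corr}}}(f) = (1-\alpha)\,\Lf_{D_{1,\cdot}}(f) + \alpha\,\Lf_{D_{0,\cdot}}(f)$ and $\Lf_{D_{0,\cdot,\mathrm{corr}}}(f) = \beta\,\Lf_{D_{1,\cdot}}(f) + (1-\beta)\,\Lf_{D_{0,\cdot}}(f)$. Subtracting and collecting terms, the coefficient of each of $\Lf_{D_{0,\cdot}}(f)$ and $\Lf_{D_{1,\cdot}}(f)$ becomes $\pm(1-\alpha-\beta)$, so that $\Lf_{D_{0,\cdot,\mathrm{corr}}}(f) - \Lf_{D_{1,\cdot,\mathrm{corr}}}(f) = (1-\alpha-\beta)\bigl(\Lf_{D_{0,\cdot}}(f) - \Lf_{D_{1,\cdot}}(f)\bigr)$. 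Taking absolute values and using $\alpha+\beta<1$ to pull the strictly positive scalar out yields the clean identity $\Lambda^{\mathrm{DP}}_{D_{\mathrm{corr}}}(f) = (1-\alpha-\beta)\,\Lambda^{\mathrm{DP}}_{D}(f)$. Dividing through by $1-\alpha-\beta>0$ gives the stated equivalence for every $\tau>0$ (in fact for every $\tau\geq 0$).

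For the EO constraint, I first invoke Lemma~\ref{lemma:mcrelation} to obtain constants $\alpha',\beta'$ for which the $Y=1$ conditionals of $D_{\mathrm{corr}}$ satisfy exactly the MC relation of Equation~\eqref{equation:mcnoise2}, i.e. $D_{\mathrm{corr},1,1} = (1-\alpha')\,D_{1,1} + \alpha'\,D_{0,1}$ and $D_{\mathrm{corr},0,1} = \beta'\,D_{1,1} + (1-\beta')\,D_{0,1}$. The DP argument then applies verbatim with $(D_{1,\cdot},D_{0,\cdot},\alpha,\beta)$ replaced by $(D_{1,1},D_{0,1},\alpha',\beta')$, giving $\Lambda^{\mathrm{EO}}_{D_{\mathrm{corr},\cdot,1}}(f) = (1-\alpha'-\beta')\,\Lambda^{\mathrm{EO}}_{D_{\cdot,1}}(f)$ and hence the second equivalence.

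The computation is routine; the only points requiring care are (i) verifying that the coefficients genuinely collapse to $\pm(1-\alpha-\beta)$ after subtraction, and (ii) ensuring that the scalar pulled out of the absolute value is strictly positive. The latter is exactly where the assumption $\alpha+\beta<1$ (and, in the EO case, the corresponding $\alpha'+\beta'<1$ that must come out of Lemma~\ref{lemma:mcrelation}) is essential: if the noise parameters summed to $1$ the corrupted constraint would be identically zero and no equivalence could hold. I expect confirming the positivity $1-\alpha'-\beta'>0$ through the lemma to be the only step that is not purely mechanical.
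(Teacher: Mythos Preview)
Your proposal is correct and follows essentially the same approach as the paper: apply linearity of expectation to the MC mixture equations~\eqref{equation:mcnoise1} (respectively~\eqref{equation:mcnoise2}) to express each corrupted $\Lf$ as the corresponding convex combination, subtract, and observe that the difference scales by exactly $(1-\alpha-\beta)$ (respectively $(1-\alpha'-\beta')$). Your additional care in flagging that $1-\alpha'-\beta'>0$ must be inherited from Lemma~\ref{lemma:mcrelation} is a point the paper leaves implicit.
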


The above can be seen as a consequence of the immunity of the \emph{balanced error}~\citep{Chan:1998,Brodersen:2010,consistency} to corruption under the MC model.
Specifically, consider a distribution $D$ over an input space $\mathscr{Z}$ and label space $\mathscr{W} = \set{0, 1}$.
Define
$$ B_{D} := \mathbb{E}_{Z \mid W = 0}[ h_0( Z ) ] + \mathbb{E}_{Z \mid W = 1}[ h_1( Z ) ] $$
for functions $h_0, h_1 \colon \mathscr{Z} \to \R$.
% This is a generalised form of balanced error, being the average of false positive and negative rates of a classifier.
Then, if for every $z \in \mathbb{R}$
$h_0( z ) + h_1( z ) = 0$, %is a constant,
we have~\citep[Theorem 4.16]{vanRooyen:2015}, \citep{Blum:1998,Zhang:2008,corruption}
\begin{equation}
    \label{eqn:ber-identity}
     B_{D_{\mathrm{corr}}} = (1 - \alpha - \beta) \cdot B_{D},%+ \frac{\alpha + \beta}{2},    
\end{equation}
where $D_{\mathrm{corr}}$ refers to a corrupted version of $D$ under MC learning with noise parameters $\alpha, \beta$.
%In words, 
{That is,} the effect of MC noise on $B_D$ is simply to perform a scaling.
Observe that
$B_D = \Lf_D( f )$
if
we set
%$\mathscr{Z} = \mathscr{X} \times \set{0, 1}$,
$Z$ to $X \times Y$,
$W$ to the sensitive feature $A$,
and
% \begin{align*}
%     h_0( (x, y) ) &= +\ellf( y, f( x ) ) \\
%     h_1( (x, y) ) &= -\ellf( y, f( x ) ).
% \end{align*}
$h_0( (x, y) ) = +\ellf( y, f( x ) )$,
$h_1( (x, y) ) = -\ellf( y, f( x ) )$.
Thus, (\ref{eqn:ber-identity}) implies $\Lf_D( f ) = (1 - \alpha - \beta) \cdot \Lf_{D_{\mathrm{corr}}}( f )$,
and thus Theorem~\ref{thm: sensitive noise reduction}.

\subsection{Algorithmic implications}

Theorem~\ref{thm: sensitive noise reduction} has an important algorithmic implication.
Suppose we pick a fairness constraint $\Lambda_D$ and seek to solve
Equation~\ref{eqn:fairness-objective} for a given tolerance $\tau \geq 0$.
Then,
given samples from $D_{\mathrm{corr}}$, it suffices to simply change the tolerance to $\tau' = \tau \cdot (1 - \alpha - \beta)$.

Unsurprisingly, $\tau'$ depends on the noise parameters $\alpha, \beta$.
In practice, these will be unknown;
however, there have been several algorithms proposed to estimate these from noisy data alone~\citep{pmlr-v30-Scott13,corruption,Liu2016ClassificationWN,Ramaswamy:2016,northcutt2017rankpruning}.
Thus, we may use these to construct estimates of $\alpha, \beta$, and plug these in to construct an estimate of $\tau'$.

In sum, we may tackle fair classification in the presence of noisy $A$ by suitably combining \emph{any} existing fair classification method (that takes in a parameter $\tau$ that is proportional to mean-difference score of some fairness measures), and \emph{any} existing noise estimation procedure.
This is summarised in Algorithm~\ref{alg:reduction}.
Here, {\sf FairAlg} is any existing fairness-aware classification method that solves Equation~\ref{eqn:fairness-objective}, and {\sf NoiseEst} is any noise estimation method that estimates $\alpha, \beta$.

\renewcommand{\algorithmicrequire}{\textbf{Input:}}
\renewcommand{\algorithmicensure}{\textbf{Output:}}

\begin{algorithm}
	\caption{Reduction-based algorithm for fair classification given noisy $A$.}
	\label{alg:reduction}
	\begin{algorithmic}[1]
	    \REQUIRE Training set $S = \{ (x_i, y_i, a_i) \}_{i=1}^n$,
	    scorer class $\cF$,
	    fairness tolerance $\tau \geq 0$,
	    fairness constraint $\Lambda( \cdot )$,
	    fair classification algorithm {\sf FairAlg},
	    noise estimation algorithm {\sf NoiseEst}
	    \ENSURE  Fair classifier $f^* \in \cF$
	    \STATE   $\hat{\alpha}, \hat{\beta} \leftarrow {\sf NoiseEst}( S )$
	    \STATE   $\tau' \leftarrow ( 1 - \hat{\alpha} - \hat{\beta} ) \cdot \tau$
	    \STATE  \textbf{return} ${\sf FairAlg}( S, \cF, \Lambda, \tau' )$
	\end{algorithmic}
\end{algorithm}

\FINALCHANGE{}{\subsection{Noise rate and sample complexity}

So far, we have shown that at a distribution level, fairness with respect to the noisy sensitive attribute is equivalent to fairness with respect to the real sensitive attribute. 
However, from a sample complexity perspective, a higher noise rate will require a larger number of samples for the empirical fairness to generalize well, i.e., guarantee fairness at a distribution level. 
A concurrent work by~\citet{fairlearning_withprivatedata} derives precise sample complexity bounds and makes this relationship explicit.
}

\subsection{Connection to \FINALCHANGE{differential privacy}{privacy and fairness}}

While Algorithm \ref{alg:reduction} gives a way of achieving fair classification on an already noisy dataset such as the {use case} described in example (2) of \S\ref{sec:intro}, it can also be used to simultaneously achieve fairness and privacy.
As described in example (1) of \S\ref{sec:intro}, the very nature of the sensitive attribute makes it likely that even if noiseless sensitive attributes are available, one might want to add noise to guarantee some form of privacy.
Note that {simply} removing the feature does not suffice, because it would make difficult the task of
% \AKMEDIT{make difficult the task of}
developing fairness-aware classifiers for the dataset
% \AKMEDIT{~\citep{Gupta:2018}}.
~\citep{Gupta:2018}.
Formally, we can give the following privacy guarantee by adding CCN noise to the sensitive attribute. 

\begin{lemma}\label{thm: randomized response for differential privacy}
    To achieve $(\epsilon, \delta=0)$ differential privacy
    on the sensitive attribute we can add CCN noise with $\rho^+ = \rho^- = \rho \geq \frac{1}{\exp{(\epsilon)}+1}$ to the sensitive attribute.
\end{lemma}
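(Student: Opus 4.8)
The plan is to recognize that adding symmetric class-conditional noise to the sensitive attribute $A$ is exactly the classical \emph{randomized response} mechanism, and then to invoke (or re-derive) the standard differential privacy guarantee for randomized response. Concretely, flipping $A$ with probability $\rho$ means that the released value $\tilde A$ satisfies $\mathbb{P}[\tilde A = a \mid A = a] = 1-\rho$ and $\mathbb{P}[\tilde A = a \mid A = 1-a] = \rho$ for each $a \in \{0,1\}$. So the mechanism $M$ mapping the true attribute to the noisy one is a randomized map between singleton datasets, and differential privacy here is the statement that for neighbouring inputs (which, for a single individual's attribute, means the two possible values $0$ and $1$) the output distributions are within a multiplicative $e^\epsilon$ factor.

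The key steps, in order, are: (i) write down the likelihood ratio $\frac{\mathbb{P}[M(a) = v]}{\mathbb{P}[M(a') = v]}$ for $a \neq a'$ and $v \in \{0,1\}$; since $\rho^+ = \rho^- = \rho$, this ratio is either $\frac{1-\rho}{\rho}$ or $\frac{\rho}{1-\rho}$; (ii) observe that the worst case (largest) ratio is $\frac{1-\rho}{\rho}$ whenever $\rho \le 1/2$; (iii) require $\frac{1-\rho}{\rho} \le e^\epsilon$, which rearranges to $\rho \ge \frac{1}{e^\epsilon + 1}$; (iv) note that $\delta = 0$ because the bound holds pointwise for every output, with no exceptional event; (v) conclude that any $\rho$ in the stated range yields $(\epsilon, 0)$-differential privacy on the sensitive attribute, and remark that applying an independent copy of this mechanism to each individual's attribute preserves the per-individual guarantee (parallel composition over disjoint records, each record being one person's attribute).

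I expect the only subtlety — not really an obstacle — is bookkeeping the direction of the inequality and the case $\rho \le \tfrac12$: one should check that $\rho \ge \frac{1}{e^\epsilon+1}$ does not exceed $\tfrac12$, which holds since $\epsilon \ge 0$ gives $\frac{1}{e^\epsilon+1} \le \tfrac12$, so the feasible interval $[\frac{1}{e^\epsilon+1}, \tfrac12]$ is nonempty and the worst-case ratio is indeed $\frac{1-\rho}{\rho}$ throughout it. (If one also wanted $\rho$ up to $1$ to count, the ratio $\frac{\rho}{1-\rho}$ would then dominate and one would instead need $\rho \le \frac{e^\epsilon}{e^\epsilon+1}$; but the lemma only claims a lower bound on $\rho$, matching the $\rho \le \tfrac12$ regime relevant for MC learning with $\alpha+\beta<1$.) Everything else is a one-line computation, so the proof is essentially: identify the mechanism as randomized response, compute the privacy loss, and solve for $\rho$.
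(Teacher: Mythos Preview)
Your proposal is correct and takes essentially the same approach as the paper: both recognize the mechanism as randomized response, compute the output likelihood ratio $\frac{1-\rho}{\rho}$ (noting the other direction is handled by symmetry / the assumption $\rho<\tfrac12$), bound it by $e^\epsilon$, and solve to obtain $\rho \ge \frac{1}{e^\epsilon+1}$. Your write-up is slightly more thorough in spelling out the $\delta=0$ observation and the $\rho\le\tfrac12$ bookkeeping, but the argument is the same.
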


Thus, if a desired level of differential privacy is required before releasing a dataset, one could simply add the required amount of CCN noise to the sensitive attributes, publish this modified dataset as well as the noise level, and researchers could use Algorithm \ref{alg:reduction} (without even needing to estimate the noise rate) to do fair classification as usual. 

\FINALCHANGE{Recently,~\citet{Kearns18} explored preserving differential privacy~\citep{Dwork06} while %at the same time of preserving 
{maintaining} fairness constraints.}{There is previous work that tries to preserve privacy of individuals' sensitive attributes while learning a fair classifier.
\cite{pmlr-v80-kilbertus18a} employs the cryptographic tool of secure multiparty computation (MPC) to try to achieve this goal. 
However, as noted by~\citet{Kearns18}, the individual information that MPC tries to protect can still be inferred from the learned model.
Further, the method of \citet{pmlr-v80-kilbertus18a} is limited to using demographic parity as the fairness criteria. 
}

%and achieving good accuracy.
A more recent work of~\citet{Kearns18} explored preserving differential privacy~\citep{Dwork06} while 
%at the same time of preserving 
{maintaining} 
fairness constraints. 
The authors proposed two methods:
one adds Laplace noise to training data and apply the post-processing method in~\citet{Hardt2016},
while \FINALCHANGE{another}{the other} modifies the method in~\citet{reduction} using the exponential mechanism as well as Laplace noise.
Our work differs from them in three major ways:
\begin{enumerate}[itemsep=0pt,topsep=0pt,label=(\emph{\arabic*})]
    % \item \AKMEDIT{our work can leverage \emph{any} algorithm for achieving fairness, while~\citet{Kearns18} is tied to specific algorithms};
    % %rather than modifying specific algorithms, our work can be used before applying a large group of fair algorithms;
    
    \item \FINALCHANGE{our work allows for fair classification to be done using a %vast
{\emph{any}} fairness-aware classifier, whereas the method of \citet{Kearns18} %forces
{requires the use of} a particular classifier.}{our work allows for fair classification to be done using %vast
{\emph{any}} \emph{in-process} fairness-aware classifier that allows user to specify desired fairness level. On the other hand, the first method of \citet{Kearns18} %forces
requires the use of a post-processing algorithm 
%(in-processing algorithms generally result in better trade-offs than post-processing as shown in \citet{reduction}), 
(which generally have worse trade-offs than in-processing algorithms~\citet{reduction}), 
while the second method requires the use of a single \emph{particular} classifier.}
    
    % \item although our framework can also be used for preserving differential privacy, we mainly focus on the case of data corruption  under the framework of MC learning rather than privacy;
    
    \item our focus is on designing fair-classifiers with noise-corrupted sensitive attributes; {by contrast, the main concern in~\citet{Kearns18} is achieving differential privacy}\FINALCHANGE{}{and thus they do not discuss how to deal with noise that is already present in the dataset}.
    
    \item \FINALCHANGE{we deal with not only equalized odds, but also demographic parity.}{our method is shown to work for a large class of different fairness definitions.}
\end{enumerate}

\FINALCHANGE{}{Finally, a concurrent work of~\citet{fairlearning_withprivatedata} 
builds upon our method
for the problem of preserving privacy of the sensitive attribute.
The authors use a randomized response procedure on the sensitive attribute values, followed by a two-step procedure to train a fair classifier using the processed data. 
Theoretically, their method improves upon the sample complexity of our method and extends our privacy result to the case of non-binary groups. 
However, their method solely focuses on preserving privacy rather than the general problem of sensitive attribute noise.}
%Furthermore, no empirical result has been shown for the effectiveness of their proposed method.}

% Note that while other methods, such as those described in \citet{Kearns18}, 
% may be better suited if differential privacy is the main concern, our method differs in two important ways. First, it allows for fair classification to be done using a %vast
% {broader} class of fairness-aware classifiers, whereas the method of \citet{Kearns18} %forces
% {restricts} the user to use a particular classifier.
% Second, {unlike~\citet{Kearns18}, our focus is on designing fair-classifiers with noise corrupted sensitive attributes.}

%it is important to keep in mind that the method of \citet{Kearns18} is specifically geared towards this problem, while our method can also be used to achieve fair classification using naturally noisy data.

%
%\section{Noise in the sensitive attribute and label (Incomplete)}
%\label{section:bothnoise}
%\input{noise_on_ay.tex}

%
\section{Experiments}
\label{sec:experiments}
%auto-ignore
%We present experimental results validating the preceding theory.
%Specifically,
We demonstrate that it is viable to learn fair classifiers given noisy sensitive features.\footnote{\FINALCHANGE{}{Source code is available at \url{https://github.com/AIasd/noise\_fairlearn}.}}
%We present two case studies in which we show that using our method, we can preserve fairness even when data is corrupted.
As our underlying fairness-aware classifier,
we use a modified version of the classifier implemented in \citet{reduction} with the DDP and DEO constraints which, as discussed in \S\ref{section:fairnessdef}, are special cases 
of our more general constraints \eqref{eqn:ddp} and \eqref{eqn:deo}. The classifier's original constraints can also be shown to be noise-invariant but in a slightly different way (see Appendix \ref{appendix:agarwal_constraint} for a discussion). An advantage of this classifier is that it is shown to reach levels of fairness violation that are very close to the desired level ($\tau$), i.e., for small enough values of $\tau$ it will reach the constraint boundary.

While we had to choose a particular classifier, our method can be used before using any downstream fair classifier as long as it can take in a parameter $\tau$ that controls the strictness of the fairness constraint and that its constraints are special cases of our very general constraints \eqref{eqn:ddp} and \eqref{eqn:deo}.

\subsection{Noise setting}
\label{ssec:CCN_and_MC}
% While our results apply for any instantiation of the MC learning noise framework, in practice this general kind of noise is rarely observed.
% Instead, 
Our case studies focus on two common special cases of MC learning: CCN and PU learning.
Under CCN noise the sensitive feature's value is randomly flipped with probability $\rho^+$ if its value was 1, or with probability $\rho^-$ if its value was 0. As shown in \citet[Appendix C]{corruption}, CCN noise is a special case of MC learning. 
% with:
% \begin{align*}
%     \pi_{a,\mathrm{corr}} &= (1-\rho^+) \cdot \pi_{a}+\rho^- \cdot (1-\pi_{a})\\
%     \alpha &= \pi_{a,\text{corr}}^{-1} \cdot (1-\pi_{a}) \cdot \rho^-\\
%     \beta &= (1-\pi_{a,\text{corr}})^{-1} \cdot \pi_{a} \cdot \rho^+
% \end{align*}
For PU learning we consider the censoring setting~\citep{Elkan} which is a special case of CCN learning where one of $\rho^+$ and $\rho^-$ is 0.
While our results also apply to the case-controlled setting of PU learning~\citep{Ward:2009}, the former setting is more natural in our context. Note that from $\rho^+$ and $\rho^-$ one can obtain $\alpha$ and $\beta$ as described in~\citet{corruption}.
%Consequently, in both case studies we only need to estimate $\rho^+$ and $\rho^-$ (as well as $\pi_{a, \text{corr}}$, which is trivial) to be able to calculate the scaling coefficient $1-\alpha-\beta$. Since noise parameter estimation is not the aim of our paper, we assume that
%the values of $\rho^+$ and $\rho^-$ are known.

\subsection{Benchmarks}

For each case study, we evaluate our method
(termed {\sf cor scale});
recall this scales the input parameter $\tau$ using Theorem \ref{thm: sensitive noise reduction} and the values of $\rho^+$ and $\rho^-$, and then uses the fair classifier to perform classification.
We compare our method with three  different baselines.
The first two trivial baselines are applying the fair classifier directly on the non-corrupted data (termed {\sf nocor}) and on the corrupted data (termed {\sf cor}). While the first baseline is clearly the ideal, it won't be possible when only the corrupted data is available. The second baseline should show that there is indeed an empirical need to deal with the noise in some way and that it cannot simply be ignored.

% AKM: we do better than denoise
The third, non-trivial, baseline (termed {\sf denoise}) is to first denoise $A$ and then apply the fair classifier on the denoised distribution. This denoising is done by applying the {\sf RankPrune} method in \cite{northcutt2017rankpruning}. Note that we provide {\sf RankPrune} with the same known values of $\rho^+$ and $\rho^-$ that we use to apply our scaling so this is a fair comparison to our method.
%\aditya{Reword ``privacy''.}
{Compared to {\sf denoise}, we do \emph{not} explicitly infer individual sensitive feature values;
thus, our method does not compromise privacy.}

For both case studies, we study the relationship between the input parameter $\tau$ and the testing error and fairness violation. %for all three benchmarks and our method.
For simplicity, we only consider the DP constraint.

\subsection{Case study: privacy preservation}\label{casestudy: privacy}
%auto-ignore

\begin{figure*}[t]
    \centering
    
    \subfloat[{\tt COMPAS} dataset (privacy case study).]{%
    \includegraphics[width=0.4\textwidth]{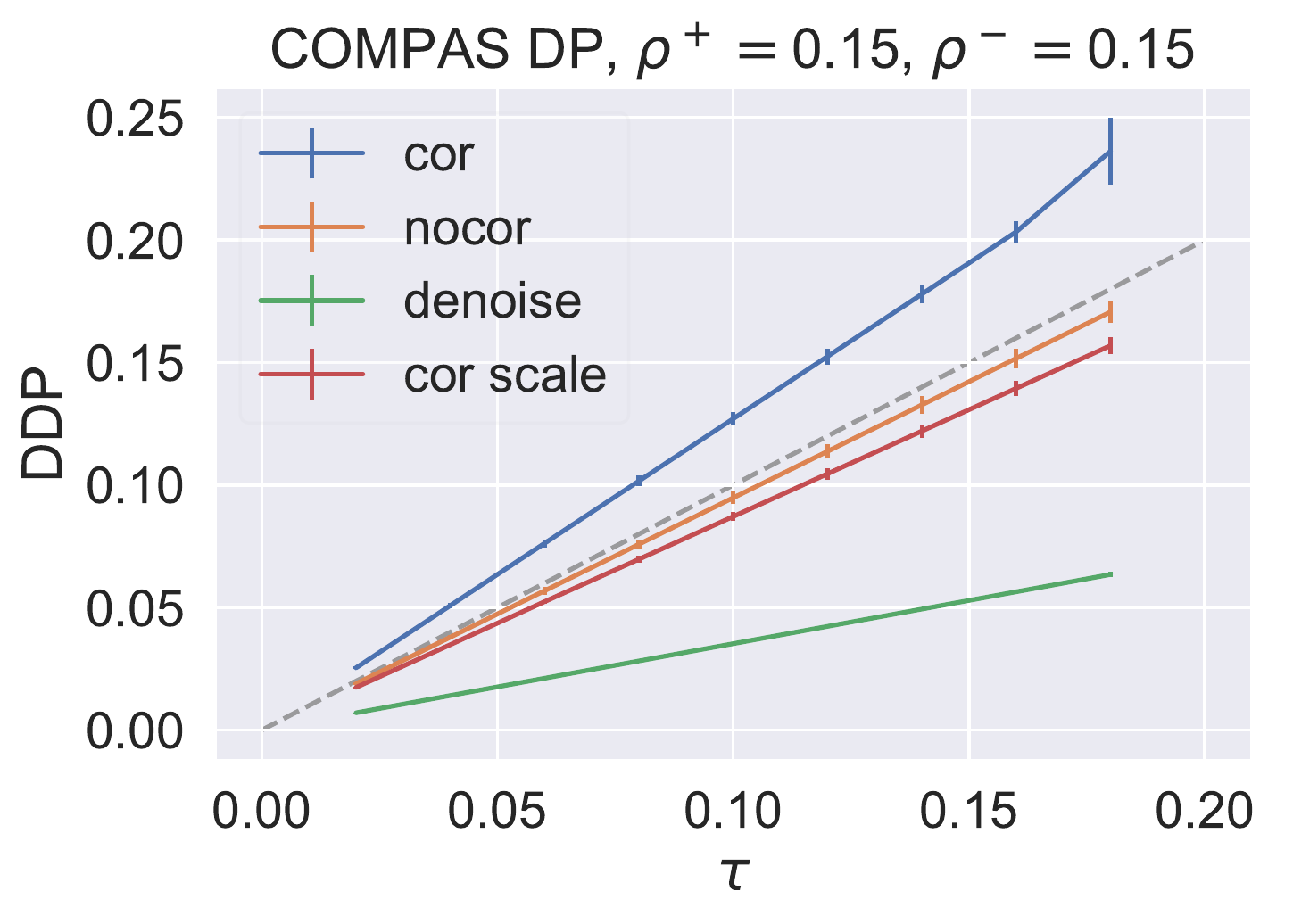}
    \includegraphics[width=0.4\textwidth]{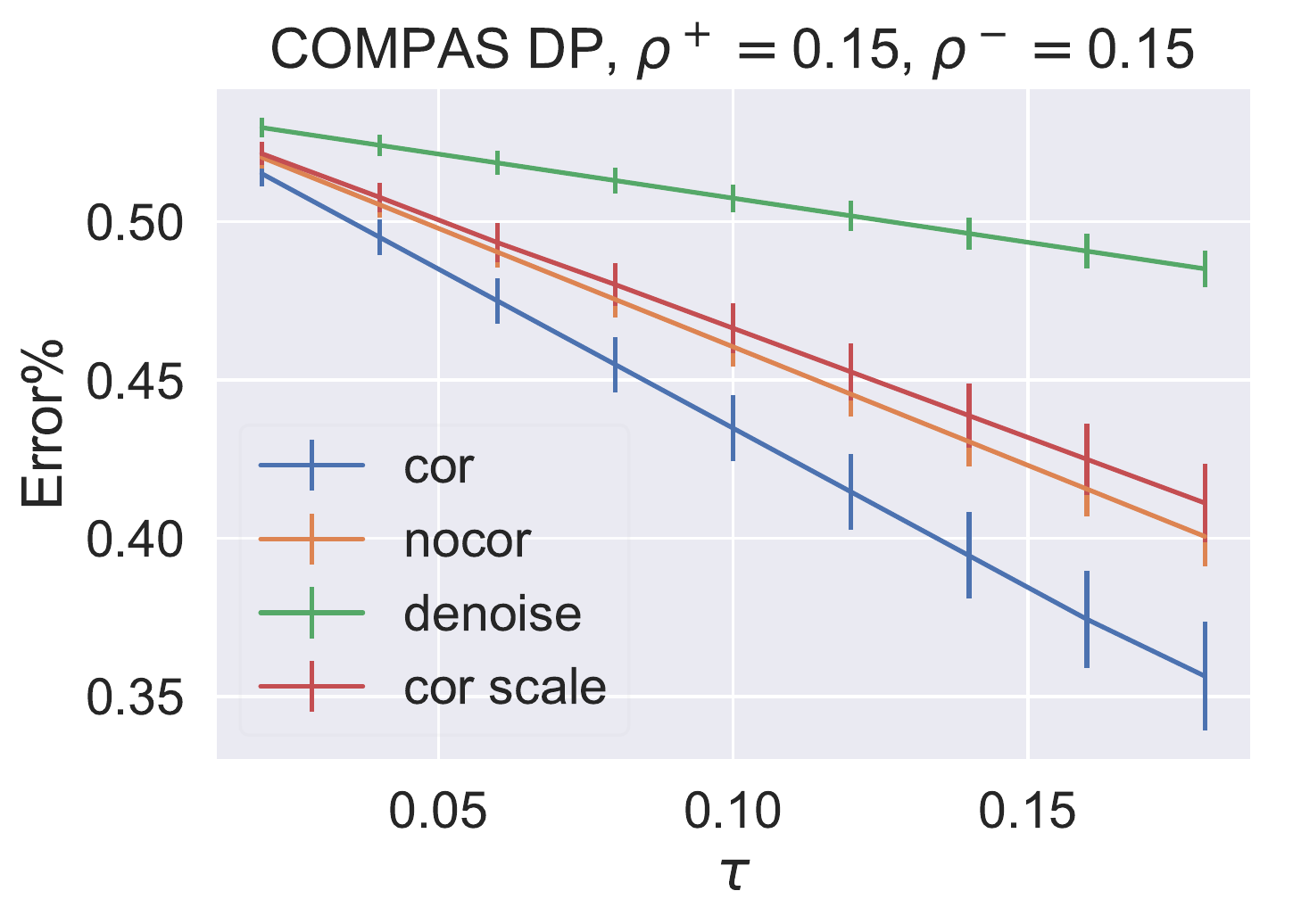}
    \label{fig:DP_compas}
    }
    
    \subfloat[{\tt law school} dataset (PU learning case study).]{%
    \includegraphics[width=0.4\textwidth]{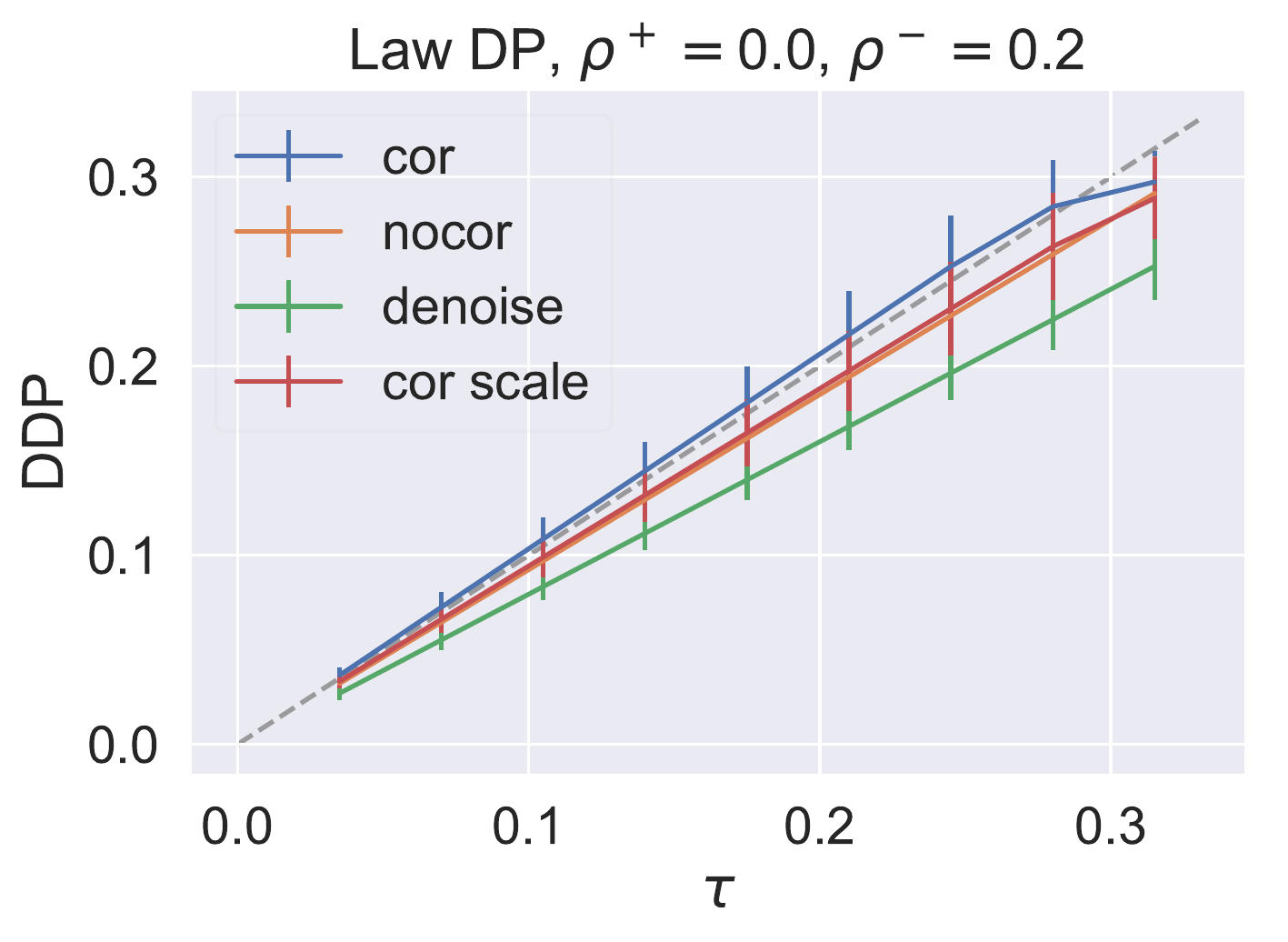} 
    \includegraphics[width=0.4\textwidth]{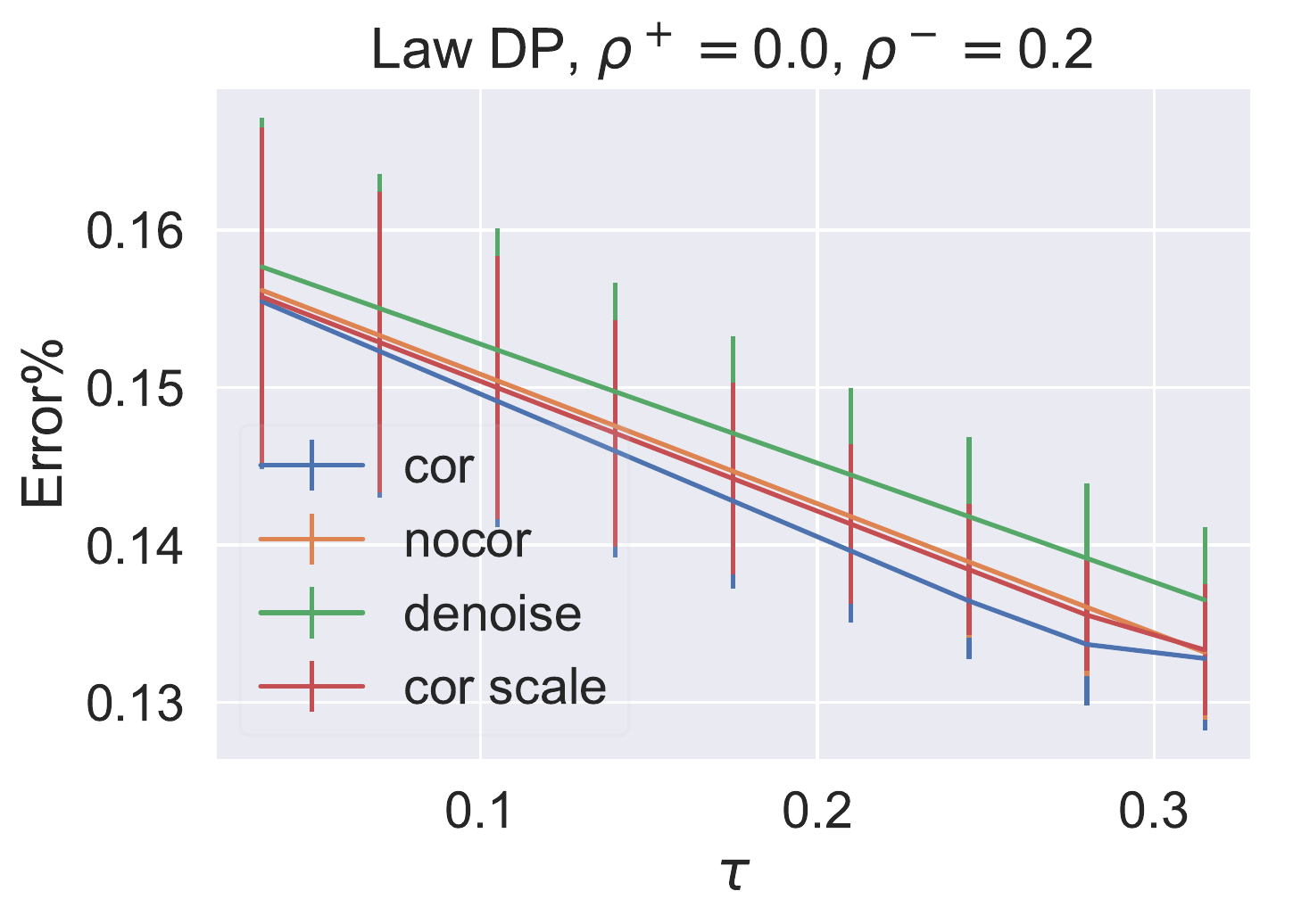}
    \label{fig:law}
    }
    
    \caption{Relationship between input fairness tolerance $\tau$ versus DP fairness violation (left panels), and versus error (right panels).
    % The %black dotted line 
    % {gray dashed line} represents the ideal fairness violation.
    {Our method ({\sf cor scale}) achieves approximately the ideal fairness violation (indicated by the gray dashed line in the left panels),
    with only a mild degradation in accuracy compared to training on the uncorrupted data (indicated by the {\sf nocor} method).
    Baselines that perform no noise-correction ({\sf cor})
    and explicitly denoise the data ({\sf denoise})
    offer suboptimal tradeoffs by comparison;
    for example, the former achieves slightly lower error rates, but does so at the expense of greater fairness violation.}}
\end{figure*}

In this case study, we look at {\tt COMPAS}, a dataset from Propublica~\citep{COMPAS} that is widely used in the study of fair algorithms.
Given various features about convicted individuals, the task is to predict recidivism and the sensitive attribute is race.
The data comprises 7918 examples and 10 features.
In our experiment, we assume that to preserve differential privacy, 
CCN noise with 
%$\rho^+=\rho^- \in \set{0.15, 0.3}$ 
{$\rho^+=\rho^- = {0.15}$}
is added to the sensitive attribute. 
As per {Lemma} \ref{thm: randomized response for differential privacy}, this guarantees $(\epsilon, \delta=0)$ differential privacy with 
%$\epsilon \in \set{1.73, 0.85}$ respectively. 
{$\epsilon = {1.73}$.}
We assume that the noise level $\rho$ is released with the dataset (and is thus known). 
We performed fair classification on this noisy data using our method and compare the results to the three benchmarks described above.

Figure \ref{fig:DP_compas} shows the average result over three runs each with a random 80-20 training-testing split.
(Note that fairness violations and errors are calculated with respect to the true uncorrupted features.)
{We draw two key insights from this graph:}
\begin{enumerate}[label=(\roman*),itemsep=-2pt,topsep=0pt,leftmargin=24pt]
    \item {in terms of fairness violation,
    our method ({\sf cor scale}) approximately achieves the desired fairness tolerance (shown by the {gray dashed} line).}
    This is both expected and ideal, and it matches what happens when there is no noise ({\sf nocor}).
    By contrast, the na\"{i}ve method {\sf cor} strongly violates the fairness constraint.

    \item {in terms of accuracy, 
    our method only suffers mildly
    compared with the ideal noiseless method ({\sf nocor});
some degradation} is expected as noise will lead to some loss of information.
By contrast, {\sf denoise} sacrifices much more {predictive} accuracy than our method.

    % \item our method ({\sf cor scale}) achieves the best accuracy while respecting the desired fairness tolerance.
    % This is evidenced by
    % the red line being below the gray dashed line (Figure~\ref{fig:DP_compas}, left panel).

%     \item \AKMEDIT{the other baselines fare worse than our method:}
% indeed, as expected, the na\"{i}ve method {\sf cor} violates the fairness constraint.
% While {\sf denoise} achieves the desired fairness level, it has large fluctuations and sacrifices much more {predictive} accuracy than our method.
% \AKMEDIT{Finally, while the {\sf cor} method achieves slightly lower error rates, it does so at the expense of greater fairness violation.}
\end{enumerate}
In light of both the above, our method is seen to achieve the best overall tradeoff between fairness and accuracy.
{Experimental results with EO constraints,
and other commonly studied datasets in the fairness literature ({\tt adult}, {\tt german}), 
show similar trends as}
%We also tried using the EO constraint as well as other datasets. The results obtained showed the same trends that are observable 
in Figure \ref{fig:DP_compas}, and are included in Appendix \ref{appendix:privacy} for completeness.

% The sensitive attribute race is hard to be learned when it is regarded as the target label, even when no noise is present, \cite{northcutt2017rankpruning} with SVM with rbf kernel as the base classifier can only get $65\%$ accuracy when no noise is present, which is even worse than using the corrupted labels. It can also get $65$ when $\rho_a^+=\rho_a^-=0.2$. When $\rho_a^+=\rho_a^-=0.4$, its accuracy decreases to $60\%$.

%
\subsection{Case study: PU learning}\label{casestudy: pu}
%auto-ignore

% \begin{figure*}[t]
%     \centering
%     \includegraphics[width=0.4\textwidth]{img_pu_law/{disp_test_law,0.0,0.2,1.0,DP,Agarwal,3,False}.pdf} 
%     \includegraphics[width=0.4\textwidth]{img_pu_law/{error_test_law,0.0,0.2,1.0,DP,Agarwal,3,False}.pdf}

%     \caption{Relationship between input $\tau$ and fairness violation/error on the {\tt law school} dataset using DP constraint (testing curves).
%     % The %black dotted 
%     % {gray dashed line} represents the ideal fairness violation.
%     % Note that in some of the graphs, the red line and the orange line perfectly overlap with each other.
%     \AKMEDIT{Our method {\sf cor scale} closely achieves approximately the ideal violation (indicated by the gray dashed line in the left panel),
%     with only a mild degradation in accuracy compared to training on the uncorrupted data (indicated by the {\sf nocor} method in the right panel).}    
%     }
%     \label{fig:law}
% \end{figure*}

% AKM: only use #1 and #3
% emphasise directly that red line is closer to yellow
% green is super-fair and erroneous
\begin{figure*}[t]
    \centering
    \includegraphics[width=0.4\textwidth]{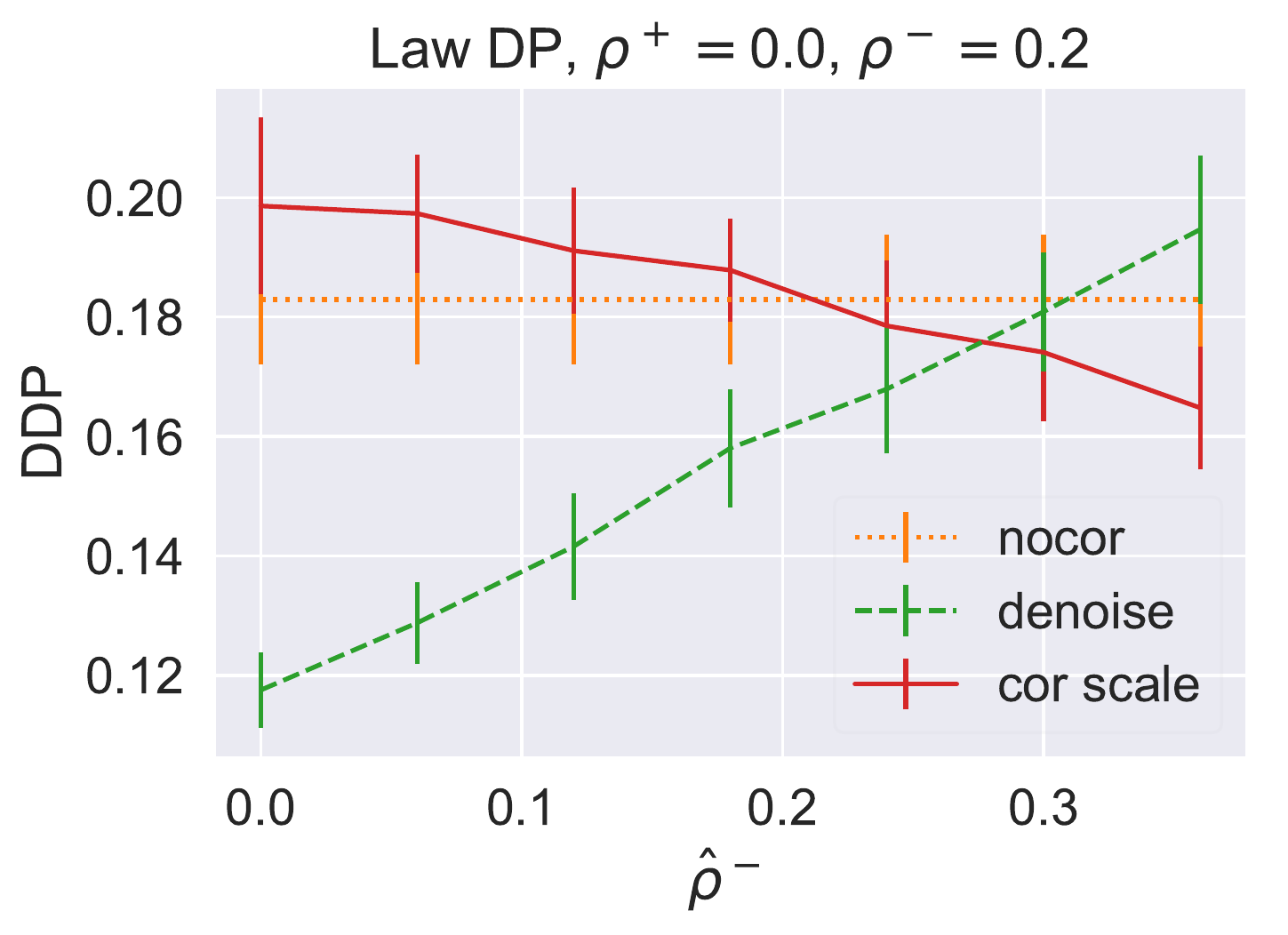}
    \includegraphics[width=0.4\textwidth]{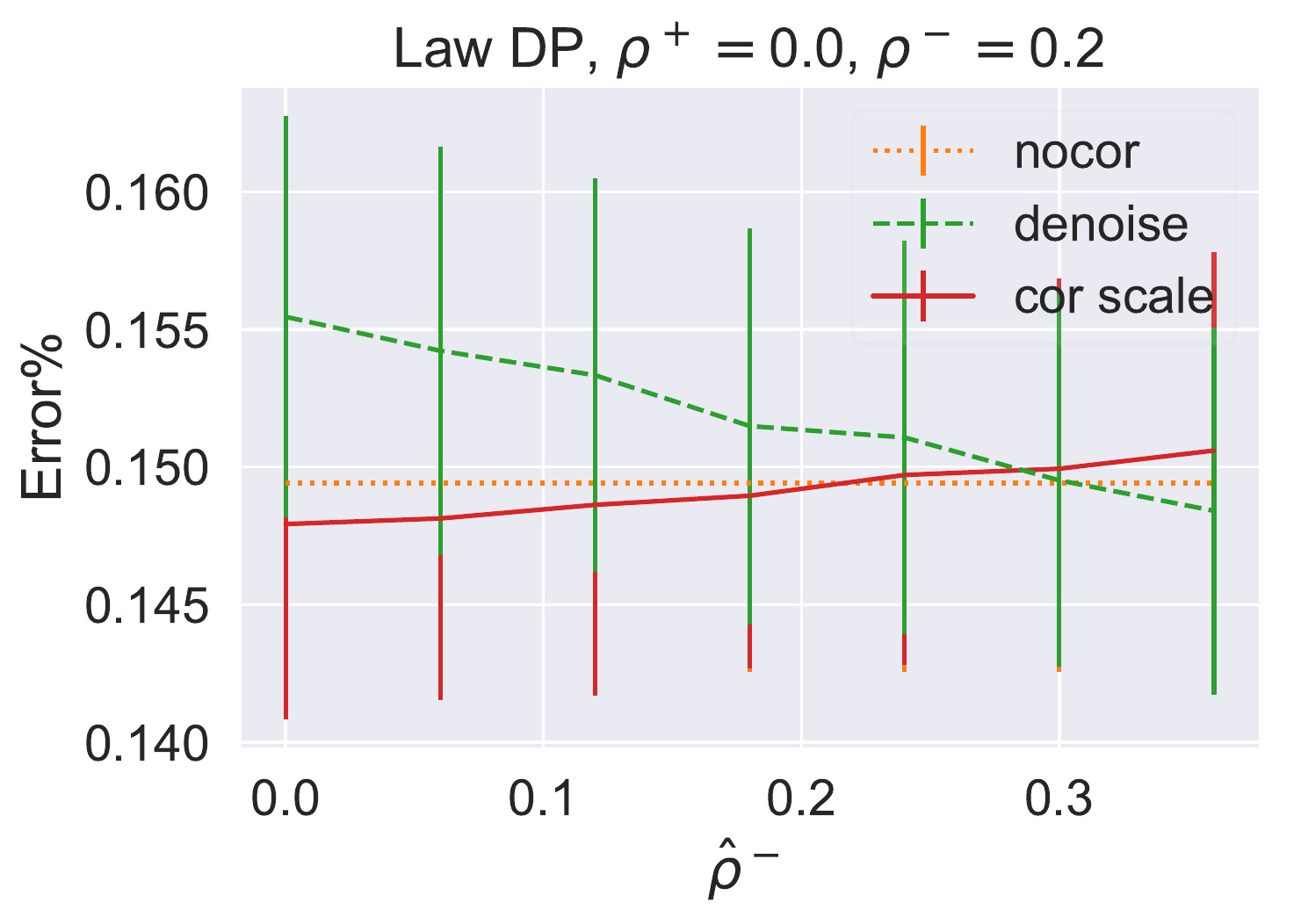}
    \caption{Relationship between the estimated noise level $\hat{\rho}^-$ and fairness violation/error on the {\tt law school} dataset using DP constraint (testing curves), with $\hat{\rho}^+ = 0$ and $\tau=0.2$. Our method ({\sf cor scale}) is not overly sensitive to imperfect estimates of the noise rate, evidenced by its fairness violation and accuracy closely tracking that of training on the uncorrupted data ({\sf nocor})
    as $\hat{\rho}^-$ is varied. That is, red curve in the left plot closely tracks the yellow reference curve.
    By contrast, the baseline that explicitly denoises the data ({\sf denoise}) deviates strongly from {\sf nocor},
    and is sensitive to small changes in $\hat{\rho}^-$.
    This illustrates that our method performs well even when noise rates must be estimated.}
    \label{fig:law_est_err}
\end{figure*}

%Table \ref{Tab:noise} shows the influence of this noise on the size of both sensitive groups for the training set.
%
%\begin{table}[ht]
%\caption{Impact of noise on two groups}
%\centering
%\begin{tabular}{ | l | l | l |}
%    \hline
%    situations & $A=0$ & $A=1$ \\ \hline
%    w/o noise & 978 & 2012 \\ \hline
%    w/ noise $\rho_a^-=0.2$ & 765 & 2225 \\ \hline
%    w/ noise $\rho_a^-=0.4$ & 567 & 2423 \\ \hline
%\end{tabular}
%\label{Tab:noise}
%\end{table}

In this case study, we consider the dataset {\tt law school}, which is a subset of the original dataset from LSAC~\citep{Wightman}. In this dataset, one is provided with information about various individuals (grades, part time/full time status, age, etc.) and must determine whether or not the individual passed the bar exam.
The sensitive feature is race; we only consider black and white.
After prepossessing the data by removing instances that had missing values and those belonging to other ethnicity groups (neither black nor white) we were left with 3738 examples each with 11 features. 

While the data ostensibly provides the true values of the sensitive attribute, one may imagine having access to only PU information.
Indeed, when the data is collected one could imagine that individuals from the minority group would have a much greater incentive to %lie about
{conceal} their group membership due to fear of discrimination.
% Note that there would be no similar incentive if belonging to the majority group.
Thus, any individual identified as belonging to the majority group could be assumed to have been correctly identified (and would be part of the positive instances).
On the other hand, no definitive conclusions could be drawn about individuals identified as belonging to the minority group (these would therefore be part of the unlabelled instances). 

To model a PU learning scenario, we added CCN noise to the dataset with $\rho^+ = 0$ and 
%$\rho^- \in \set{0.2, 0.4}$. 
{$\rho^- = {0.2}$.}
% Again, we perform fair classification on this noisy dataset with DP as the constraint. 
% We use our method as well as the three benchmarks described above.
We initially assume that the noise rate is known. %(some kind of perfect estimator is available).
Figure \ref{fig:law} shows the average result over three runs under this setting each with a random 80-20 training-testing split. 
{We draw the same conclusion as before:
our method
achieves the highest accuracy while
respecting the specified fairness constraint.
% nearly achieves the ideal fairness violation, 
% while only slightly degrading in terms of accuracy.
% Further, the other baselines yield worse tradeoffs than our method.
}
% As before, it can be seen that the na\"{i}ve method {\sf cor} violates the fairness constraint while our method {\sf cor scale} reaches a degree of fairness violation which is very close to the level reached by the fair classifier when there is no noise ({\sf nocor}).
% Although the method {\sf denoise} seems to also achieve the desired fairness level, it does so by sacrificing much more accuracy than our method.

Unlike in the privacy case, the noise rate in the PU learning scenario is usually unknown in practice,
and must be estimated.
%Many methods exist that can be used to estimate the noise rate.
{Such estimates will inevitably be approximate.}
We thus evaluate the impact of the error of the noise rate estimate on all methods.
%in terms of accuracy and fairness violation.
{In Figure~\ref{fig:law_est_err}, we consider a PU scenario where 
%$\rho^+=0$ is known, but 
we only have access to an estimate $\hat{\rho}^-$ of the negative noise rate,
whose true value is $\rho^- = 0.2$.}
Figure \ref{fig:law_est_err} shows the impact of different values of $\hat{\rho}^-$ on the fairness violation and error.
We see that that as long as this estimate is reasonably accurate, 
our method performs the best in terms of being closest to the case of running the fair algorithm on uncorrupted data.

%\aditya{Minimizing, US english.}
In sum,
these results are consistent with our derivation and show that our method {\sf cor scale} can achieve the desired degree of fairness while minimising loss of accuracy.
Appendix \ref{appendix:pu} includes results for different settings of $\tau$, noise level, and on other datasets showing similar trends.

\section{Conclusion and future work}
%auto-ignore
In this paper, we showed both theoretically and empirically that even under the very general MC learning noise model~\citep{Scott13} on the sensitive feature, fairness can still be preserved by scaling the input unfairness tolerance parameter $\tau$.
% Our method can be applied before any downstream classifiers (that takes in a parameter $\tau$ that is proportional to mean-difference score of some fairness measures).
% We also provide applications in PU-learning and privacy preservation.
In future work, it would be interesting to consider the case of categorical sensitive attributes (as applicable, e.g., for race),
and the more challenging case of instance-dependent noise~\citep{Awasthi:2015}.
% Further, considering the case of handling noisy $A$ when the latter is used as input to the classifier would also be of interest.
\AKMEDIT{We remark also that in independent work,~\citet{Awasthi:2019} studied the effect of sensitive attribute noise on the post-processing method of~\citet{Hardt2016}.
In particular, they 
identified conditions when such post-processing can still yield an approximately fair classifier.
Our approach has an advantage of being applicable to a generic in-processing fair classifier;
however, their approach also handles the case where the sensitive feature is used as an input to the classifier.
Exploration of the synthesis of the two approaches is another promising direction for future work.
}

% In the unusual situation where you want a paper to appear in the
% references without citing it in the main text, use \nocite
\newpage
\bibliography{references}
\bibliographystyle{bibstyle}

\clearpage
\onecolumn

%auto-ignore
\begin{appendices}

\section{Proofs of results in the main body}
\label{sec:proof of main theorem}

\subsection{Proof of Lemma~\ref{lemma:mcrelation}}
\begin{proof} %Proof of Lemma~\ref{lemma:mcrelation}]
    Suppose that we have noise as given by Equation \eqref{equation:mcnoise1}. We denote by $A$ the random variable
    denoting the value of the true sensitive attribute and by $A_{\mathrm{corr}}$ the random variable denoting the value 
    of the corrupted sensitive attribute.

    Then, for any measurable subset of instances $U$,

    \ifpaper
    \begin{align*}
        &\mathbb{P}[X \in U \mid Y = 1 , A_{\mathrm{corr}} = 1]\\
        &= \frac{ \mathbb{P}[X \in U, Y=1 \mid A_{\mathrm{corr}} = 1] }{\mathbb{P}[Y=1 \mid A_{\mathrm{corr}} = 1]} \\
        &= \frac{ \mathbb{P}[X \in U, Y=1 \mid A_{\mathrm{corr}} = 1] }{(1-\alpha)\mathbb{P}[Y=1 \mid A = 1] + \alpha \mathbb{P}[Y=1 \mid A=0]} \\
        &= \frac{ (1-\alpha)\mathbb{P}[X \in U, Y=1 \mid A = 1] }{(1-\alpha)\mathbb{P}[Y=1 \mid A = 1] + \alpha \mathbb{P}[Y=1 \mid A=0]} \\
        &\phantom{=} + \frac{\alpha\mathbb{P}[X \in U, Y=1 \mid A = 0] }{(1-\alpha)\mathbb{P}[Y=1 \mid A = 1] + \alpha \mathbb{P}[Y=1 \mid A=0]} \\
        &= \frac{ (1-\alpha)\mathbb{P}[Y=1 \mid A=1]\mathbb{P}[X \in U \mid Y=1, A = 1]}{(1-\alpha)\mathbb{P}[Y=1 \mid A = 1] + \alpha \mathbb{P}[Y=1 \mid A=0]} \\
        &\phantom{=} + \frac{\alpha\mathbb{P}[Y=1 \mid A = 0]\mathbb{P}[X \in U \mid Y=1, A = 0] }{(1-\alpha)\mathbb{P}[Y=1 \mid A = 1] + \alpha \mathbb{P}[Y=1 \mid A=0]} \\
        &= (1-\alpha')\mathbb{P}[X \in U \mid Y=1, A = 1] \\
        &\phantom{=} + \alpha' \mathbb{P}[X \in U \mid Y=1, A = 0],
   \end{align*}
   \fi
   
    \ifarxiv
    \begin{align*}
        &\mathbb{P}[X \in U \mid Y = 1 , A_{\mathrm{corr}} = 1]\\
        &= \frac{ \mathbb{P}[X \in U, Y=1 \mid A_{\mathrm{corr}} = 1] }{\mathbb{P}[Y=1 \mid A_{\mathrm{corr}} = 1]} \\
        &= \frac{ \mathbb{P}[X \in U, Y=1 \mid A_{\mathrm{corr}} = 1] }{(1-\alpha)\mathbb{P}[Y=1 \mid A = 1] + \alpha \mathbb{P}[Y=1 \mid A=0]} \\
        &= \frac{ (1-\alpha)\mathbb{P}[X \in U, Y=1 \mid A = 1] }{(1-\alpha)\mathbb{P}[Y=1 \mid A = 1] + \alpha \mathbb{P}[Y=1 \mid A=0]} + \frac{\alpha\mathbb{P}[X \in U, Y=1 \mid A = 0] }{(1-\alpha)\mathbb{P}[Y=1 \mid A = 1] + \alpha \mathbb{P}[Y=1 \mid A=0]} \\
        &= \frac{ (1-\alpha)\mathbb{P}[Y=1 \mid A=1]\mathbb{P}[X \in U \mid Y=1, A = 1]}{(1-\alpha)\mathbb{P}[Y=1 \mid A = 1] + \alpha \mathbb{P}[Y=1 \mid A=0]} + \frac{\alpha\mathbb{P}[Y=1 \mid A = 0]\mathbb{P}[X \in U \mid Y=1, A = 0] }{(1-\alpha)\mathbb{P}[Y=1 \mid A = 1] + \alpha \mathbb{P}[Y=1 \mid A=0]} \\
        &= (1-\alpha')\mathbb{P}[X \in U \mid Y=1, A = 1] + \alpha' \mathbb{P}[X \in U \mid Y=1, A = 0],
   \end{align*}
   \fi

   where in the last equality we set 
   \begin{equation*}
    \resizebox{
      \ifdim\width>\linewidth
        \linewidth
      \else
        \width
      \fi
    }{!}{$\displaystyle
    \alpha' := \frac{\alpha\mathbb{P}[Y=1 \mid A = 0] }{(1-\alpha)\mathbb{P}[Y=1 \mid A = 1] + \alpha \mathbb{P}[Y=1 \mid A=0]}.$%
    }
   \end{equation*}
   Note that the last equality is equivalent to the first equality of Equation \eqref{equation:mcnoise2}
   with $\alpha'$ as in the lemma.
   
   The proof for $\beta'$ is exactly the same and simply expands $\mathbb{P}[X \in U \mid Y = 1 , A_{\mathrm{corr}} = 0]$
   instead of $\mathbb{P}[X \in U \mid Y = 1 , A_{\mathrm{corr}} = 1]$.
\end{proof}

\subsection{Proof of Theorem~\ref{thm: sensitive noise reduction}}
\begin{proof}%[Proof of Theorem~\ref{thm: sensitive noise reduction}]
    For the DP-like constraints simply note that by definition of $D_{\mathrm{corr}}$ we have that
    $$\Lf_{D_{0,\cdot, \mathrm{corr}}}(f) = (1-\beta) \cdot \Lf_{D_{0,\cdot}}(f) + \beta \cdot \Lf_{D_{1,\cdot}}(f)$$
    and similarly,
    $$\Lf_{D_{1,\cdot, \mathrm{corr}}}(f) = (1-\alpha) \cdot \Lf_{D_{1,\cdot}}(f) + \alpha \cdot \Lf_{D_{0,\cdot}}(f)$$
    Thus we have that
    \ifpaper
    \begin{align*}
        \Lf_{D_{0,\cdot, \mathrm{corr}}}(f)-\Lf_{D_{1,\cdot, \mathrm{corr}}}(f) = &\, (1-\alpha-\beta) \cdot \\
        &(\Lf_{D_{0,\cdot}}(f) - \Lf_{D_{1,\cdot}}(f)),
    \end{align*}
    \fi
    
    \ifarxiv
    \begin{align*}
        \Lf_{D_{0,\cdot, \mathrm{corr}}}(f)-\Lf_{D_{1,\cdot, \mathrm{corr}}}(f) = &\, (1-\alpha-\beta) \cdot (\Lf_{D_{0,\cdot}}(f) - \Lf_{D_{1,\cdot}}(f)),
    \end{align*}
    \fi
    which immediately implies the desired result.
    
    The result for the EO constraint is obtained in the exact same way by simply replacing 
    $D_{a, \cdot}$ with 
    $D_{a, 1}$, $D_{a, \cdot, \text{corr}}$ with $D_{a, 1, \text{corr}}$, and $\alpha$ and $\beta$ with
    $\alpha'$ and $\beta'$.
\end{proof}

\subsection{Proof of Lemma~\ref{thm: randomized response for differential privacy}}
\begin{proof}%[Proof of Lemma~\ref{thm: randomized response for differential privacy}]
Basic definitions in Differential Privacy are provided in Appendix~\ref{sec:diff-privacy}. Consider an instance $\{x_i,y_i,a_i\}$ with only $x_i$ disclosed by an attacker. Assume the sensitive attribute $a_i$ is queried. Denote $\hat{a}_i$ to be the sensitive attribute of the instance after adding noise i.e. being flipped with probability $\rho$. The attacker is interested in knowing if $a_i=0$ or $a_i=1$ by querying $\hat{a}_i$.

Since
\begin{align*}
    \frac{\mathbb{P} [\hat{a}_i=1 | a_i=1]}{\mathbb{P} [\hat{a}_i=1 | a_i=0]}=\frac{\mathbb{P} [\hat{a}_i=0 | a_i=0]}{\mathbb{P} [\hat{a}_i=0 | a_i=1]}
\end{align*}
we can reason in a similar way for $\hat{a}_i=0$. Thus, let us focus on the case where $\hat{a}_i=1$.  Let us consider two neighbor instances $\{x_i,y_i,0\}$ and $\{x_i,y_i,1\}$.
Essentially, we want to upper-bound the ratio 
\[
\frac{\mathbb{P} [\hat{a}=1|a_i=1]}{\mathbb{P} [\hat{a}=1|a_i=0]} := \frac{1-\rho}{\rho}
\]
by $\exp (\epsilon)$, and lower-bound the ratio by $\exp (-\epsilon)$.
The lower bound is always true since $\rho<0.5$. For the upper-bound, We have:
\begin{align*}
    \frac{1-\rho}{\rho} \leq \exp{(\epsilon)} \iff \rho \geq \frac{1}{\exp (\epsilon)+1}.
\end{align*}
\end{proof}

\clearpage

\section{Background on Differential Privacy}
\label{sec:diff-privacy}

The following definitions are from Appendix~\citet{Dwork06}. They are used for the proof of Lemma~\ref{thm: randomized response for differential privacy} in~\ref{sec:proof of main theorem}.

\textbf{Probability simplex}: Given a discrete set $B$, the probability simplex over $B$, denoted $\Delta (B)$ is:
\[
\Delta (B) = \{ x \in \mathbb{R}^{\vert B \vert }: \forall i, x_i \geq 0, \text{ and } \sum_{i=1}^{\vert B \vert} x_i = 1\}.
\]
\textbf{Randomized Algorithms}: A randomized algorithm $\mathcal{M}$ with domain $A$ and range $B$ is an algorithm associated with a total map $M:A\rightarrow \Delta (B)$. On input $a \in A$, the algorithm $\mathcal{M}$ outputs $\mathcal{M}(a)=b$ with probability $(M(a))_b$ for each $b \in B$. The probability space is over the coin flips of the algorithm $\mathcal{M}$.

For simplicity we will avoid implementation details and we will consider databases as histograms. Given a universe $\mathcal{X}$ an histogram over $\mathcal{X}$ is an object in $\mathbb{N}^{\vert \mathcal{X} \vert}$. We can bake in the presence or absence of an individual notion in a definition of distance between databases.\\

\textbf{Distance Between Databases}: The $l_1$ norm $\| x \|_1$ of a database $x \in \mathbb{N}^{\vert \mathcal{X} \vert}$ is defined as:
\[
\| x \|_1 = \sum_{i=1}^{\vert \mathcal{X} \vert} x_i.
\]
The $l_1$ distance between two databases $x$ and $y$ is defined as  $\| x - y \|_1$.\\

\textbf{Differential Privacy}: A randomized algorithm $\mathcal{M}$ with domain $\mathbb{N}^{\vert \mathcal{X} \vert}$ is $(\epsilon, \delta)$-differentially private if for all $S \subseteq$ Range$(\mathcal{M})$ and for all $x,y\in \mathbb{N}^{\mathcal{X}}$ such that $\| x-y \|_1 \leq 1$:
\[
\mathbb{P} [\mathcal{M}(x) \in S] \leq \exp(\epsilon) \cdot \mathbb{P} [\mathcal{M}(y)\in S]+\delta,
\]
where the probability space is over the coin flips of the mechanism $\mathcal{M}$.

\clearpage

\section{Relationship between mean-difference score and the constraint used in \citet{reduction}}
\label{appendix:agarwal_constraint}

\citet{reduction} adopts slightly different fairness constraints than ours. Using our notation and letting $\signf(X) = \sign(f(X))$, instead of bounding  $\Lambda^{\mathrm{DP}}_D( f )$ by $\tau$, they bound 
$$\max_{a\in\{0,1\}} \abs{\mathbb{E}_{D_{a,\cdot}}[\signf(X)] - \mathbb{E}_D[\signf(X)]}$$ and $$\max_{a\in\{0,1\}} \abs{\mathbb{E}_{D_{a,1}}[\signf(X)] - \mathbb{E}_{D_{\cdot, 1}}[\signf(X)]}$$ for DP and EO respectively  by $\tau$.
The two have the following relationship.
\begin{theorem}
Under the setting of fair binary classification with a single binary sensitive attribute and using $\ellf(s, y) = \1[\sign(s)]$ we have that
\[
\max_{a\in\{0,1\}} \abs{\mathbb{E}_{D_{a, \cdot}}[\signf(X)] - \mathbb{E}_D[\signf(X)]} = \max_{a\in\{0,1\}} ( \mathbb{P}[A = 0], \mathbb{P}[A = 1] ) \Lambda^{\mathrm{DP}}_D( f )
\]
and
\[
\max_{a\in\{0,1\}} \abs{\mathbb{E}_{D_{a, 1}}[\signf(X)] - \mathbb{E}_{D_{\cdot, 1}}[\signf(X)]} = \max_{a\in\{0,1\}} ( \mathbb{P}[A = 0 \mid Y = 1], \mathbb{P}[A = 1 \mid Y = 1] ) \Lambda^{\mathrm{EO}}_D( f )
\]
\end{theorem}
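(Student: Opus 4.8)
The plan is to reduce both claims to the law of total expectation together with elementary algebra; there is essentially no analytic content, so the proof will be short.

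First I would introduce shorthand. Write $p_a := \mathbb{P}[A = a]$ and $g_a := \mathbb{E}_{D_{a, \cdot}}[\signf(X)]$ for $a \in \{0,1\}$, so that $p_0 + p_1 = 1$. Conditioning on the value of $A$ and applying the law of total expectation gives $\mathbb{E}_D[\signf(X)] = p_0 g_0 + p_1 g_1$. Substituting this and using $p_0 = 1 - p_1$,
\[
\mathbb{E}_{D_{0, \cdot}}[\signf(X)] - \mathbb{E}_D[\signf(X)] = g_0 - (p_0 g_0 + p_1 g_1) = p_1 (g_0 - g_1),
\]
and symmetrically $\mathbb{E}_{D_{1, \cdot}}[\signf(X)] - \mathbb{E}_D[\signf(X)] = p_0 (g_1 - g_0)$. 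Taking absolute values,
\[
\abs{\mathbb{E}_{D_{0, \cdot}}[\signf(X)] - \mathbb{E}_D[\signf(X)]} = \mathbb{P}[A = 1]\cdot\abs{g_0 - g_1}, \qquad \abs{\mathbb{E}_{D_{1, \cdot}}[\signf(X)] - \mathbb{E}_D[\signf(X)]} = \mathbb{P}[A = 0]\cdot\abs{g_0 - g_1}.
\]

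Next I would identify $\abs{g_0 - g_1}$ with $\Lambda^{\mathrm{DP}}_D(f)$. With the demographic-parity fairness loss of \S\ref{section:fairnessdef} (i.e. $\ellf(s,y)$ equal to the prediction indicator, up to the convention on the range of $\sign$), one has $\Lf_{D_{a, \cdot}}(f) = c - g_a$ for a fixed constant $c \in \{0,1\}$ that does not depend on $a$; hence $\abs{\Lf_{D_{0, \cdot}}(f) - \Lf_{D_{1, \cdot}}(f)} = \abs{g_0 - g_1}$, that is, $\Lambda^{\mathrm{DP}}_D(f) = \abs{g_0 - g_1}$. Taking the maximum over $a \in \{0,1\}$ of the two displayed identities then yields
\[
\max_{a \in \{0,1\}} \abs{\mathbb{E}_{D_{a, \cdot}}[\signf(X)] - \mathbb{E}_D[\signf(X)]} = \max\bigl(\mathbb{P}[A = 0], \mathbb{P}[A = 1]\bigr) \cdot \Lambda^{\mathrm{DP}}_D(f),
\]
which is the first claim. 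The equality-of-opportunity statement follows by running the identical argument inside the event $\{Y = 1\}$: replace $D$ by $D_{\cdot, 1}$, each $D_{a, \cdot}$ by $D_{a, 1}$, and each $\mathbb{P}[A = a]$ by $\mathbb{P}[A = a \mid Y = 1]$ throughout, and invoke the definition of $\Lambda^{\mathrm{EO}}_D$.

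The only thing requiring any care — and it is pure bookkeeping rather than a genuine obstacle — is keeping the two conventions consistent: verifying that the constant offset relating $\mathbb{E}[\signf(X)]$ to the fairness loss $\Lf$ cancels in the difference, so that $\Lambda^{\mathrm{DP}}_D(f)$ is exactly $\abs{g_0 - g_1}$, and correctly matching $p_1$ with the $A=0$ deviation and $p_0$ with the $A=1$ deviation. No nontrivial estimates or measure-theoretic arguments are needed.
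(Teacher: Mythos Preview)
Your proposal is correct and follows essentially the same route as the paper: expand $\mathbb{E}_D[\signf(X)]$ via the law of total expectation, simplify each group's deviation to a base-rate times $|g_0 - g_1|$, identify that quantity with $\Lambda^{\mathrm{DP}}_D(f)$, and then repeat the argument conditional on $Y=1$ for the EO case. Your shorthand and explicit remark about the constant offset cancelling in $\Lf_{D_{0,\cdot}}(f)-\Lf_{D_{1,\cdot}}(f)$ are slightly tidier than the paper's write-up, but the underlying argument is identical.
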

\begin{proof}
For the DP case,
\begin{align*}
    &\abs{\mathbb{E}_{D_{1,\cdot}}[\signf(X)] - \mathbb{E}_D[\signf(X)]}
    \\&= \abs{\mathbb{E}_{D_{1,\cdot}}[\signf(X)] - (\mathbb{P}[A=1]\mathbb{E}_{D_{1,\cdot}}[\signf(X)]+\mathbb{P}[A=0]\mathbb{E}_{D_{0,\cdot}}[\signf(X)])}\\
    &= \abs{(1-\mathbb{P}[A=1])\mathbb{E}_{D_{1,\cdot}}[\signf(X)]-\mathbb{P}[A = 0]\mathbb{E}_{D_{0,\cdot}}[\signf(X)]}\\
    &= \abs{\mathbb{P}[A=0]\mathbb{E}_{D_{1,\cdot}}[\signf(X)]-\mathbb{P}[A=0]\mathbb{E}_{D_{0,\cdot}}[\signf(X)]}\\
    &= \mathbb{P}[A=0]\abs{(\mathbb{E}_{D_{1,\cdot}}[\signf(X)]-\mathbb{E}_{D_{0,\cdot}}[\signf(X)])}\\
    &= \mathbb{P}[A=0]\abs{ \Lf_{D_{0, \cdot}}(f) - \Lf_{D_{1, \cdot}}(f) }\\
    &= \mathbb{P}[A=0]\Lambda^{\mathrm{DP}}_D( f )
\end{align*}

and similarly
\[
\abs{\mathbb{E}_{D_{0,\cdot}}[\signf(X)]-\mathbb{E}_D[\signf(X)]} = \mathbb{P}[A=1]\Lambda^{\mathrm{DP}}_D( f )
\]
so the theorem holds.

The result for the EO case is proved in exactly the same way by simply replacing $\mathbb{P}[A=0], \mathbb{P}[A=1],$ $D_{a,\cdot}$ and $D$ with $\mathbb{P}[A=0 \mid Y=1], \mathbb{P}[A=1 \mid Y=1],$ $D_{a,1}$ and $D_{\cdot, 1}$ respectively.

\end{proof}

We then have the following as an immediate corollary.
\begin{corollary}
Assuming that we have noise as described above by Equation \eqref{equation:mcnoise1} and that we take $\ellf(s, y) = \1[\sign(s)]$ then we have that if $\max_{a\in\{0,1\}} ( \mathbb{P}_D[A = 0], \mathbb{P}_D[A = 1] ) = \max_{a\in\{0,1\}} ( \mathbb{P}_{D_\mathrm{corr}}[A = 0], \mathbb{P}_{D_\mathrm{corr}}[A = 1] )$ then:
\[
\max_{a\in\{0,\cdot\}} \abs{\mathbb{E}_{D_{a,\cdot}}[\signf(X)] - \mathbb{E}_D[\signf(X)]} < \tau \iff \max_{a\in\{0,1\}} \abs{\mathbb{E}_{D_{a,\cdot,\textrm{corr}}}[\signf(X)] - \mathbb{E}_{D_{\textrm{corr}}}[\signf(X)]} < \tau \cdot (1-\alpha-\beta).
\]
And if $\max_{a\in\{0,1\}} ( \mathbb{P}_{D_{\cdot, 1}}[A = 0], \mathbb{P}_{D_{\cdot, 1}}[A = 1] ) = \max_{a\in\{0,1\}} ( \mathbb{P}_{D_{\cdot, 1, \mathrm{corr}}}[A = 0], \mathbb{P}_{D_{\cdot, 1, \mathrm{corr}}}[A = 1] )$ then: 
\[
\max_{a\in\{0,1\}} \abs{\mathbb{E}_{D_{a,1}}[\signf(X)] - \mathbb{E}_{D_{\cdot,1}}[\signf(X)]} < \tau \iff \max_{a\in\{0,1\}} \abs{\mathbb{E}_{D_{a,1,\textrm{corr}}}[\signf(X)] - \mathbb{E}_{D_{\cdot,1,\textrm{corr}}}[\signf(X)]} < \tau \cdot (1-\alpha'-\beta').
\]
\end{corollary}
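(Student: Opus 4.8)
The plan is to derive the corollary by chaining two facts already in hand: the theorem stated immediately above in this appendix, which shows that the constraint of~\citet{reduction} equals $\max(\mathbb{P}[A=0],\mathbb{P}[A=1])$ times $\Lambda^{\mathrm{DP}}_D(f)$ for the DP case (and the analogous statement, with probabilities conditioned on $Y=1$, for $\Lambda^{\mathrm{EO}}_D(f)$), and the scaling identities that appear inside the proof of Theorem~\ref{thm: sensitive noise reduction}, namely $\Lambda^{\mathrm{DP}}_{D_{\mathrm{corr}}}(f) = (1-\alpha-\beta)\,\Lambda^{\mathrm{DP}}_D(f)$ and $\Lambda^{\mathrm{EO}}_{D_{\mathrm{corr},\cdot,1}}(f) = (1-\alpha'-\beta')\,\Lambda^{\mathrm{EO}}_{D_{\cdot,1}}(f)$, with $\alpha',\beta'$ as per Equation~\eqref{equation:mcnoise2} and Lemma~\ref{lemma:mcrelation}. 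No new computation is needed beyond assembling these two inputs.

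First I would handle the DP statement. Set $M_D := \max(\mathbb{P}_D[A=0],\mathbb{P}_D[A=1])$ and $M_{D_{\mathrm{corr}}} := \max(\mathbb{P}_{D_{\mathrm{corr}}}[A=0],\mathbb{P}_{D_{\mathrm{corr}}}[A=1])$. Applying the preceding theorem once to $D$ and once to $D_{\mathrm{corr}}$ gives
\[
\max_{a\in\{0,1\}}\abs{\mathbb{E}_{D_{a,\cdot}}[\signf(X)] - \mathbb{E}_D[\signf(X)]} = M_D\cdot\Lambda^{\mathrm{DP}}_D(f)
\]
and
\[
\max_{a\in\{0,1\}}\abs{\mathbb{E}_{D_{a,\cdot,\mathrm{corr}}}[\signf(X)] - \mathbb{E}_{D_{\mathrm{corr}}}[\signf(X)]} = M_{D_{\mathrm{corr}}}\cdot\Lambda^{\mathrm{DP}}_{D_{\mathrm{corr}}}(f) = M_{D_{\mathrm{corr}}}\cdot(1-\alpha-\beta)\cdot\Lambda^{\mathrm{DP}}_D(f),
\]
the last equality being the identity from the proof of Theorem~\ref{thm: sensitive noise reduction}. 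By the hypothesis of the corollary $M_D = M_{D_{\mathrm{corr}}}$; call this common value $M$. Then $M \ge \tfrac12 > 0$ since the two quantities being maximized are complementary probabilities, and $1-\alpha-\beta > 0$ by the standing assumption $\alpha+\beta<1$. Dividing the first display by $M$ and the second by $M(1-\alpha-\beta)$ shows that the left-hand inequality and the right-hand inequality of the claimed equivalence are each equivalent to $\Lambda^{\mathrm{DP}}_D(f)<\tau/M$, which closes the DP case.

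The EO statement follows by repeating this argument after conditioning throughout on $Y=1$: replace $D_{a,\cdot}$, $D$, $\mathbb{P}[A=a]$, $\alpha$, $\beta$ by $D_{a,1}$, $D_{\cdot,1}$, $\mathbb{P}[A=a\mid Y=1]$, $\alpha'$, $\beta'$, and invoke the EO lines of the preceding theorem and of Theorem~\ref{thm: sensitive noise reduction} in place of the DP ones; here one also uses $\mathbb{P}[Y=1]>0$ so that the conditioning is well defined, after which the max of the two conditional probabilities is again at least $\tfrac12$ and $1-\alpha'-\beta'>0$. I do not expect a genuine obstacle here: the one point that deserves a line of care is that the common factor $M$ (respectively its $Y=1$ analogue) is strictly positive and hence cancels from both sides of the strict inequality without reversing it — this is precisely where the hypothesis $M_D = M_{D_{\mathrm{corr}}}$ and the constraint $\alpha+\beta<1$ are actually used.
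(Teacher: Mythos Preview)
Your proposal is correct and follows exactly the route the paper intends: the paper states this result as ``an immediate corollary'' of the preceding theorem (which expresses the \citet{reduction} constraint as $\max(\mathbb{P}[A=0],\mathbb{P}[A=1])\cdot\Lambda^{\mathrm{DP}}_D(f)$) together with the scaling identity from Theorem~\ref{thm: sensitive noise reduction}, and your write-up simply makes that chaining explicit. The only detail you invoke without justification is $1-\alpha'-\beta'>0$, but this does follow from $\alpha+\beta<1$ via the formulas in Lemma~\ref{lemma:mcrelation}, and the paper also takes it for granted.
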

Even if the noise does not satisfy these new assumptions, we can still bound the constraint. Note that both $\max_{a\in\{0,1\}} ( \mathbb{P}[A = 0], \mathbb{P}[A = 1] )$ and $\max_{a\in\{0,1\}} ( \mathbb{P}[A = 0 \mid Y = 1], \mathbb{P}[A = 1 \mid Y = 1] )$ have values between $0.5$ and $1$. Thus,
\[
\frac{1}{2} \Lambda^{\mathrm{DP}}_D( f ) \leq \max_{a\in \{0,1\}}\abs{\mathbb{E}_{D_{a,\cdot}}[\signf(X)]-\mathbb{E}_D[\signf(X)]} \leq \Lambda^{\mathrm{DP}}_D( f )
\]
\[
\frac{1}{2} \Lambda^{\mathrm{EO}}_D( f ) \leq \max_{a\in \{0,1\}}\abs{\mathbb{E}_{D_{a,1}}[\signf(X)]-\mathbb{E}_{D_{\cdot,1}}[\signf(X)]} \leq \Lambda^{\mathrm{EO}}_D( f ),
\] and therefore the following corollary holds:
\begin{corollary}
Assuming that we have noise as described above by Equation \eqref{equation:mcnoise1} and that we take $\ellf(s, y) = \1[\sign(s)]$ then we have that:
\[
 \max_{a\in\{0,1\}} \abs{\mathbb{E}_{D_{a,\cdot,\textrm{corr}}}[\signf(X)] - \mathbb{E}_{D_{\textrm{corr}}}[\signf(X)]} < \frac{1}{2} \tau \cdot (1-\alpha-\beta)
 \Rightarrow
 \max_{a\in\{0,1\}} \abs{\mathbb{E}_{D_{a,\cdot}}[\signf(X)] - \mathbb{E}_D[\signf(X)]} < \tau
\]
and,
\[
 \max_{a\in\{0,1\}} \abs{\mathbb{E}_{D_{a,1,\textrm{corr}}}[\signf(X)] - \mathbb{E}_{D_{\cdot,1,\textrm{corr}}}[\signf(X)]} < \frac{1}{2} \tau \cdot (1-\alpha'-\beta')
 \Rightarrow
 \max_{a\in\{0,1\}} \abs{\mathbb{E}_{D_{a,1}}[\signf(X)] - \mathbb{E}_{D_{\cdot,1}}[\signf(X)]} < \tau.
\]
\end{corollary}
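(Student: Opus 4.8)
The plan is to obtain the corollary by composing three ingredients that are all already in hand. The first is the two-sided ``sandwich'' estimate derived just before the corollary, namely $\tfrac12\,\Lambda^{\mathrm{DP}}_D(f)\le \max_{a\in\{0,1\}}\abs{\mathbb{E}_{D_{a,\cdot}}[\signf(X)]-\mathbb{E}_D[\signf(X)]}\le \Lambda^{\mathrm{DP}}_D(f)$ (and the analogous bound with $D_{a,1},D_{\cdot,1},\Lambda^{\mathrm{EO}}$), which follows from the equality established earlier in this appendix together with $\max(\mathbb{P}[A=0],\mathbb{P}[A=1])\in[\tfrac12,1]$. The second is the exact scaling identity $\Lambda^{\mathrm{DP}}_{D_{\mathrm{corr}}}(f)=(1-\alpha-\beta)\,\Lambda^{\mathrm{DP}}_D(f)$, which is precisely what the proof of Theorem~\ref{thm: sensitive noise reduction} establishes (that proof yields equality of the signed differences, so strict inequalities transfer both ways since $1-\alpha-\beta>0$ under $\alpha+\beta<1$); the third is the corresponding $\mathrm{EO}$ identity $\Lambda^{\mathrm{EO}}_{D_{\mathrm{corr},\cdot,1}}(f)=(1-\alpha'-\beta')\,\Lambda^{\mathrm{EO}}_{D_{\cdot,1}}(f)$ with $\alpha',\beta'$ from Lemma~\ref{lemma:mcrelation}.

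For the DP implication I would argue as follows. Assume the hypothesis $\max_{a}\abs{\mathbb{E}_{D_{a,\cdot,\mathrm{corr}}}[\signf(X)]-\mathbb{E}_{D_{\mathrm{corr}}}[\signf(X)]}<\tfrac12\,\tau\,(1-\alpha-\beta)$. Applying the lower half of the sandwich bound to the corrupted distribution $D_{\mathrm{corr}}$ gives $\tfrac12\,\Lambda^{\mathrm{DP}}_{D_{\mathrm{corr}}}(f)<\tfrac12\,\tau\,(1-\alpha-\beta)$, i.e. $\Lambda^{\mathrm{DP}}_{D_{\mathrm{corr}}}(f)<\tau\,(1-\alpha-\beta)$. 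Cancelling the positive factor $1-\alpha-\beta$ via the scaling identity yields $\Lambda^{\mathrm{DP}}_{D}(f)<\tau$. Finally, applying the upper half of the sandwich bound to $D$ gives $\max_{a}\abs{\mathbb{E}_{D_{a,\cdot}}[\signf(X)]-\mathbb{E}_D[\signf(X)]}\le\Lambda^{\mathrm{DP}}_{D}(f)<\tau$, which is the desired conclusion.

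The EO implication follows by exactly the same three-step chain, replacing $D_{a,\cdot},D,\alpha,\beta$ throughout by $D_{a,1},D_{\cdot,1},\alpha',\beta'$ and using the $\mathrm{EO}$ versions of the sandwich bound and of the scaling identity from Theorem~\ref{thm: sensitive noise reduction} and Lemma~\ref{lemma:mcrelation}.

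There is no genuine obstacle here: the corollary is a routine chaining of inequalities, each of which is already proved in the excerpt, so the bulk of the work is purely bookkeeping. The only point that warrants a line of care is that Theorem~\ref{thm: sensitive noise reduction} is phrased with non-strict inequalities, so I would note explicitly that its proof in fact delivers the \emph{equality} $\Lambda^{\mathrm{DP}}_{D_{\mathrm{corr}}}(f)=(1-\alpha-\beta)\,\Lambda^{\mathrm{DP}}_D(f)$ with positive factor, which is exactly what licenses pushing the strict inequalities through in both directions.
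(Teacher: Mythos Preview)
Your proposal is correct and follows exactly the approach implicit in the paper: the corollary is stated immediately after the sandwich bounds $\tfrac12\,\Lambda^{\mathrm{DP}}_D(f)\le \max_a\abs{\cdots}\le \Lambda^{\mathrm{DP}}_D(f)$ and is meant to be read off by chaining those bounds (applied once to $D_{\mathrm{corr}}$ and once to $D$) through the scaling identity from Theorem~\ref{thm: sensitive noise reduction}. Your remark that the proof of Theorem~\ref{thm: sensitive noise reduction} actually yields the \emph{equality} $\Lambda^{\mathrm{DP}}_{D_{\mathrm{corr}}}(f)=(1-\alpha-\beta)\,\Lambda^{\mathrm{DP}}_{D}(f)$, and that this is what licenses carrying the strict inequality across, is a useful point of care that the paper leaves tacit.
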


In addition to giving a simple way to use the classifier of \citet{reduction} without any modification, these results seem to indicate that with small modifications our scaling method can apply to an even wider range of fair classifiers than formally shown.

\clearpage

\section{More results for the privacy case study}
\label{appendix:privacy}
In this section we give some additional results for the privacy case study. 

%\aditya{Placeholder, need to expand.}
{Figure~\ref{fig:DP_compas_full} shows additional results on {\tt COMPASS} for different noise levels $\rho^+ = \rho^- \in \{0.15, 0.3\}$.}

\begin{figure*}[!h]
    \centering
    \includegraphics[width=0.24\textwidth]{img_privacy/{disp_test_compas,0.15,0.15,1.0,DP,Agarwal,3,False}.pdf}
    \includegraphics[width=0.24\textwidth]{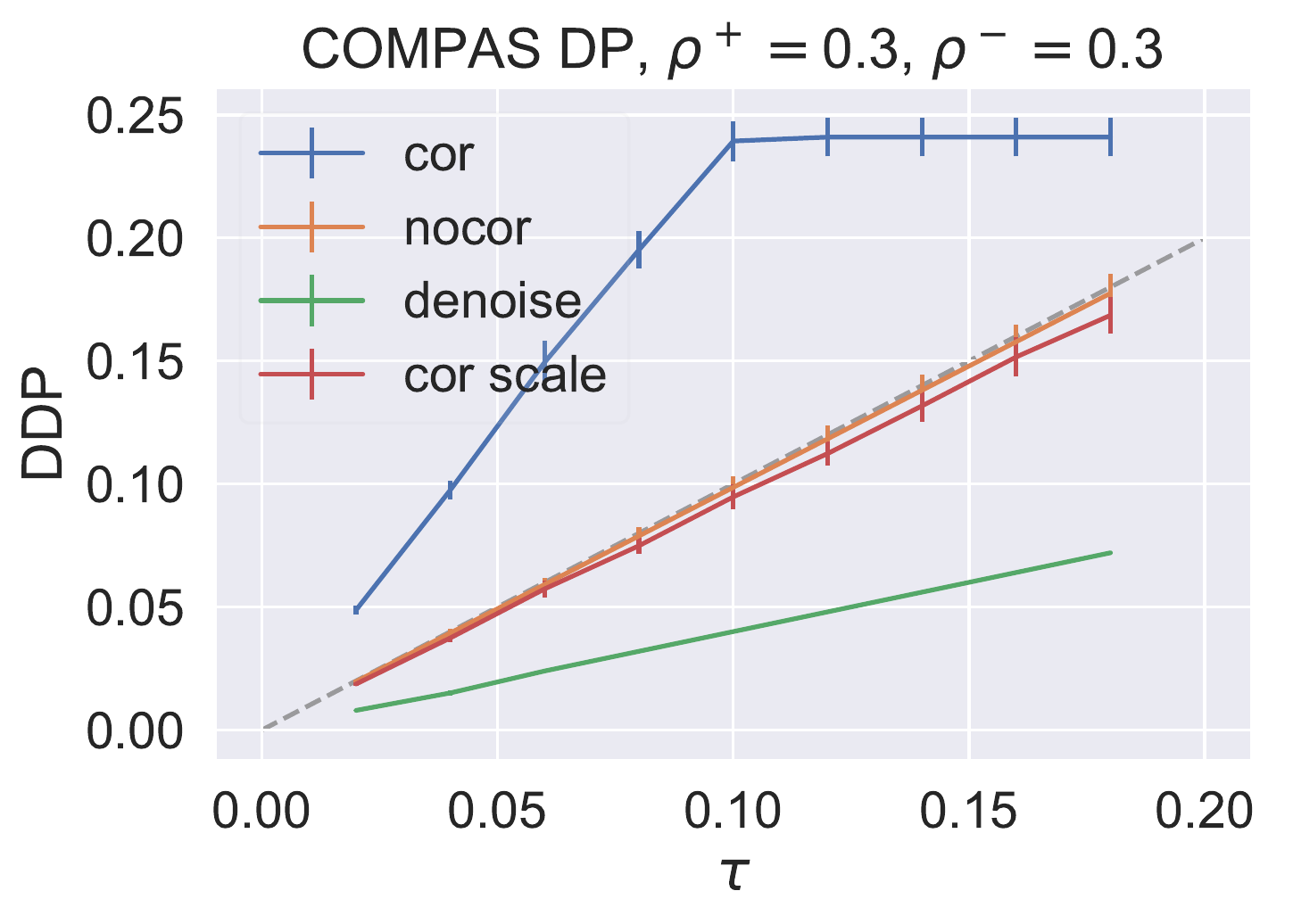}
    \includegraphics[width=0.24\textwidth]{img_privacy/{error_test_compas,0.15,0.15,1.0,DP,Agarwal,3,False}.pdf}
    \includegraphics[width=0.24\textwidth]{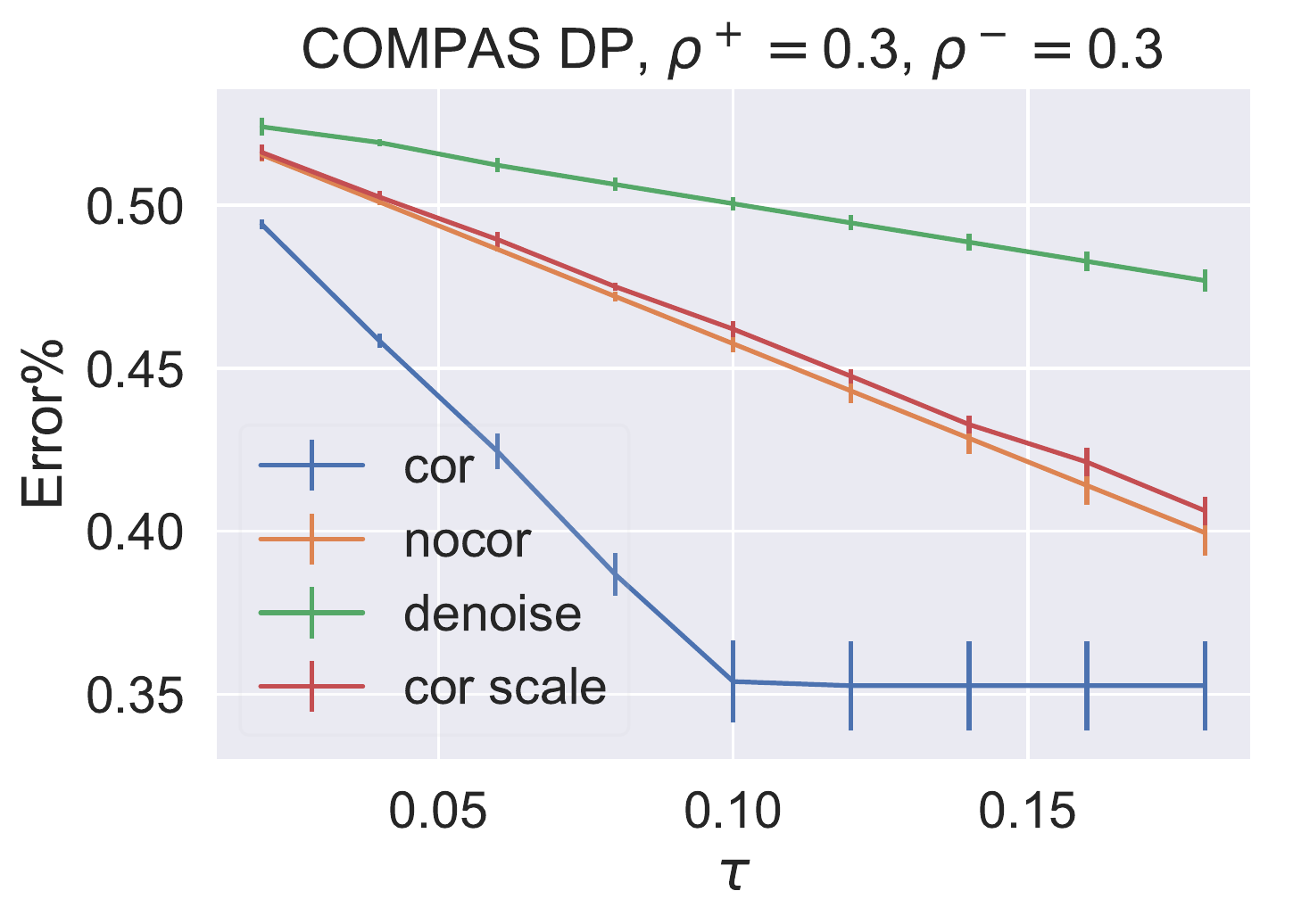}

    \caption{Relationship between input $\tau$ and fairness violation/error on the {\tt COMPAS} dataset using DP constraint (testing curves). The %black dotted line 
    {gray dashed line} represents the ideal fairness violation.}
    \label{fig:DP_compas_full}
\end{figure*}

Figure~\ref{fig:EO_compas} shows the results under the EO constraint for the {\tt COMPAS} dataset. That is, the dataset and setting is the same as described in section \ref{casestudy: privacy} but with the EO constraint instead of the DP constraint. We see that the trends are the same.

\begin{figure*}[h]
    \centering
    \includegraphics[width=0.24\textwidth]{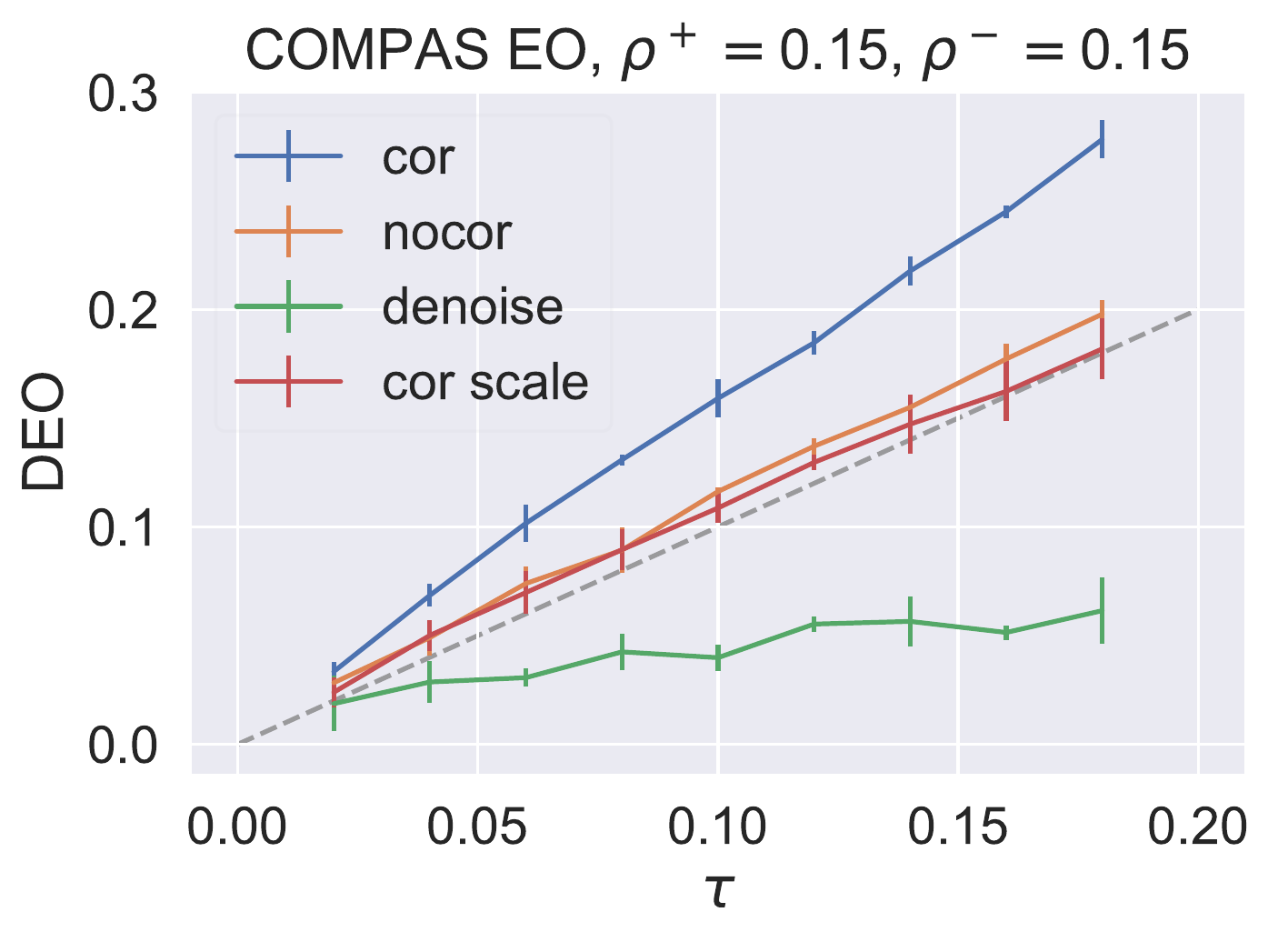}
    \includegraphics[width=0.24\textwidth]{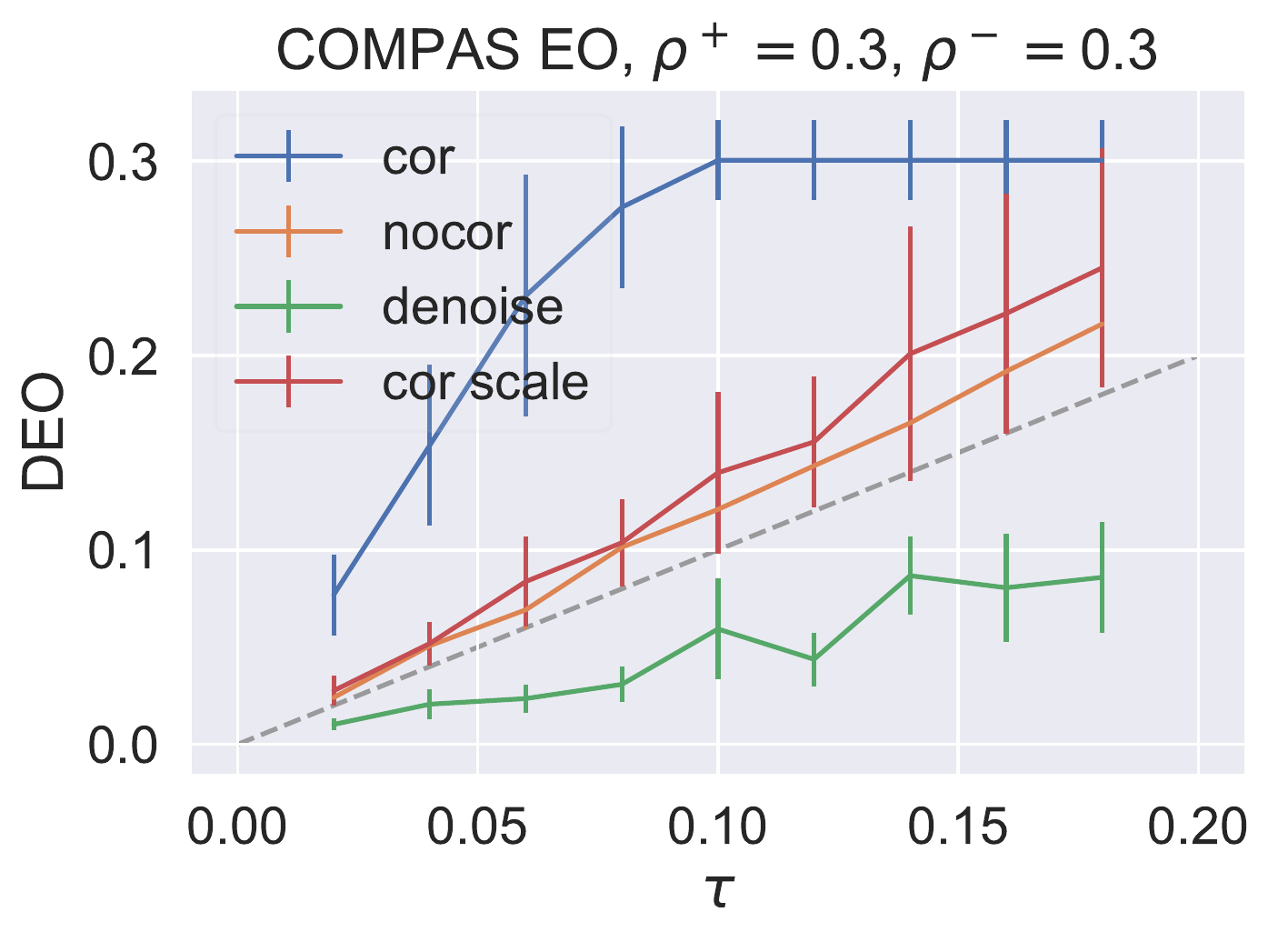}
    \includegraphics[width=0.24\textwidth]{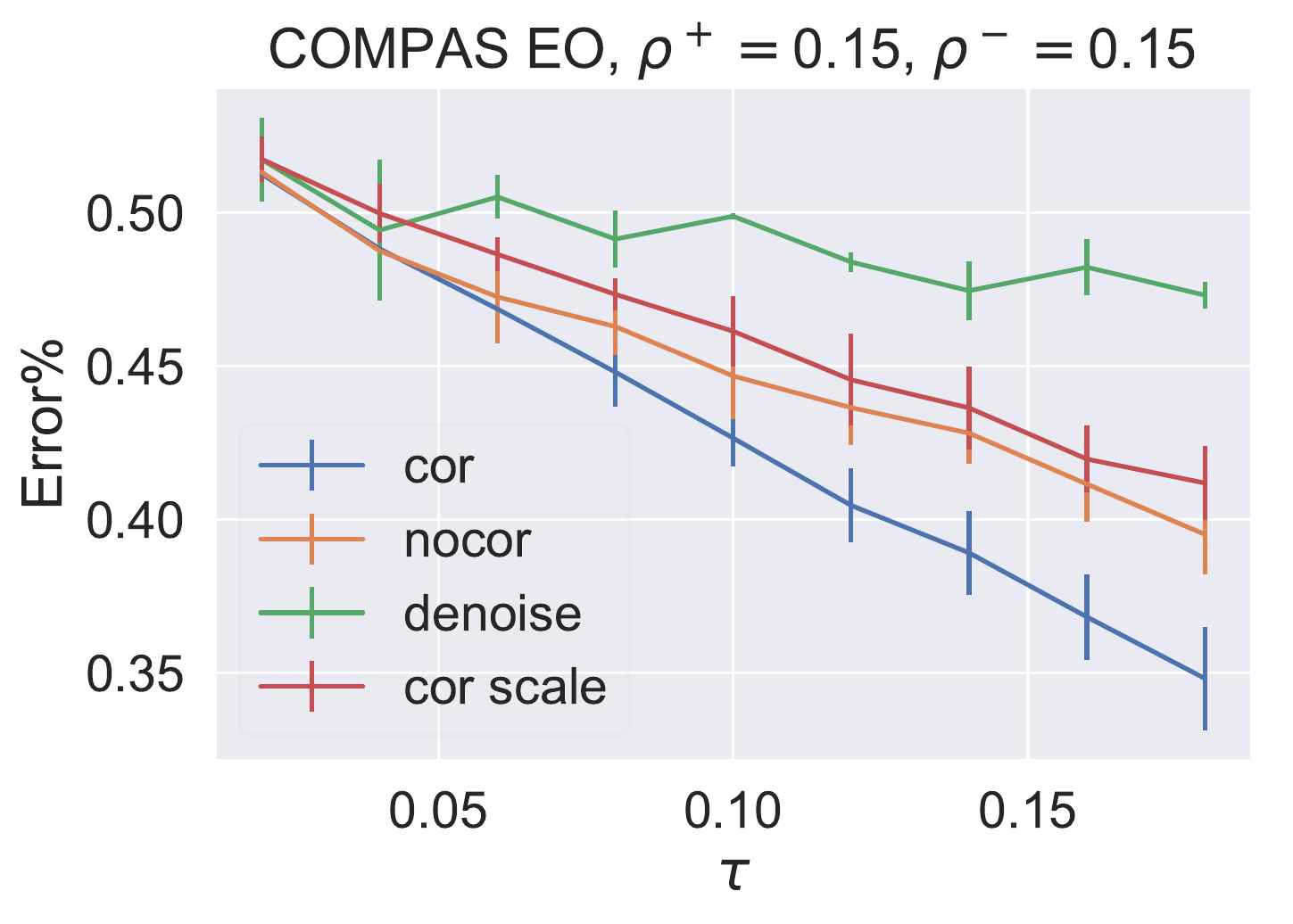}
    \includegraphics[width=0.24\textwidth]{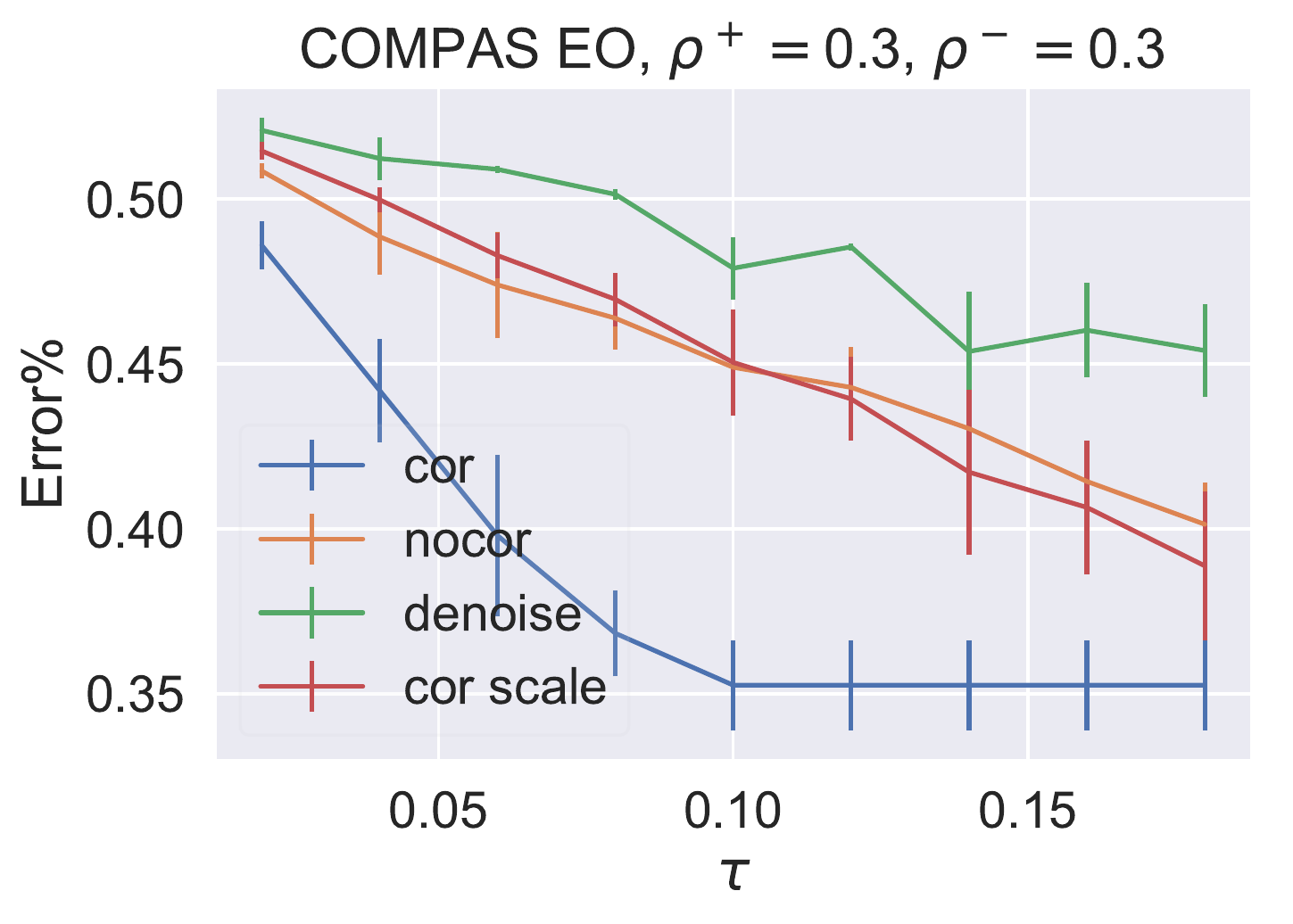}
    
    \includegraphics[width=0.24\textwidth]{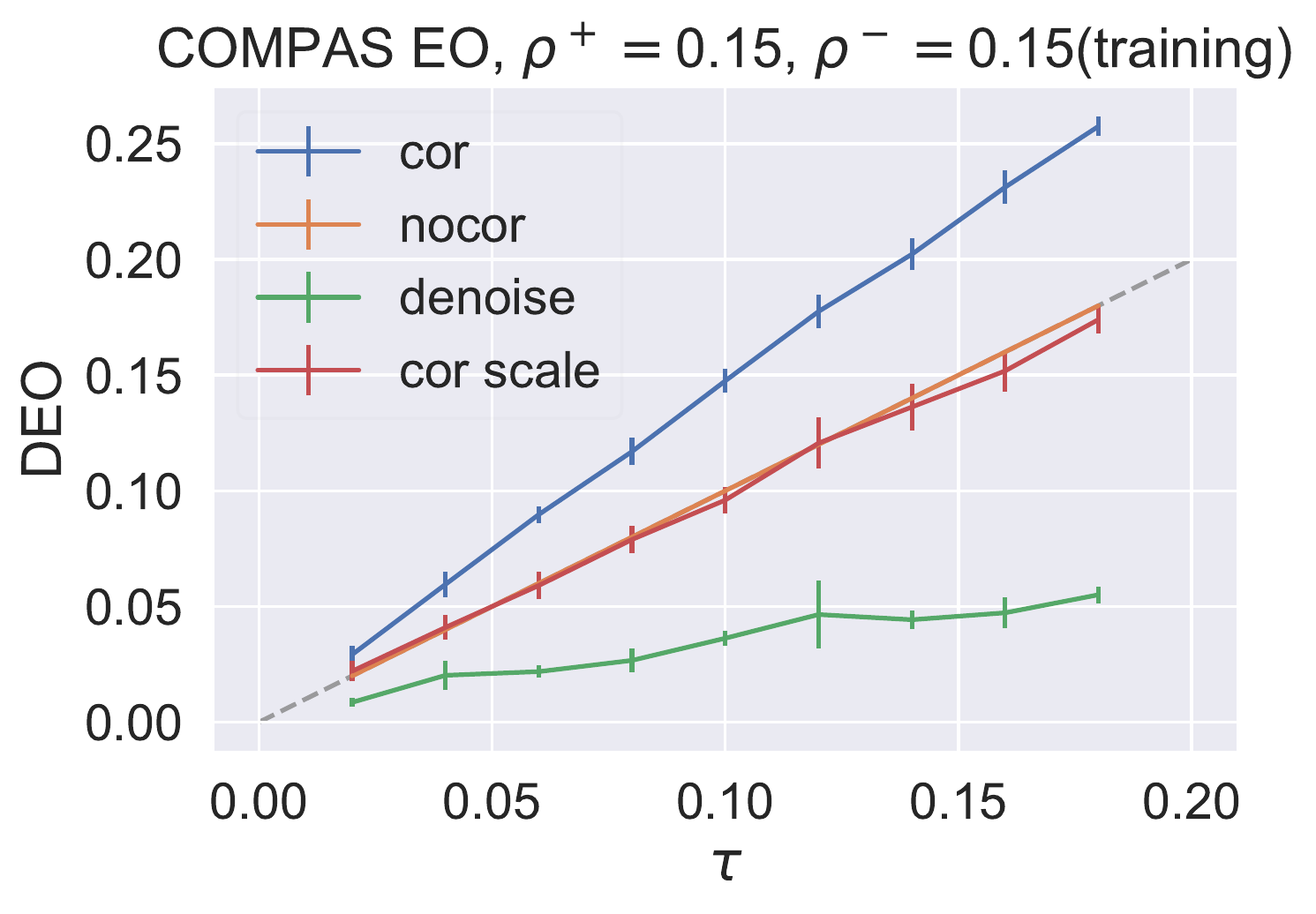}
    \includegraphics[width=0.24\textwidth]{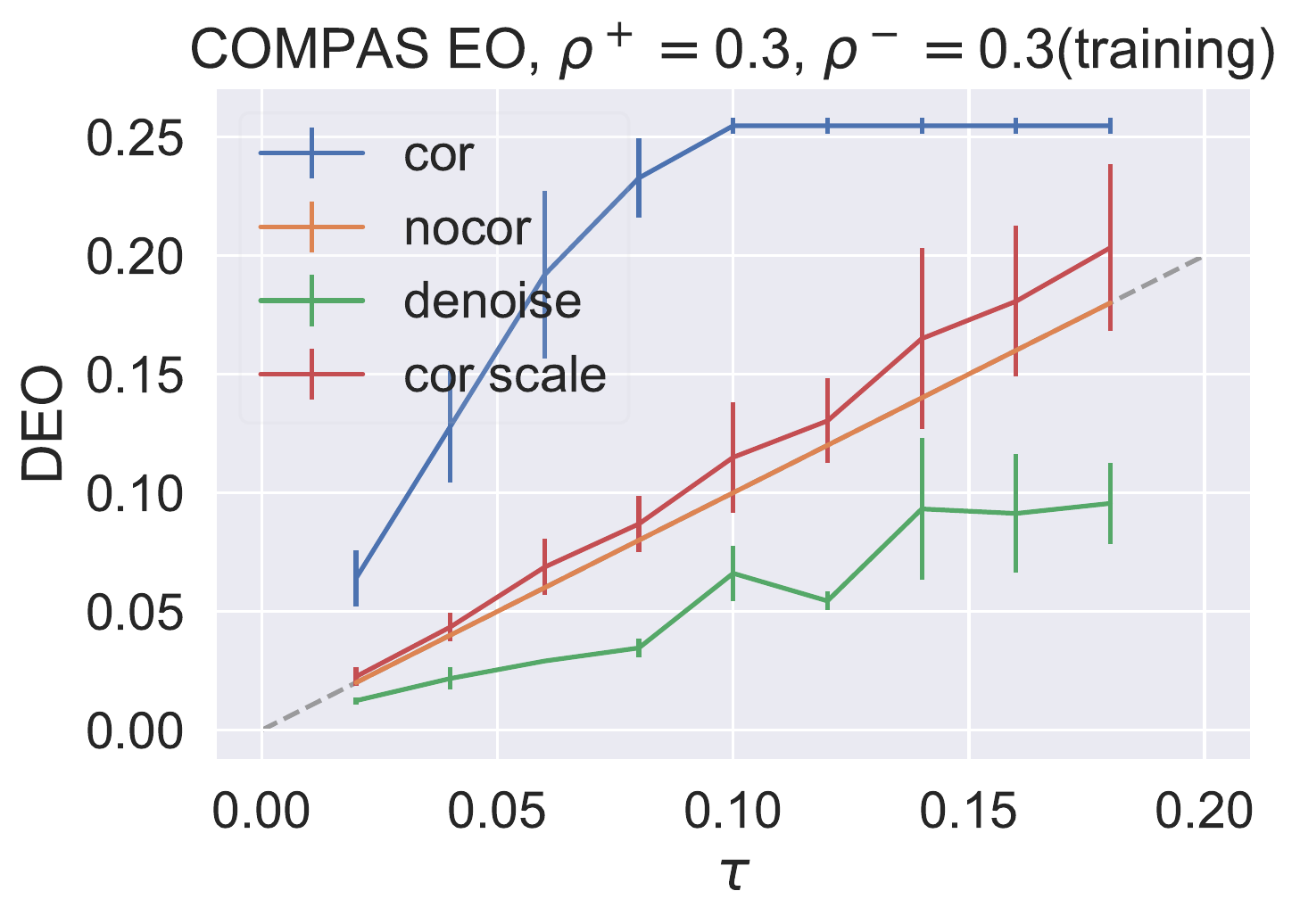}
    \includegraphics[width=0.24\textwidth]{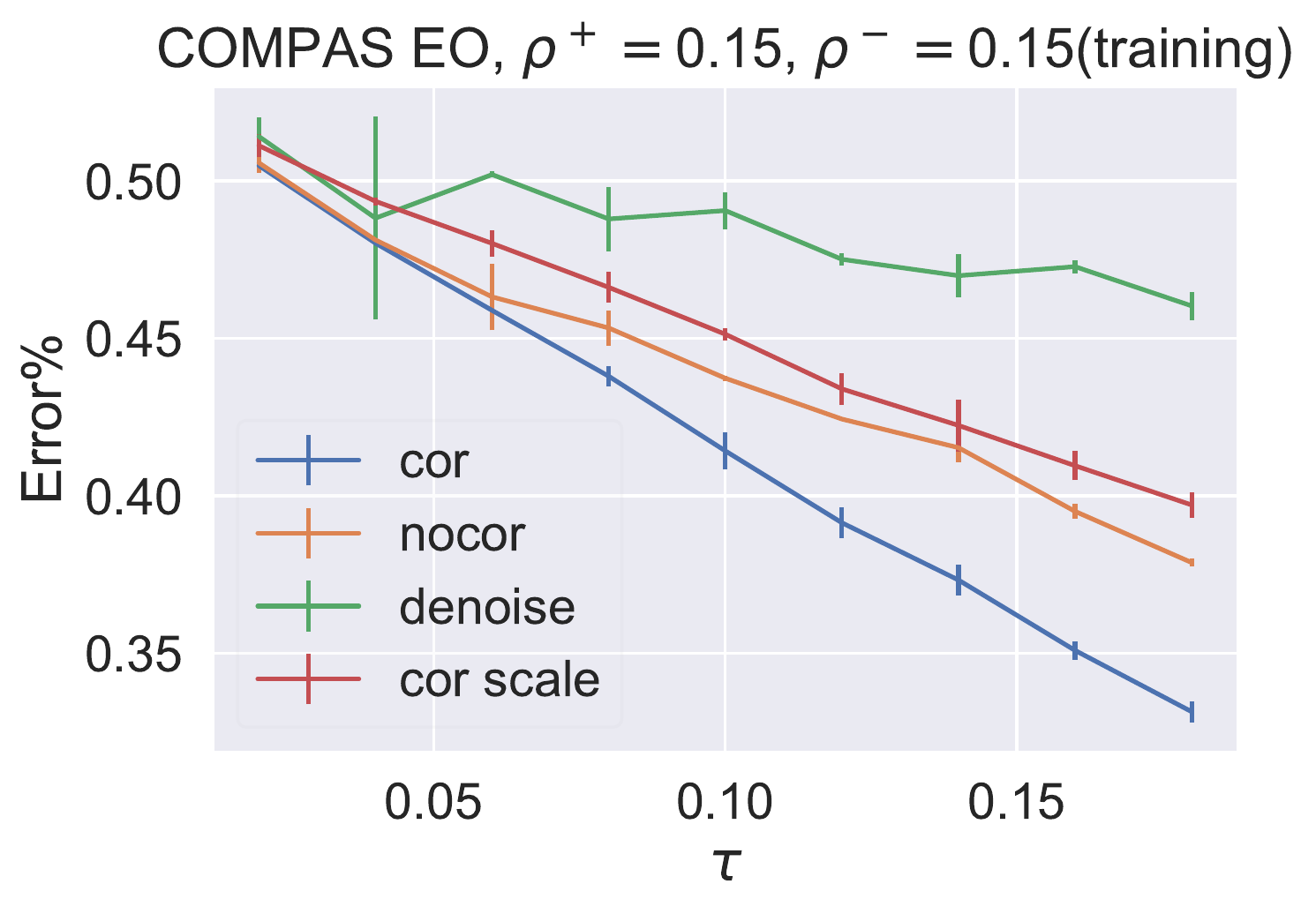}
    \includegraphics[width=0.24\textwidth]{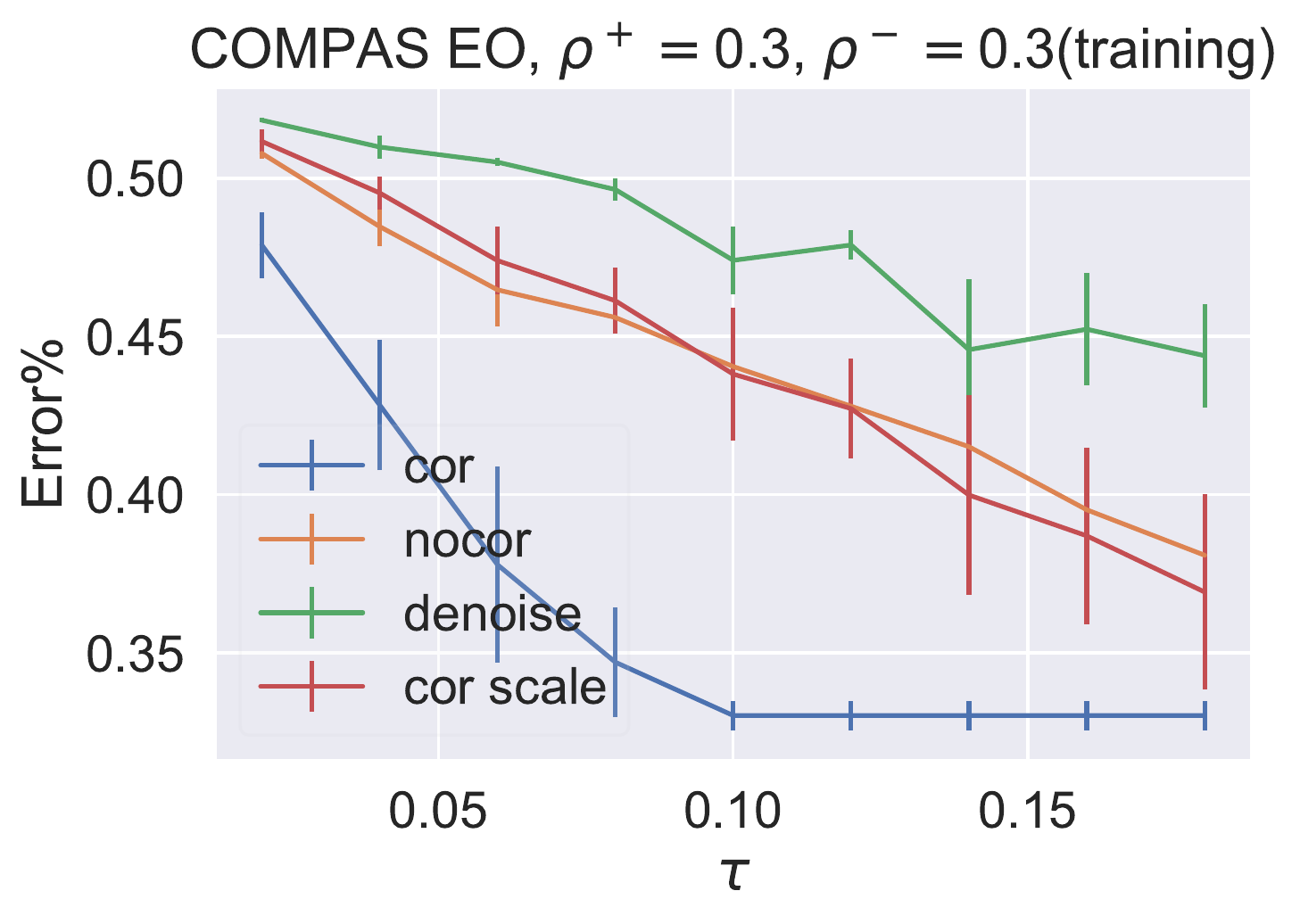}
    
    \caption{(EO)(testing and training) Relationship between input $\tau$ and fairness violation/error on the {\tt COMPAS} dataset.}
    \label{fig:EO_compas}
\end{figure*}

Figures \ref{fig:bank_training} and Figure \ref{fig:bank_testing} show results on the {\tt bank} dataset~\citep{Bank} with the DP and EO constraints respectively. This dataset is a subset of the original Bank Marketing dataset from the UCI repository~\citep{UCI}. The task is to predict if a client subscribes a term deposit. The sensitive attribute is if a person is middle aged(i.e. has an age between 25 and 60). The data comprises 11162 examples and 17 features. Again we note that the trends are the same.

\begin{figure*}[h]
    \centering
    \includegraphics[width=0.24\textwidth]{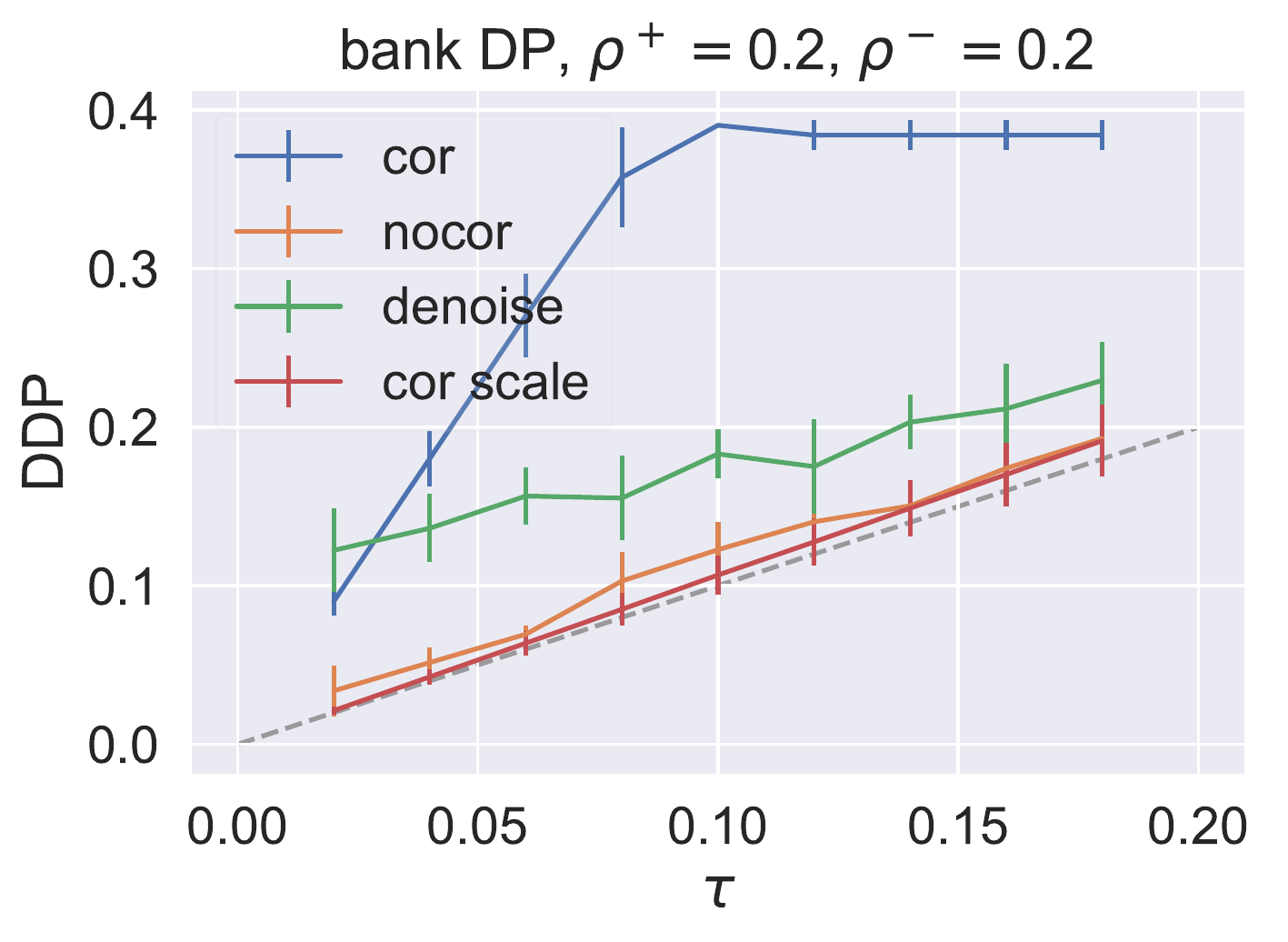}
    \includegraphics[width=0.24\textwidth]{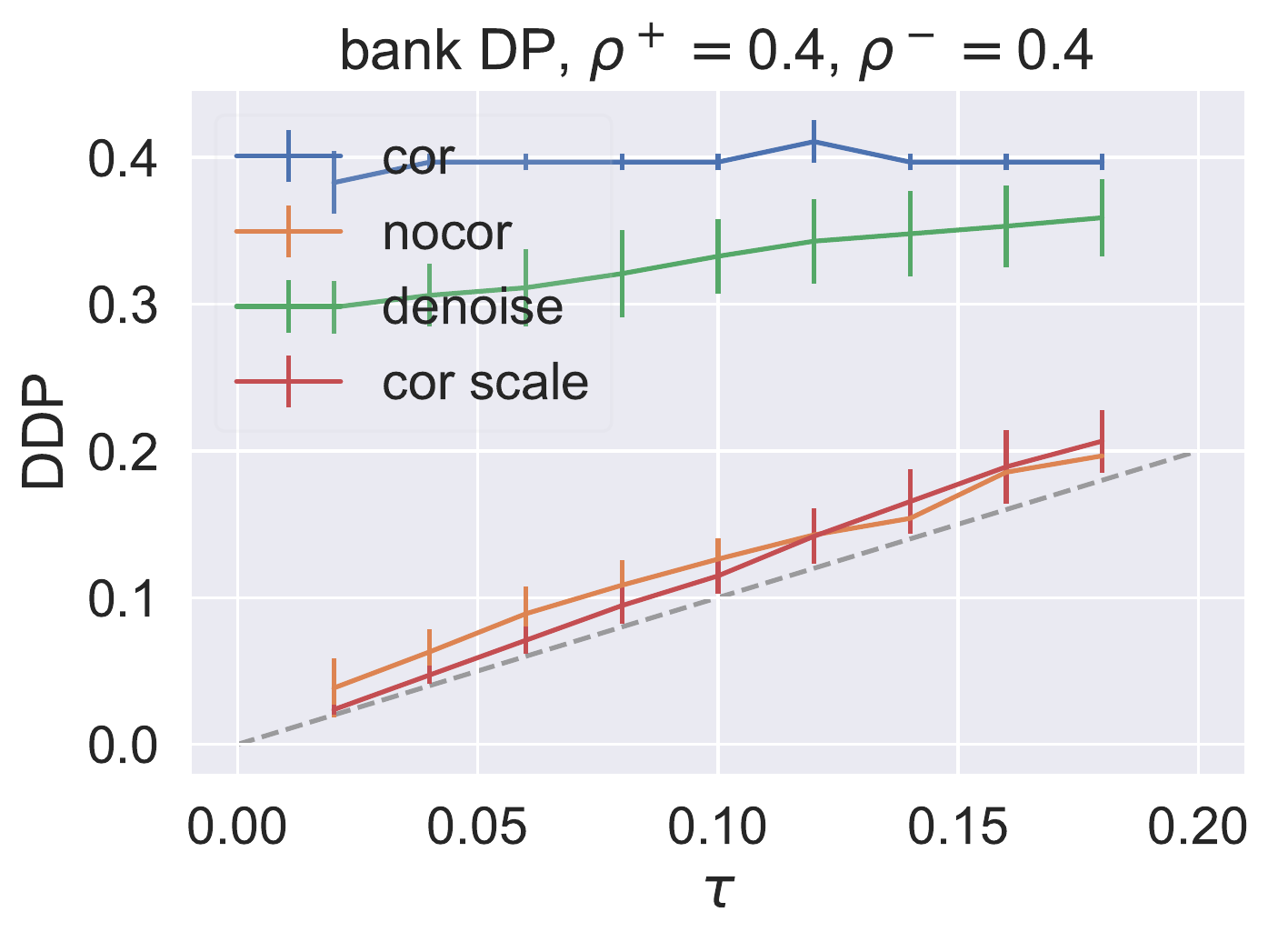}
    \includegraphics[width=0.24\textwidth]{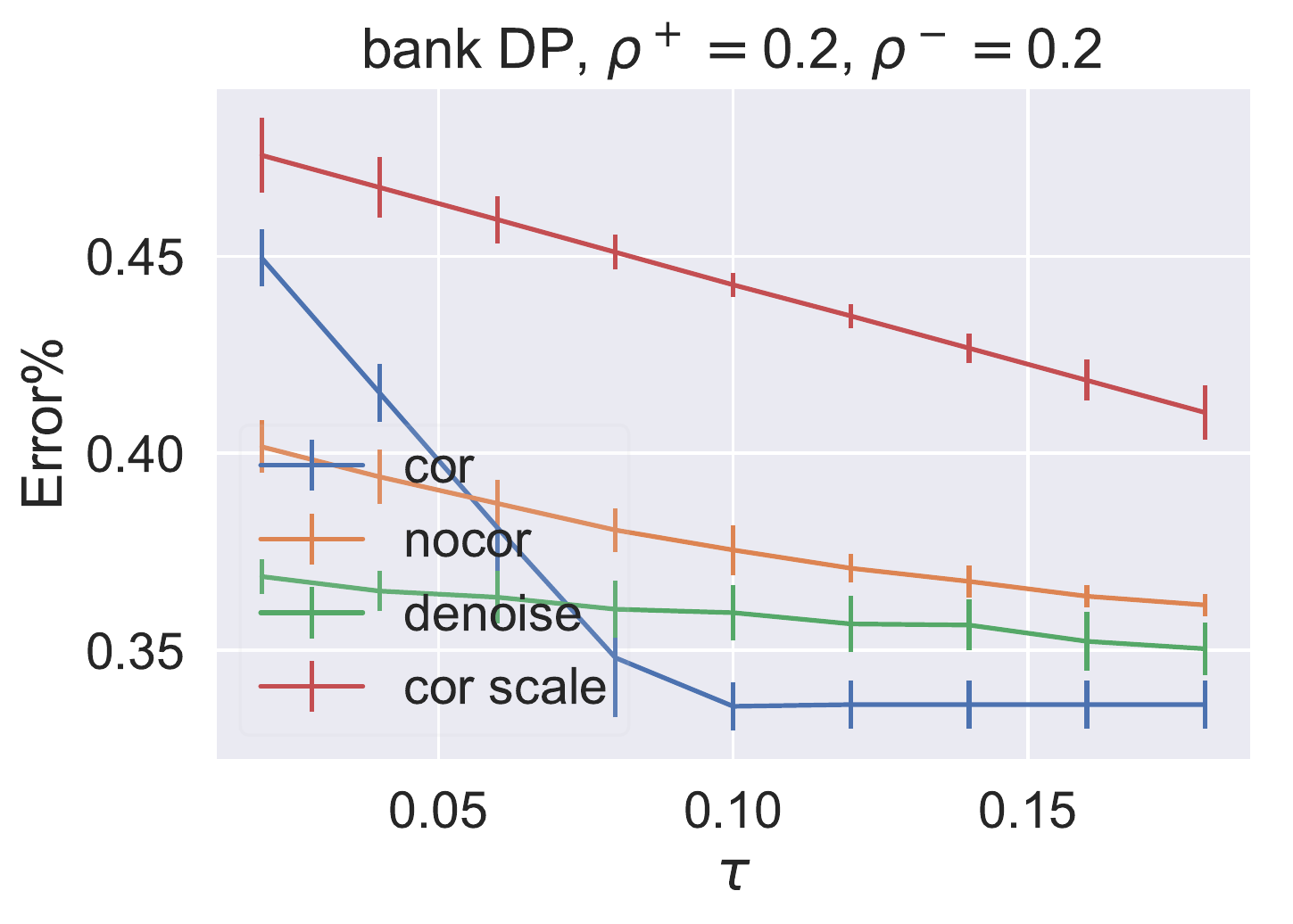}
    \includegraphics[width=0.24\textwidth]{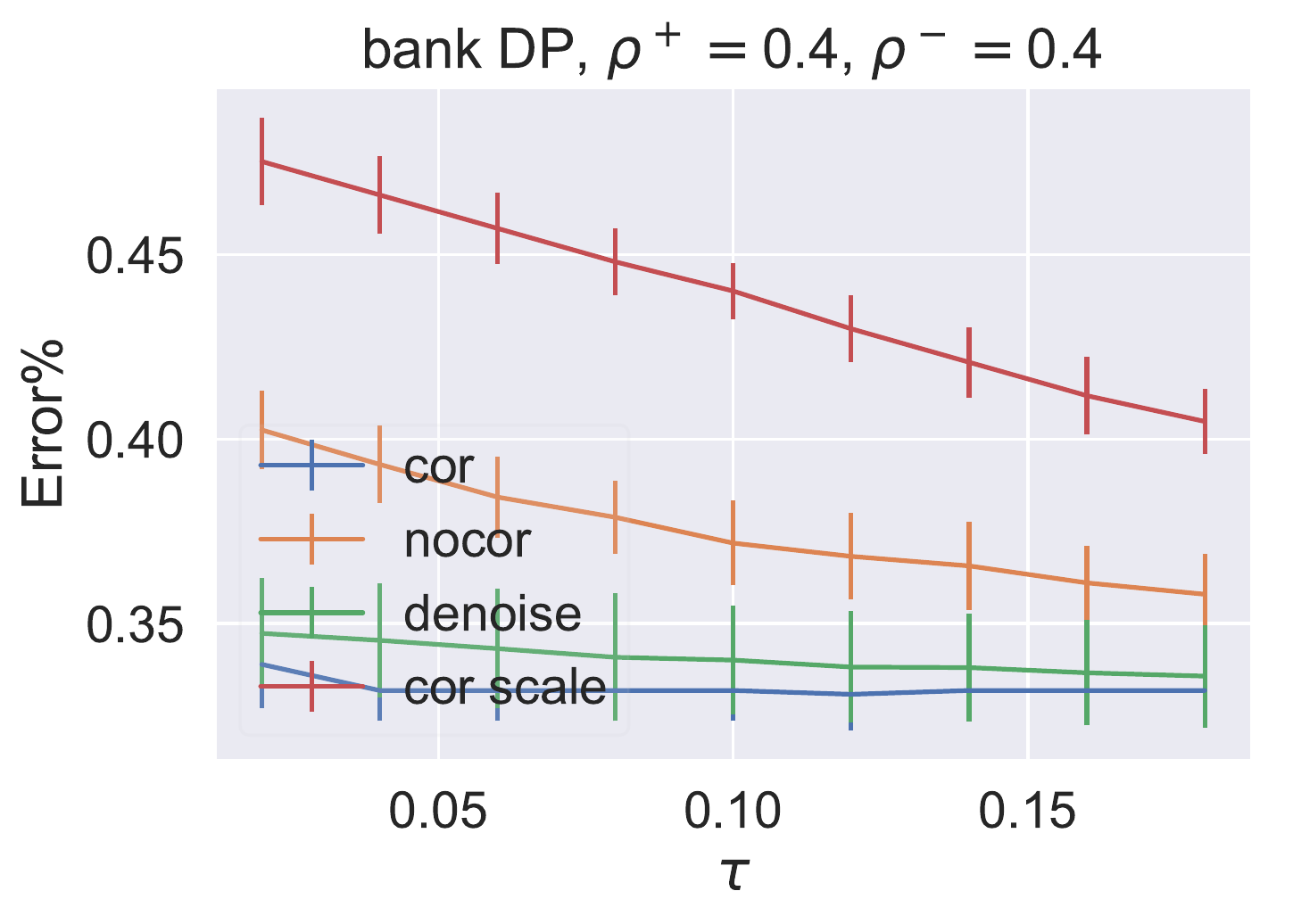}
    
    \includegraphics[width=0.24\textwidth]{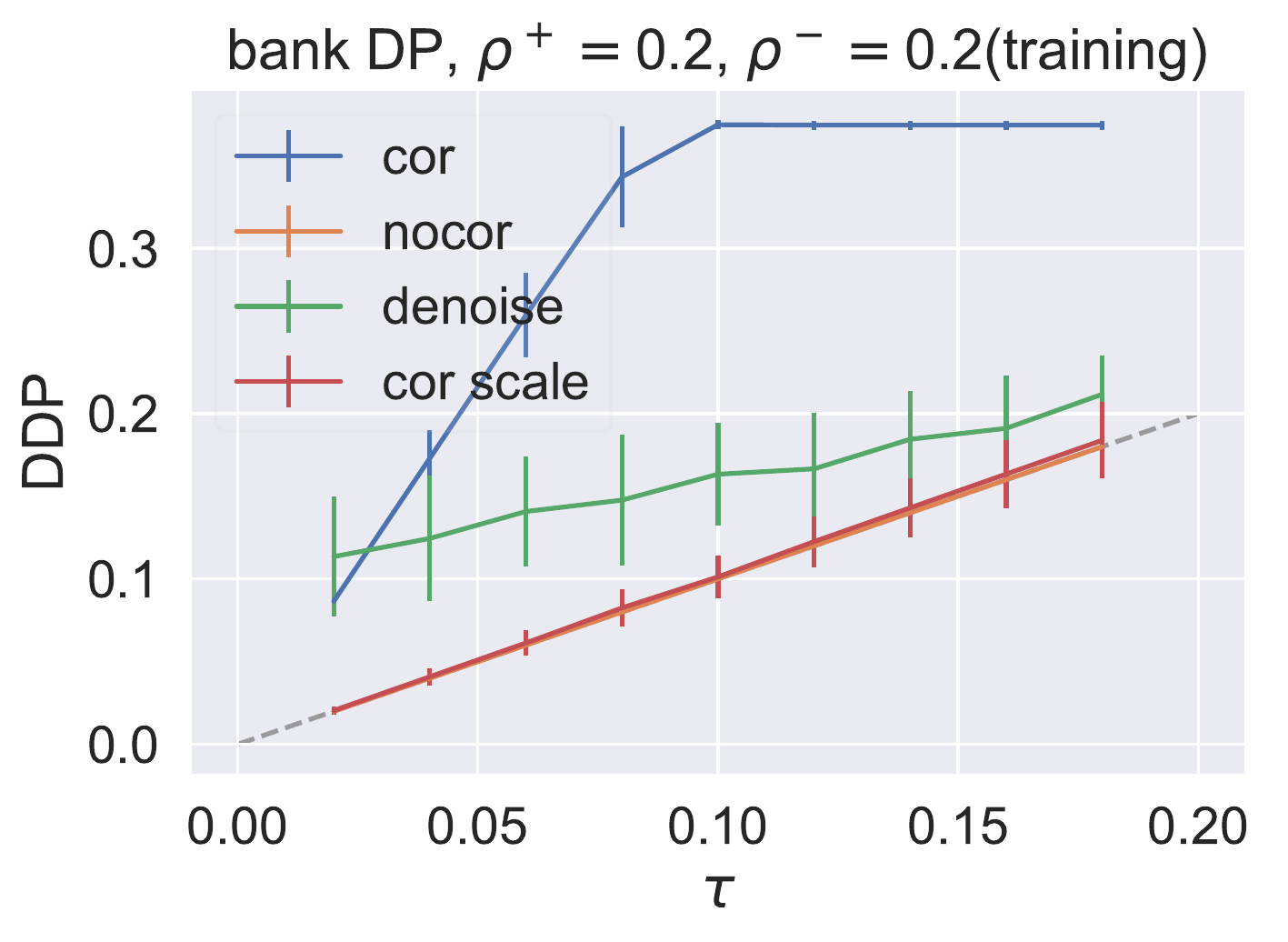}
    \includegraphics[width=0.24\textwidth]{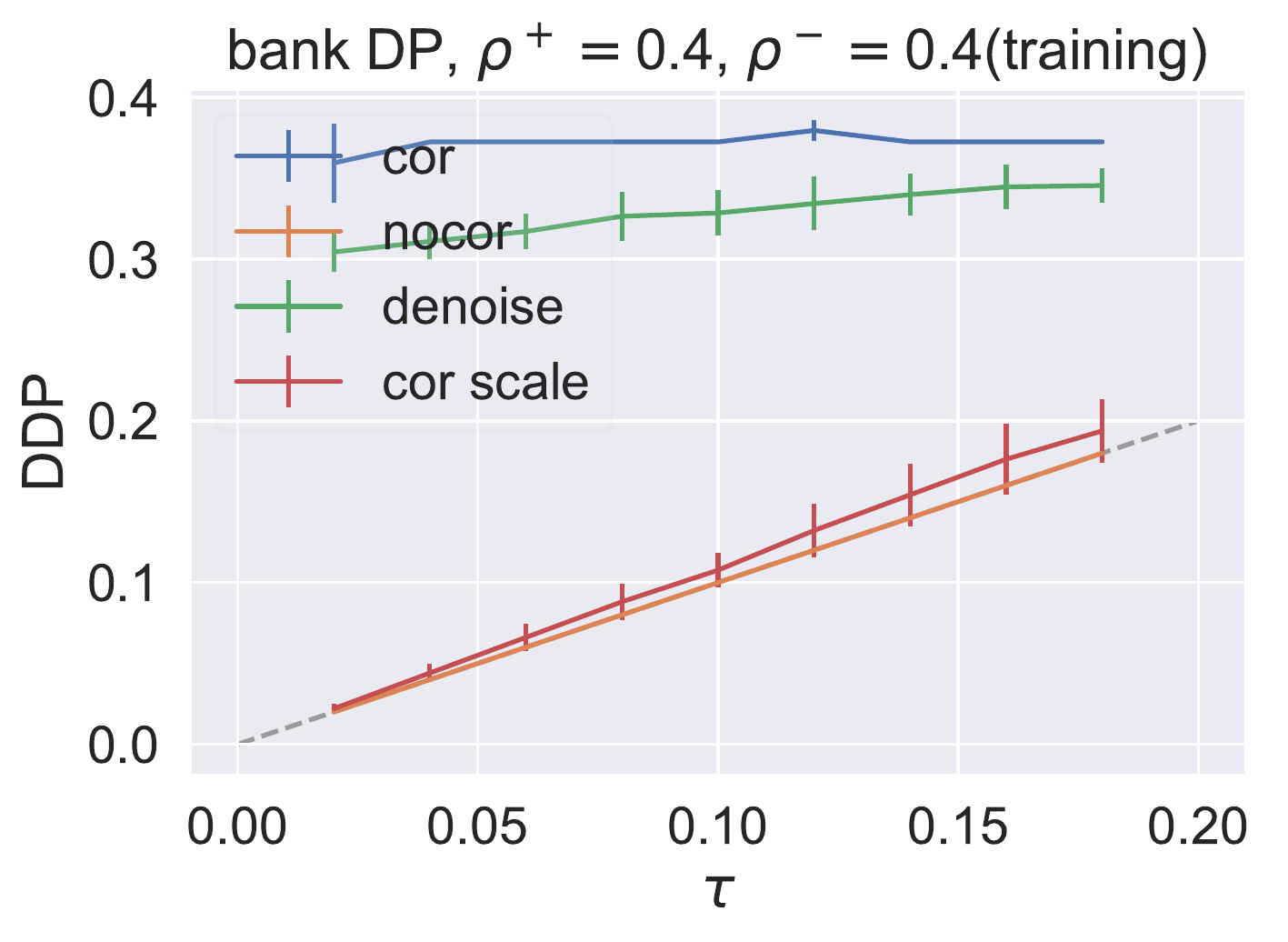}
    \includegraphics[width=0.24\textwidth]{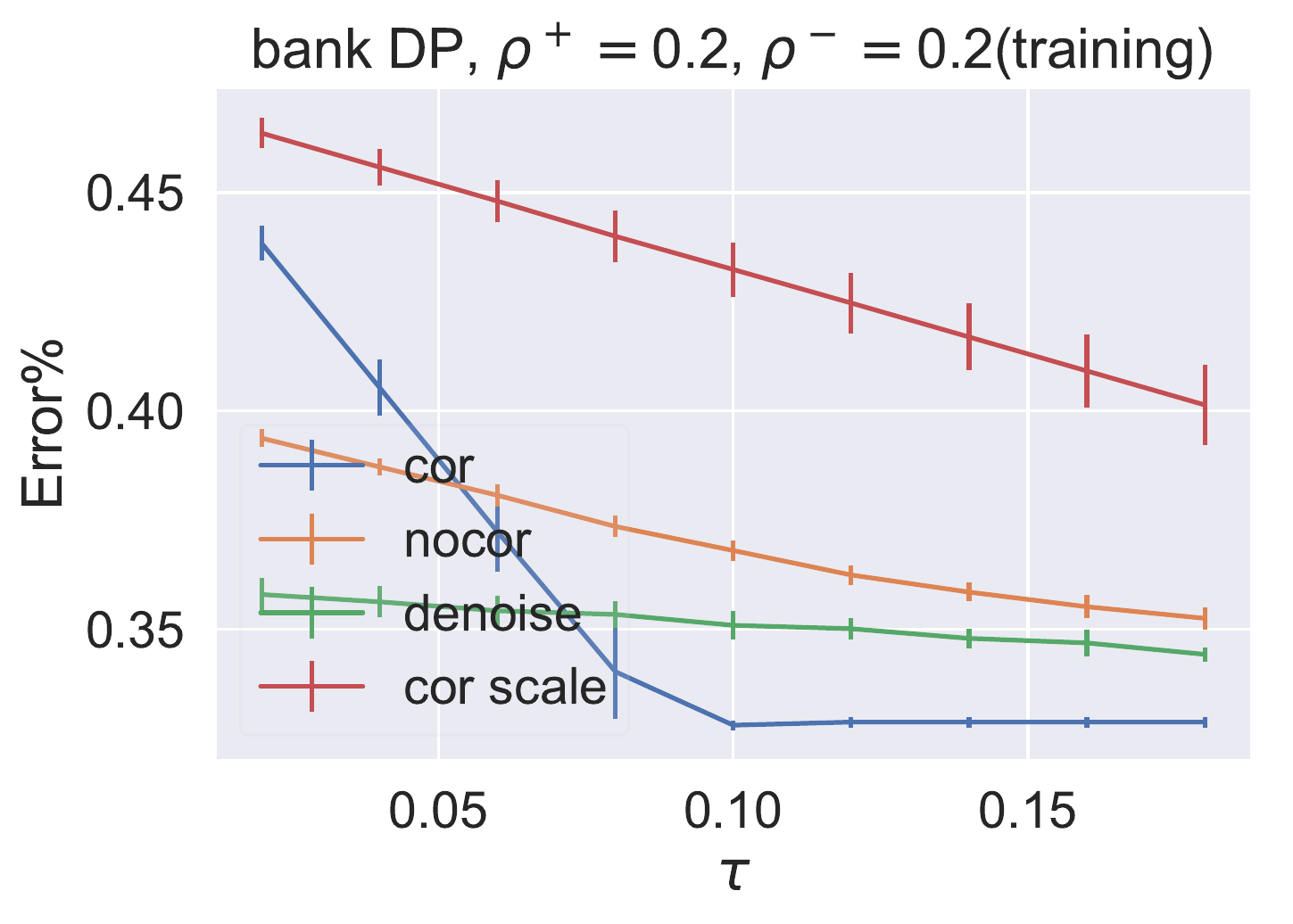}
    \includegraphics[width=0.24\textwidth]{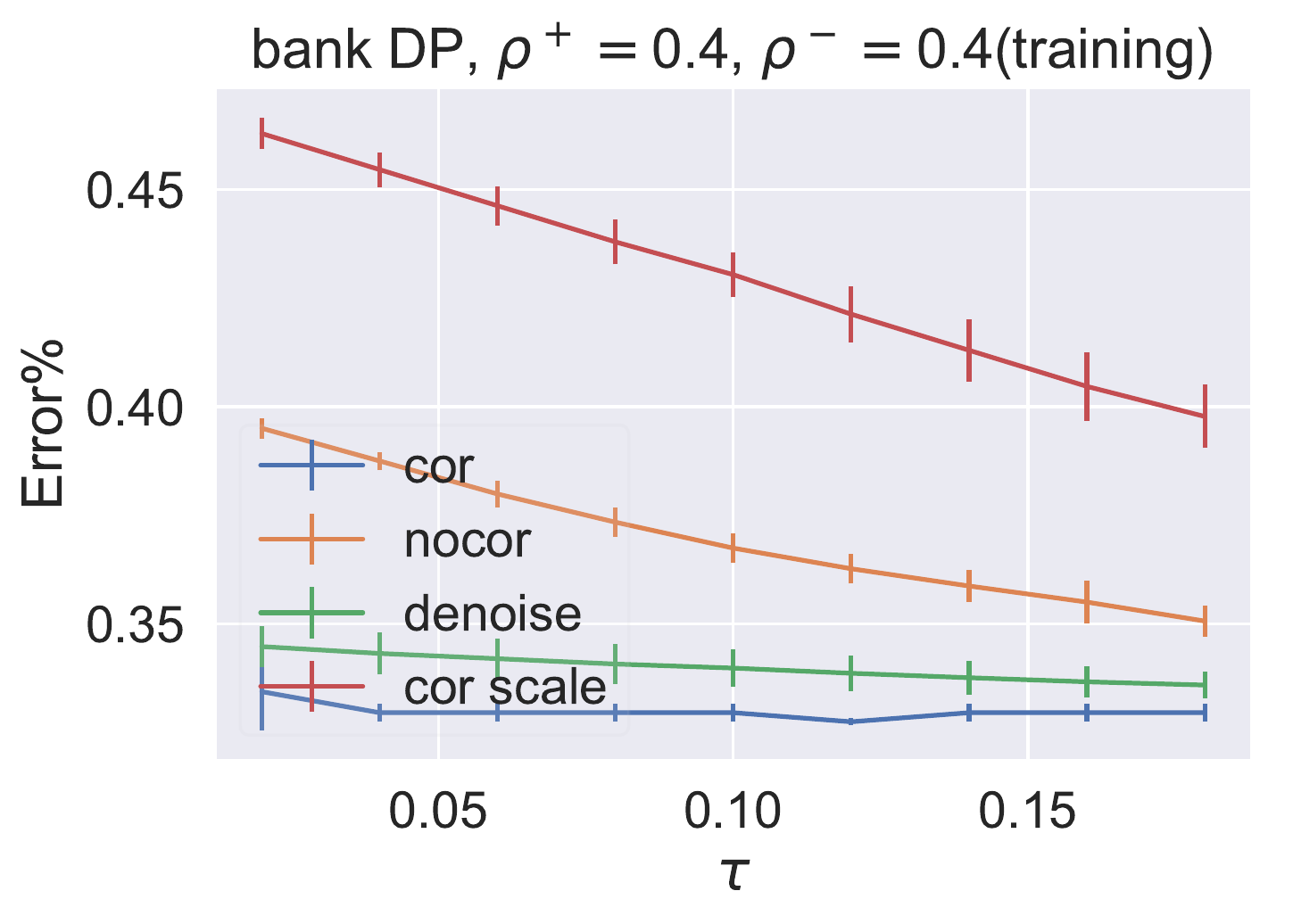}
    
    \caption{(DP)(testing and training) Relationship between input $\tau$ and fairness violation/error on the {\tt bank} dataset.}
    \label{fig:bank_training}
\end{figure*}

\begin{figure*}[h]
    \centering
    \includegraphics[width=0.24\textwidth]{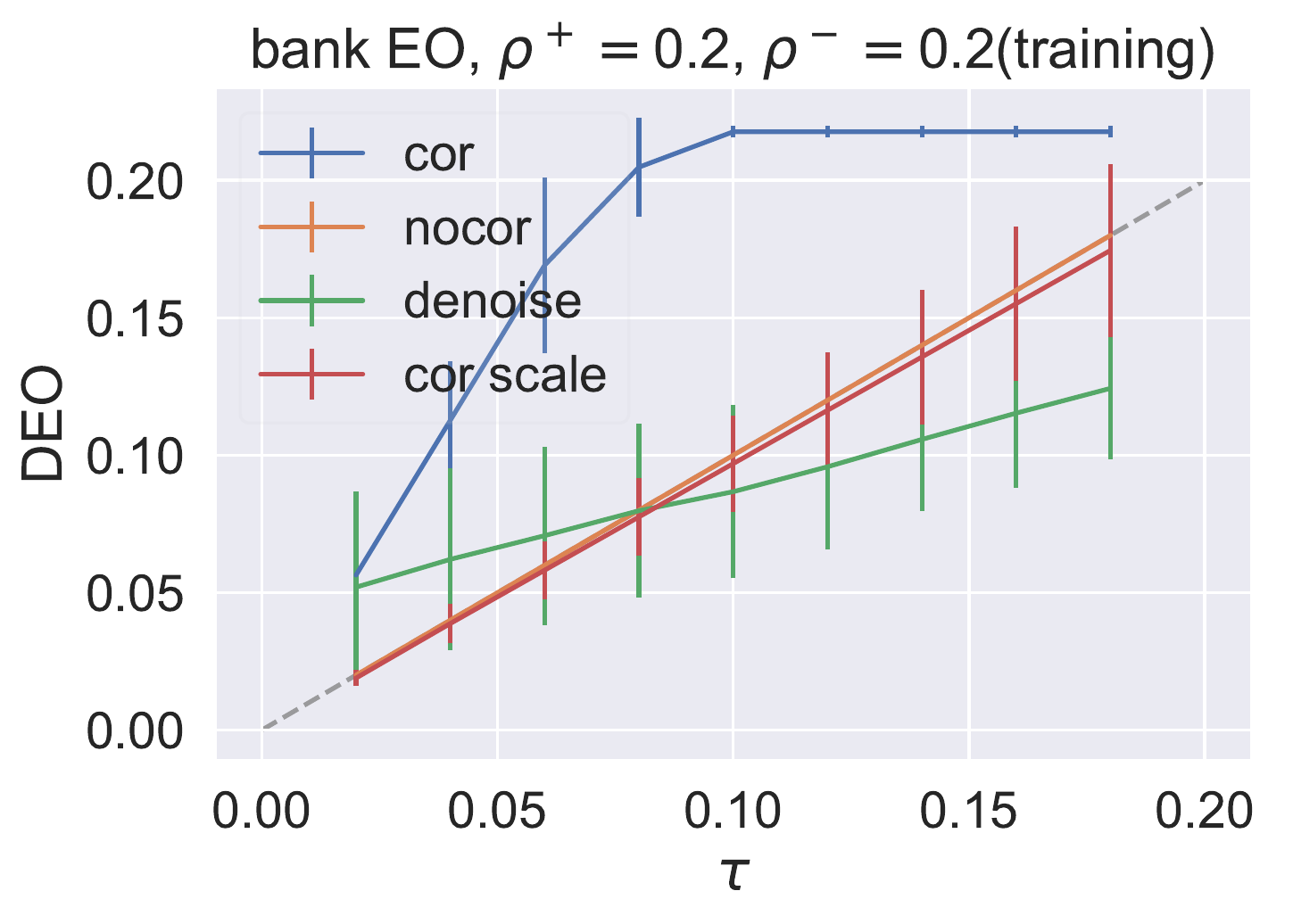}
    \includegraphics[width=0.24\textwidth]{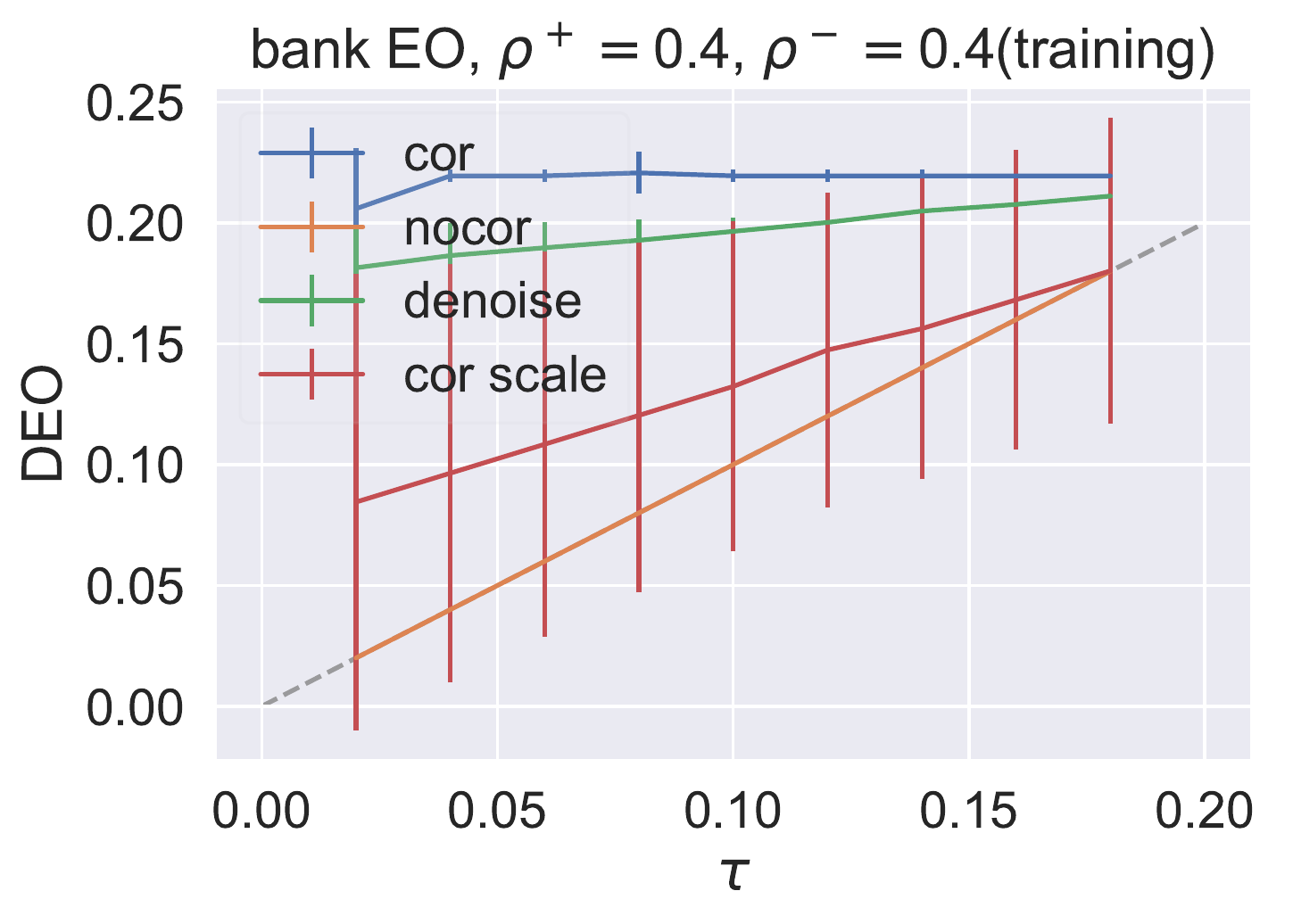}
    \includegraphics[width=0.24\textwidth]{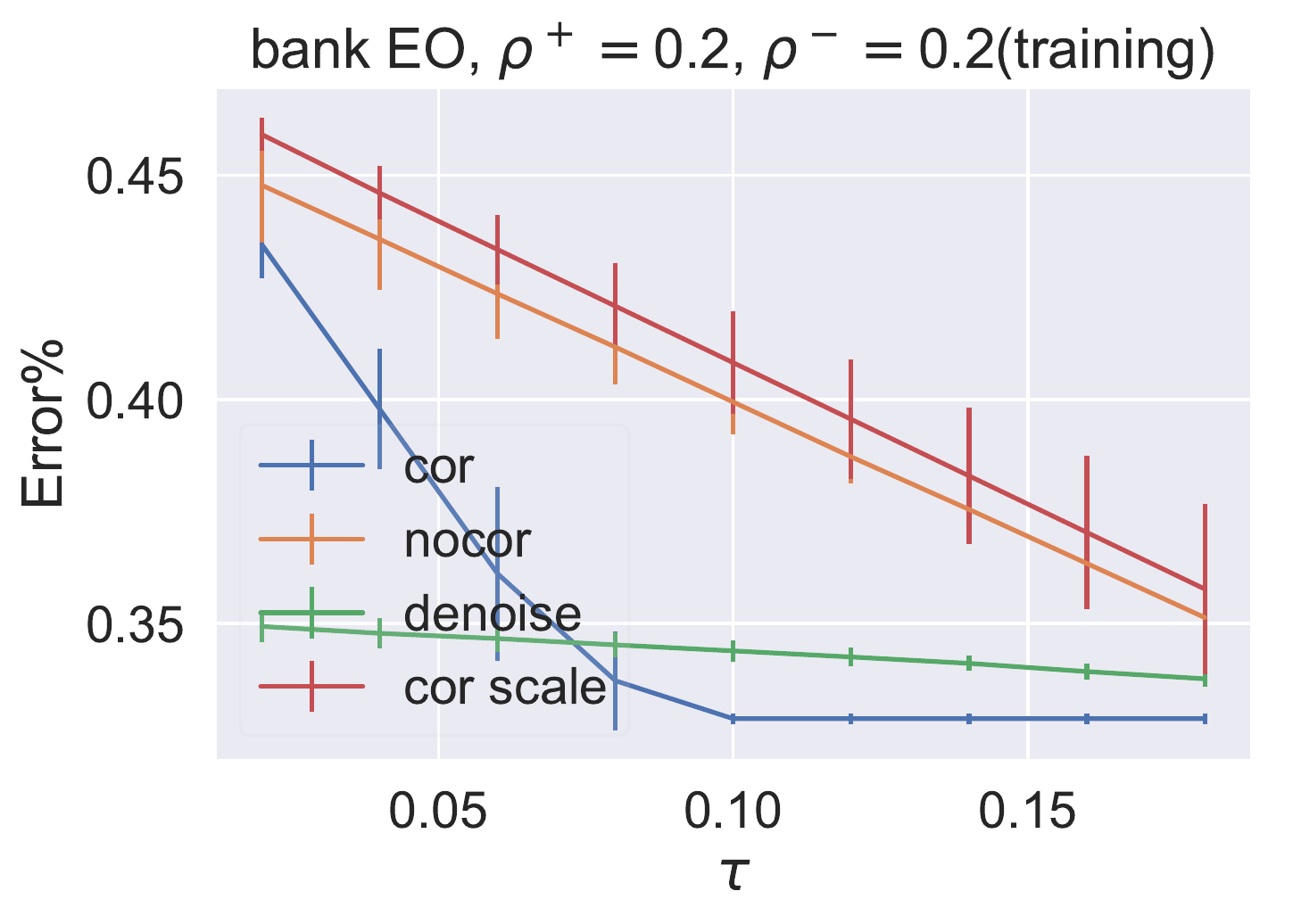}
    \includegraphics[width=0.24\textwidth]{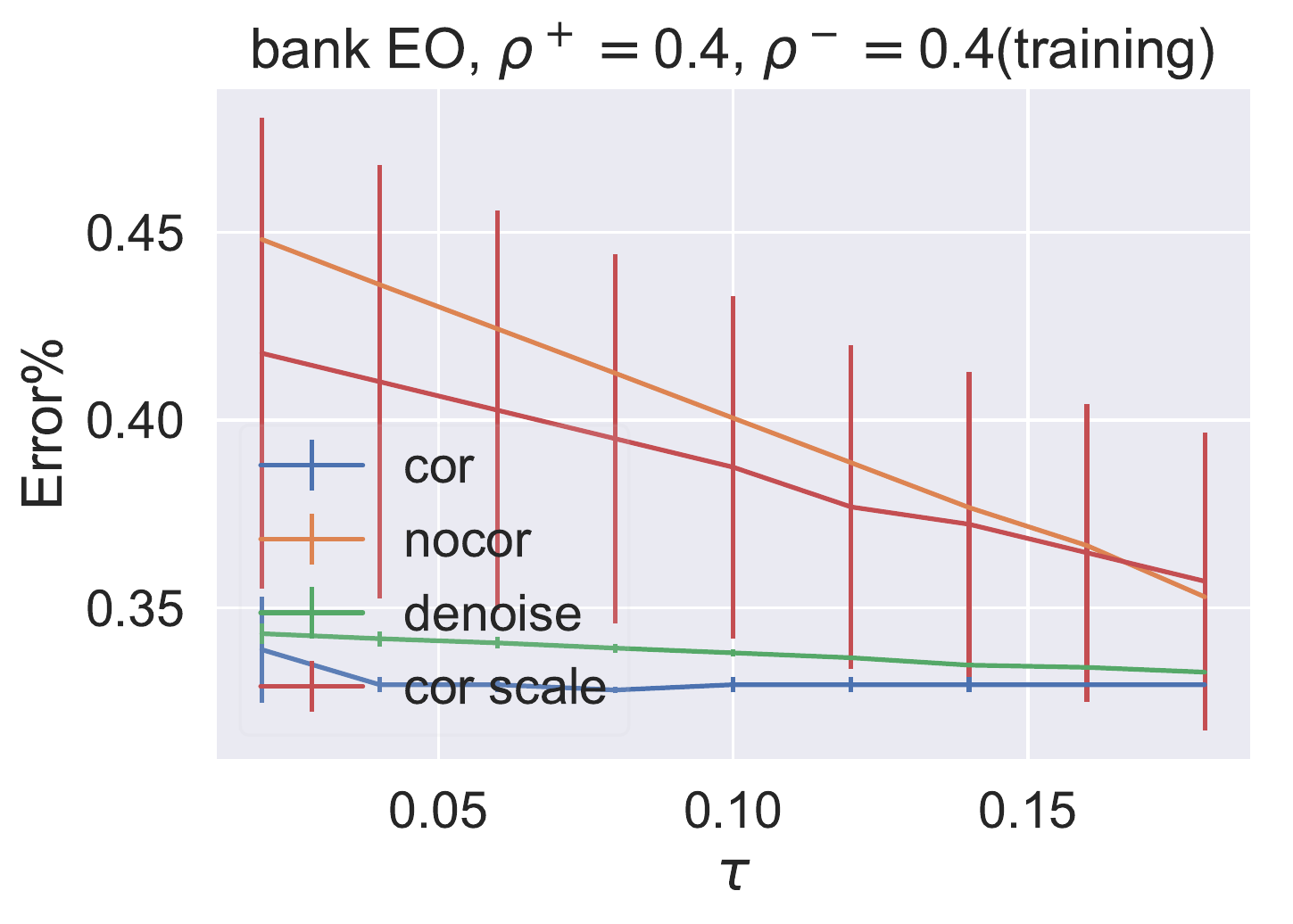}
    
    \includegraphics[width=0.24\textwidth]{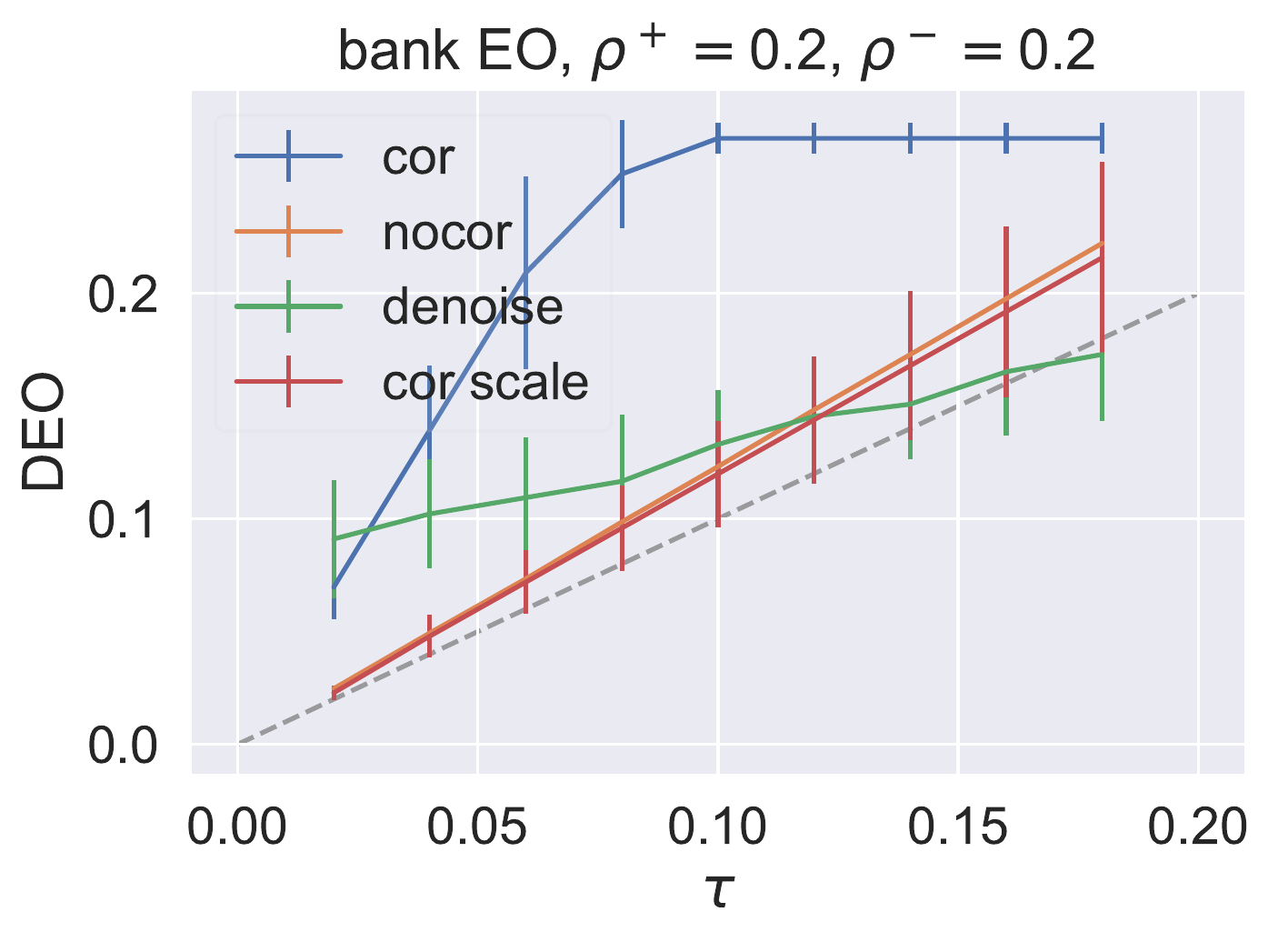}
    \includegraphics[width=0.24\textwidth]{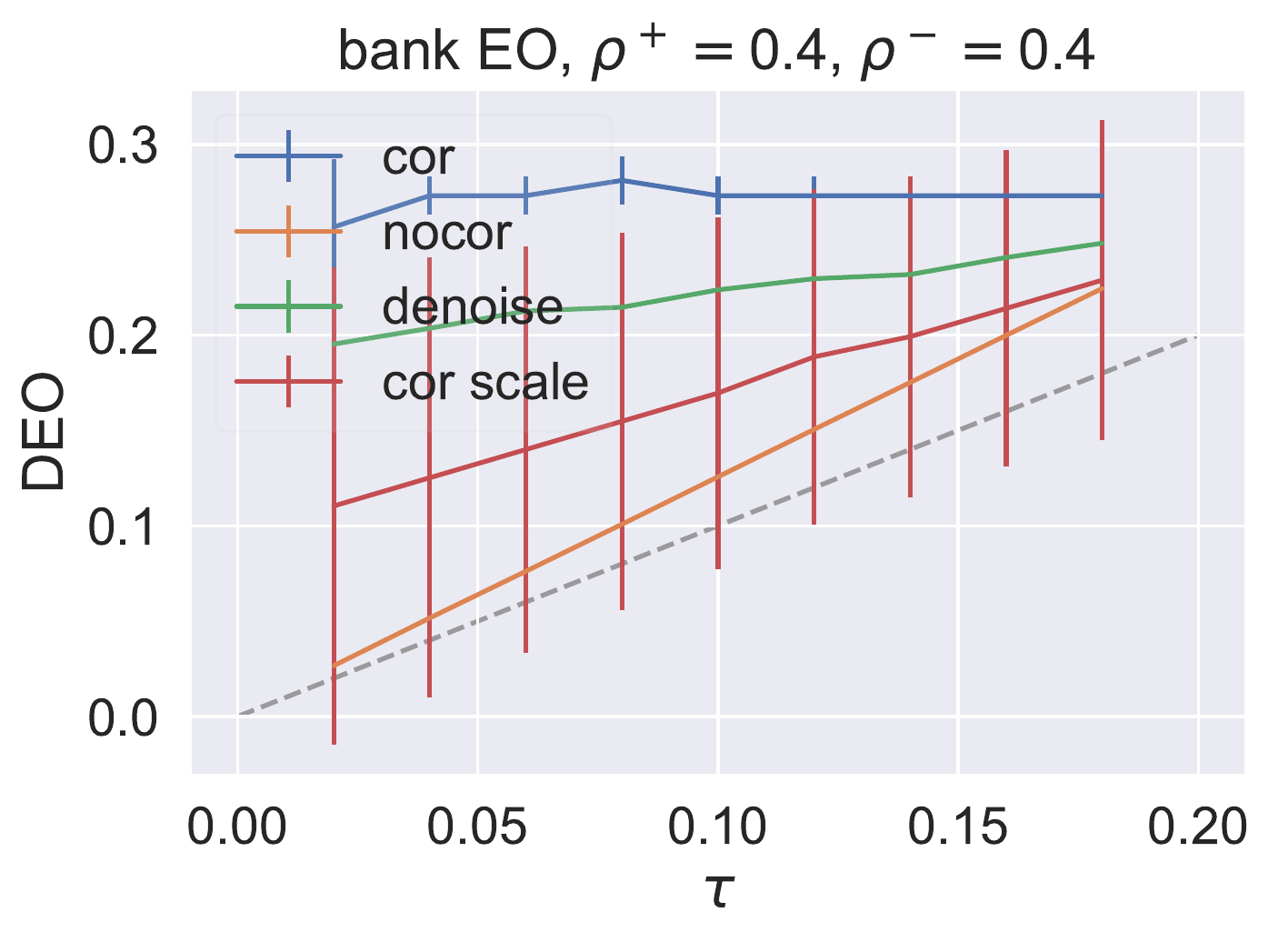}
    \includegraphics[width=0.24\textwidth]{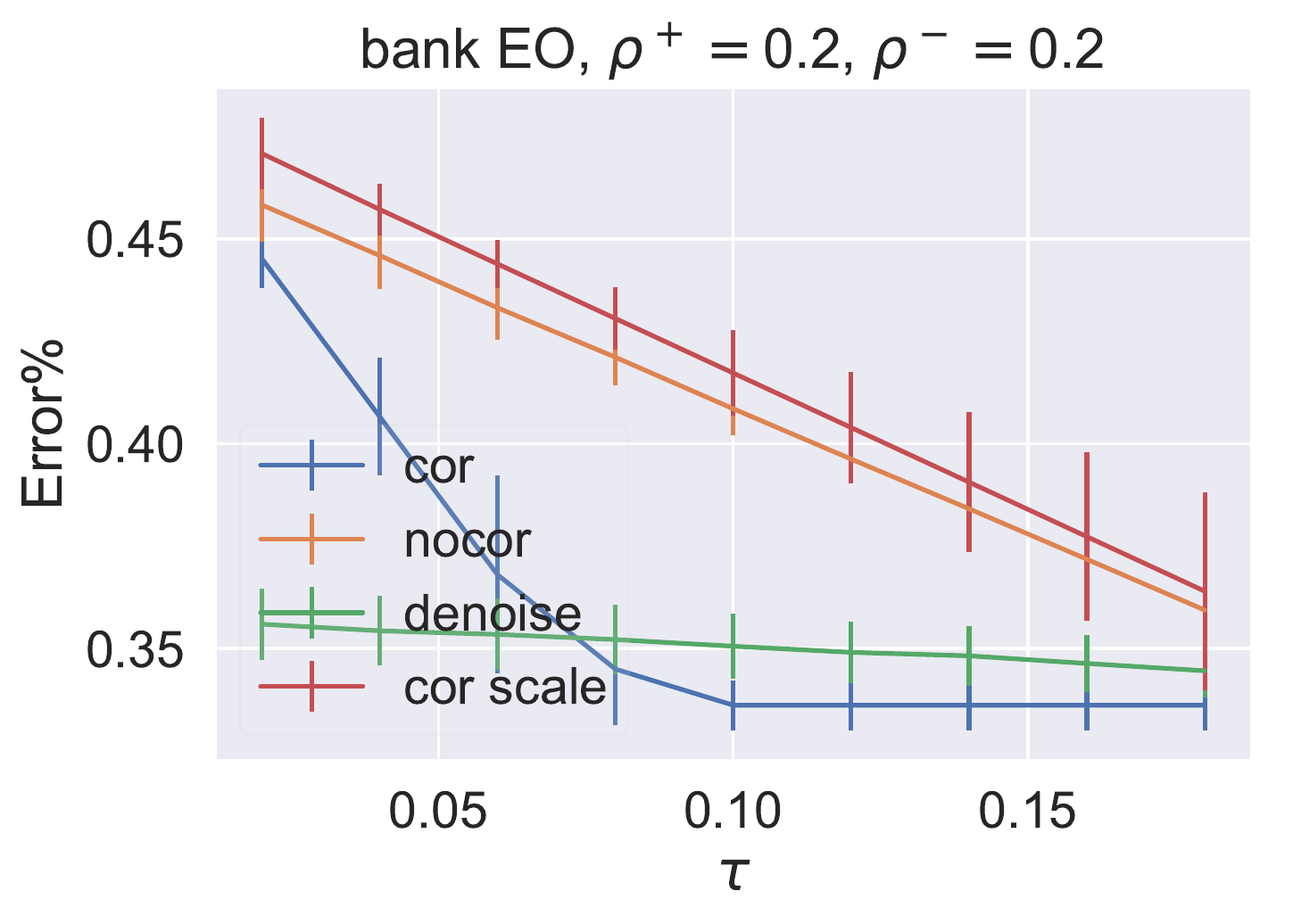}
    \includegraphics[width=0.24\textwidth]{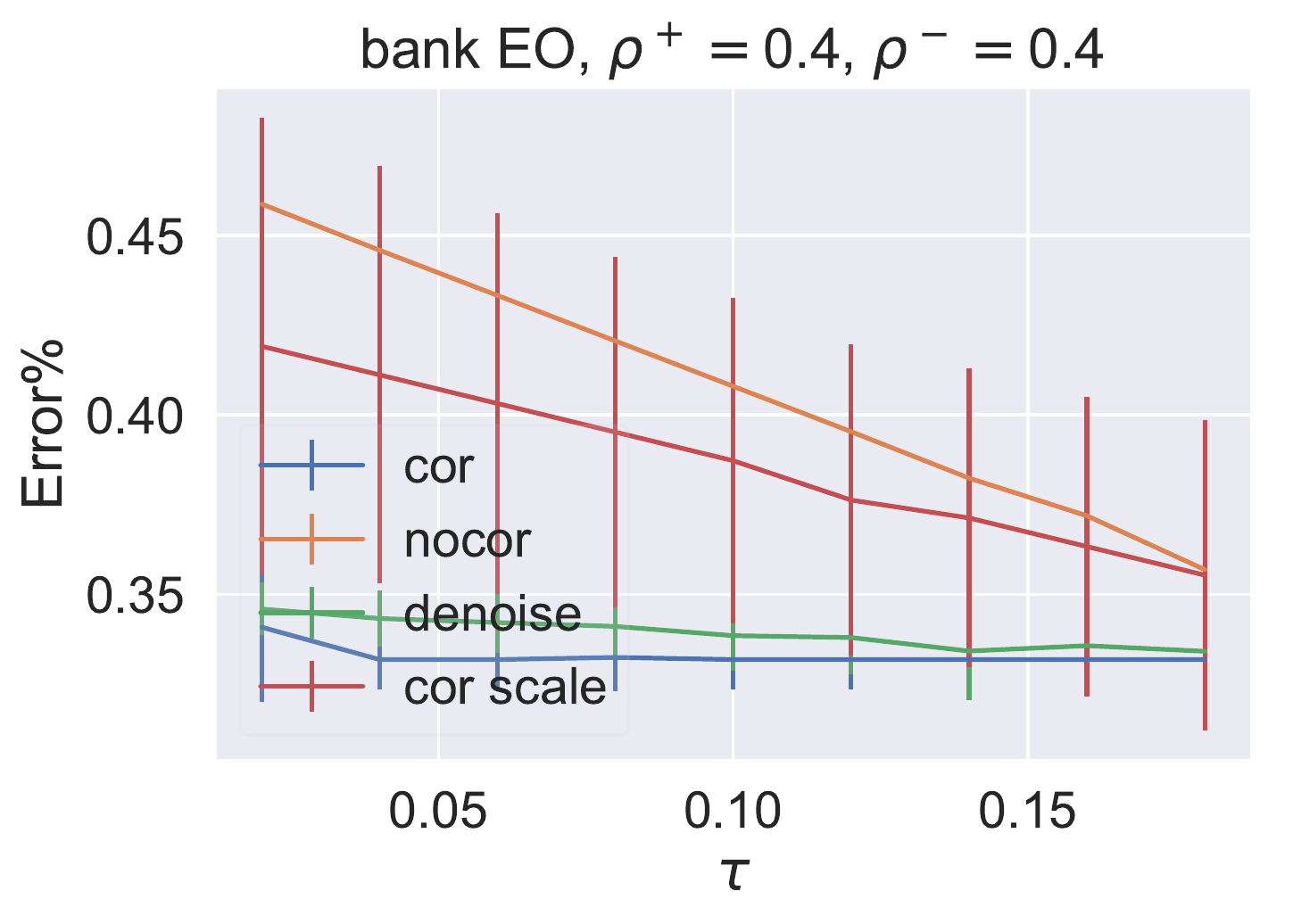}
    
    \caption{(EO)(testing and training) Relationship between input $\tau$ and fairness violation/error on the {\tt bank} dataset.}
    \label{fig:bank_testing}
\end{figure*}

\clearpage

\section{More results for the PU case study}
\label{appendix:pu}
In this section we give some additional results for the PU case study. 

%\aditya{Placeholder, need to expand.}
{Figure~\ref{fig:law_full} shows additional results on {\tt law school} for different noise levels $\rho^+ = \rho^- \in \{0.2, 0.4\}$.} {Figure~\ref{fig:law_est_err_diff_tau} shows additional results under noise rate estimation on {\tt law school} for different upper bound of fairness violation: $\tau \in \{0.1, 0.3\}$.}

\begin{figure*}[!h]
    \centering
    \includegraphics[width=0.24\textwidth]{img_pu_law/{disp_test_law,0.0,0.2,1.0,DP,Agarwal,3,False}.pdf}
    \includegraphics[width=0.24\textwidth]{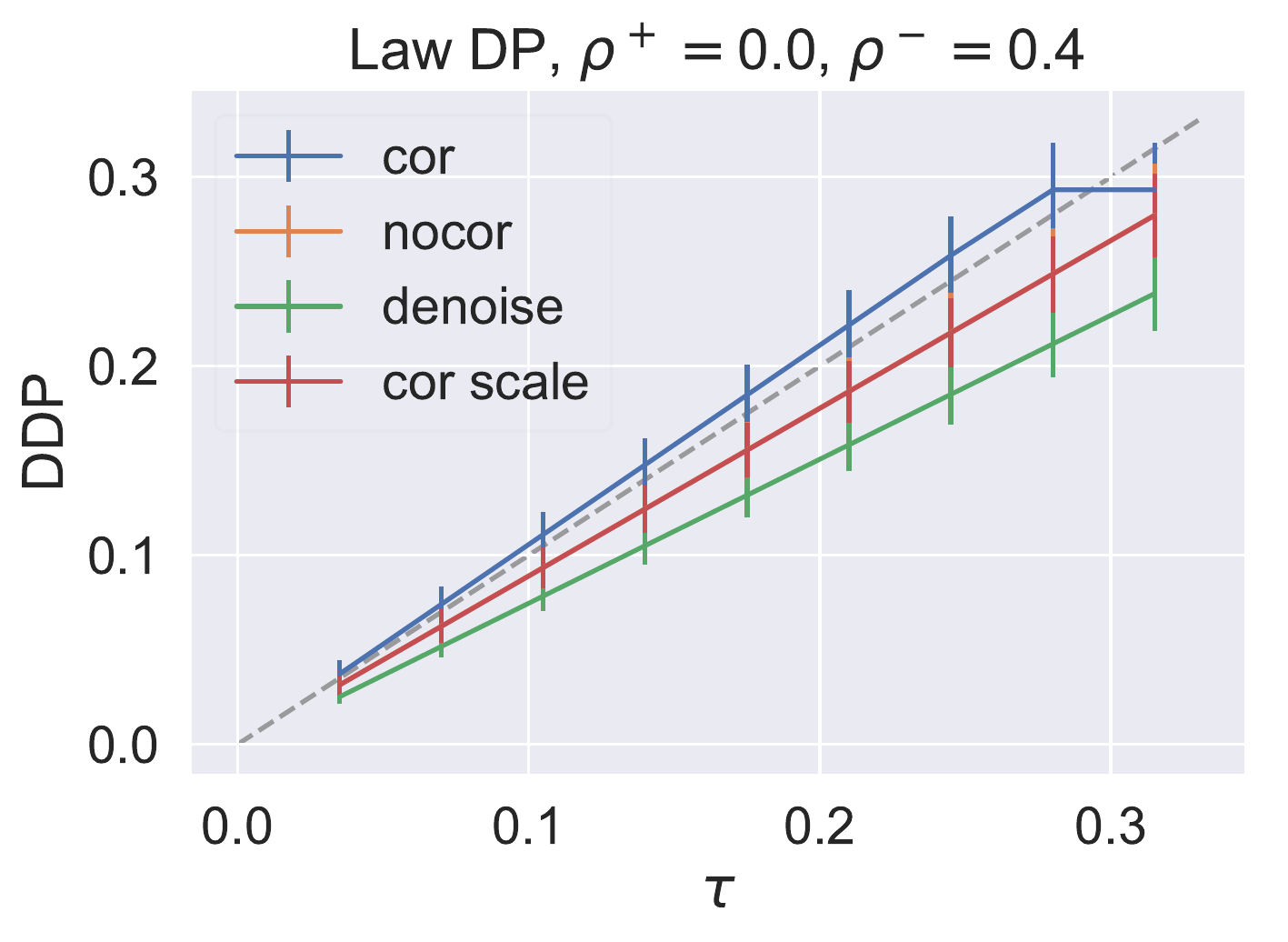}    
    \includegraphics[width=0.24\textwidth]{img_pu_law/{error_test_law,0.0,0.2,1.0,DP,Agarwal,3,False}.pdf}
    \includegraphics[width=0.24\textwidth]{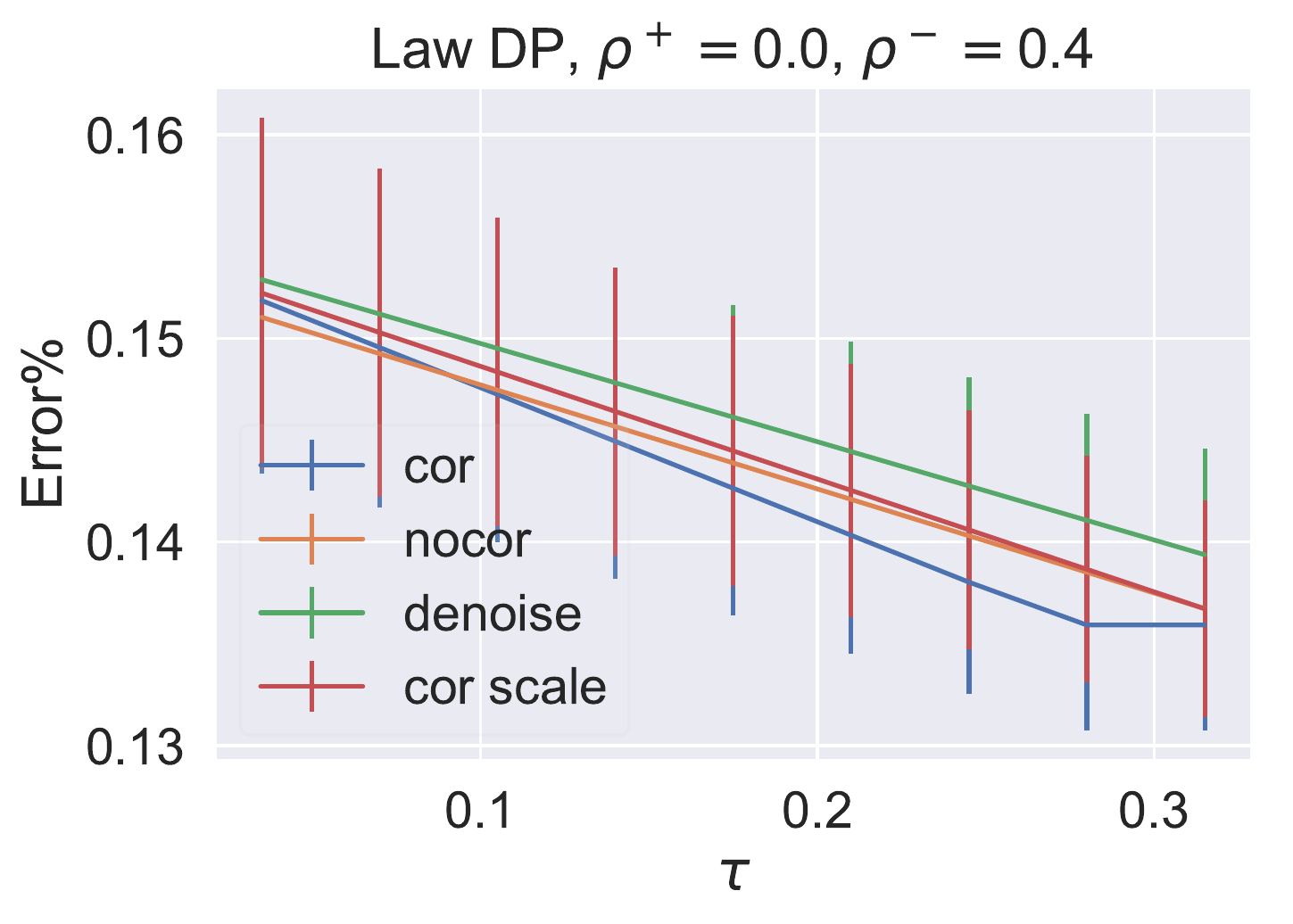}

    \caption{Relationship between input $\tau$ and fairness violation/error on the {\tt law school} dataset using DP constraint (testing curves). The %black dotted 
    {gray dashed line} represents the ideal fairness violation. Note that in some of the graphs, the red line and the orange line perfectly overlap with each other.}
    \label{fig:law_full}
\end{figure*}

%auto-ignore
\begin{figure*}[!t]
    \centering
    \includegraphics[width=0.24\textwidth]{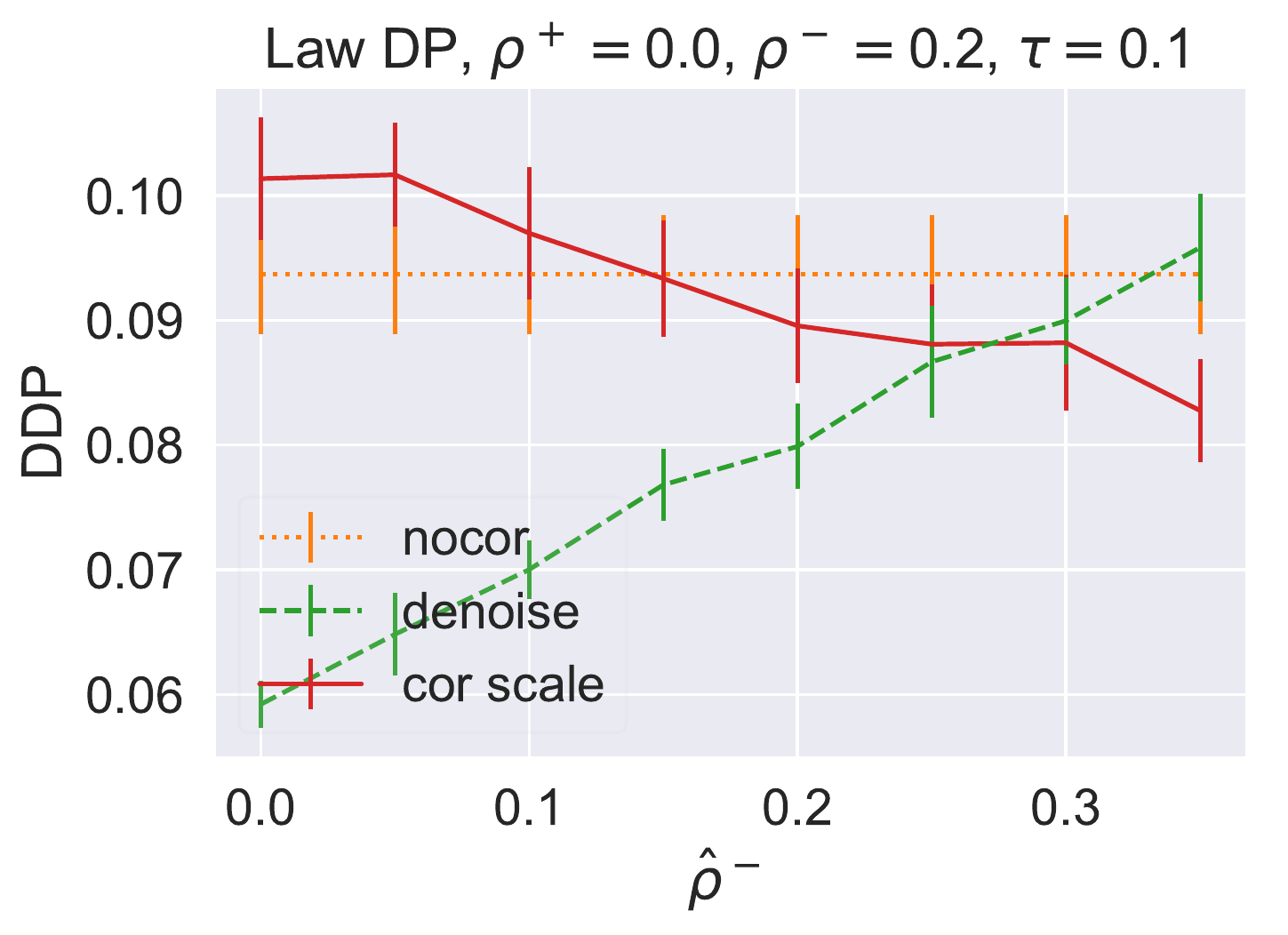}
    \includegraphics[width=0.24\textwidth]{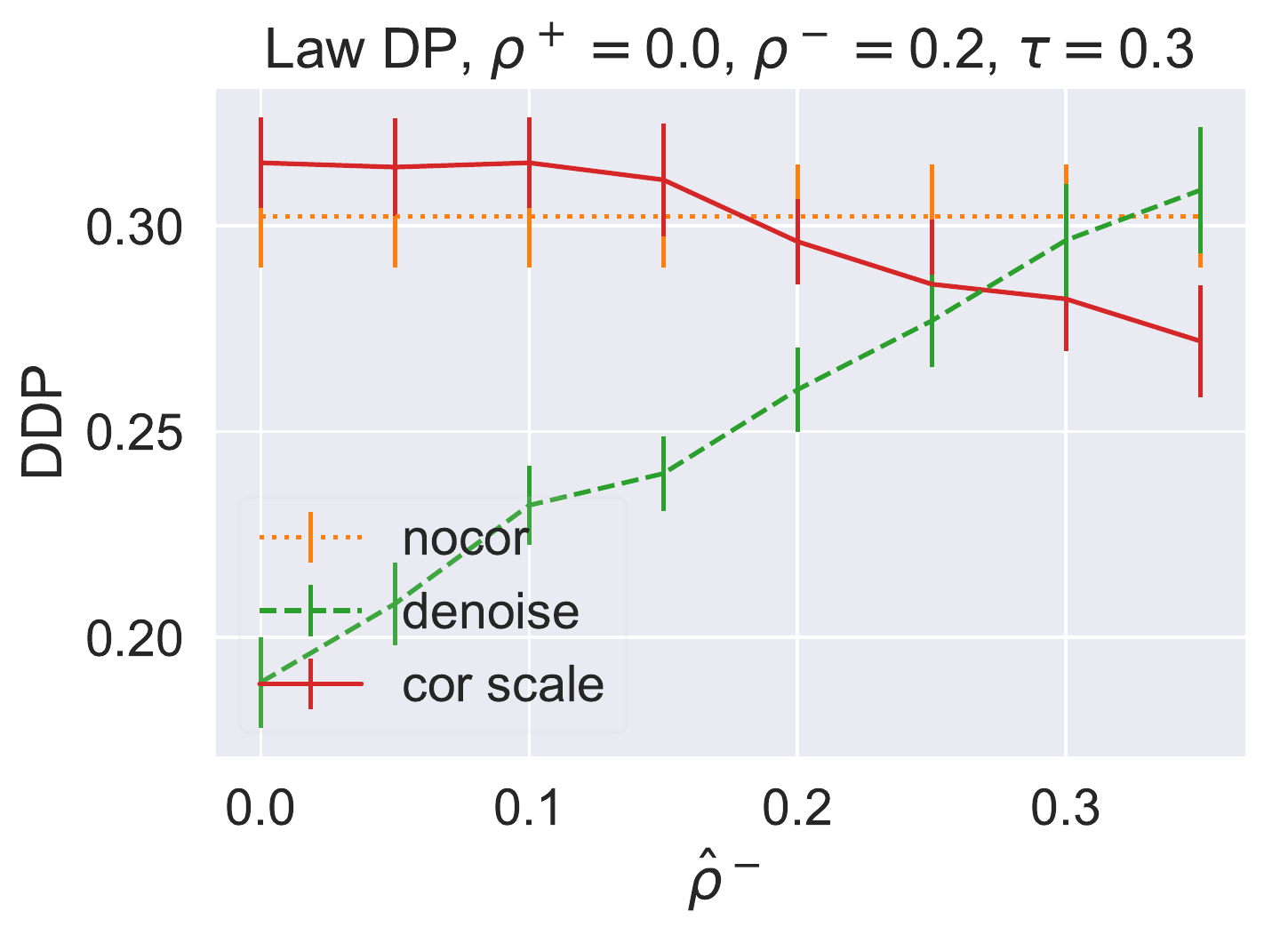}    
    \includegraphics[width=0.24\textwidth]{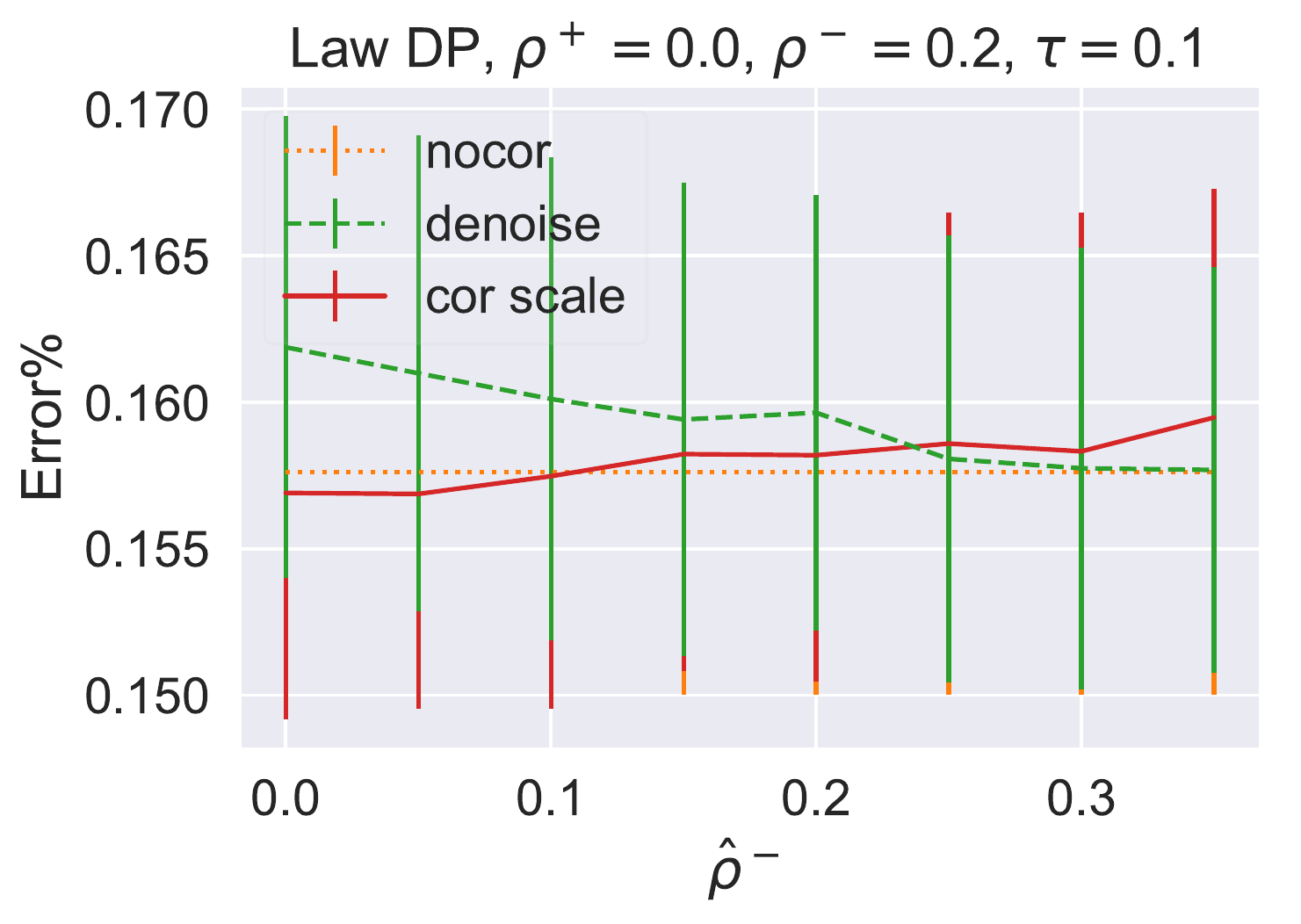}
    \includegraphics[width=0.24\textwidth]{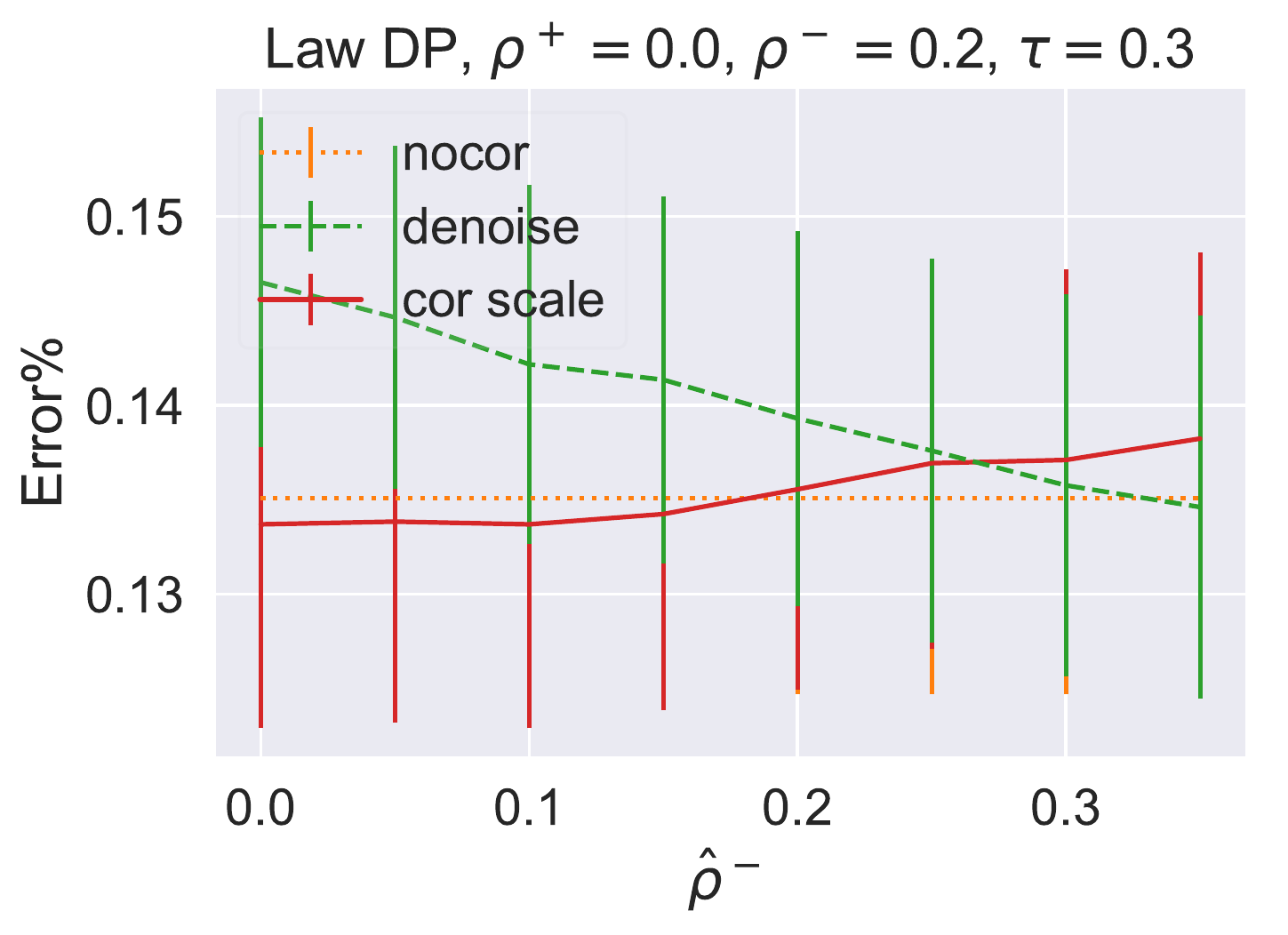}
    \caption{Relationship between the estimated noise level $\hat{\rho}^-$ and fairness violation/error on the {\tt law school} dataset using DP constraint (testing curves) at $\tau \in \{0.1, 0.3\}$, with $\hat{\rho}^+ = 0$.}
    \label{fig:law_est_err_diff_tau}
\end{figure*}

Figure \ref{fig:DP_german} and Figure \ref{fig:german_est_err} show the results under PU noise on the {\tt german} dataset, which is another dataset from the UCI repository ~\citep{UCI}. The task is to predict if one has good credit and the sensitive attribute is whether a person is foreign. The data comprises 1000 examples and 20 features. The trends are similar to those for the {\tt law school} dataset.

\begin{figure*}[h]
    \centering
    \includegraphics[width=0.24\textwidth]{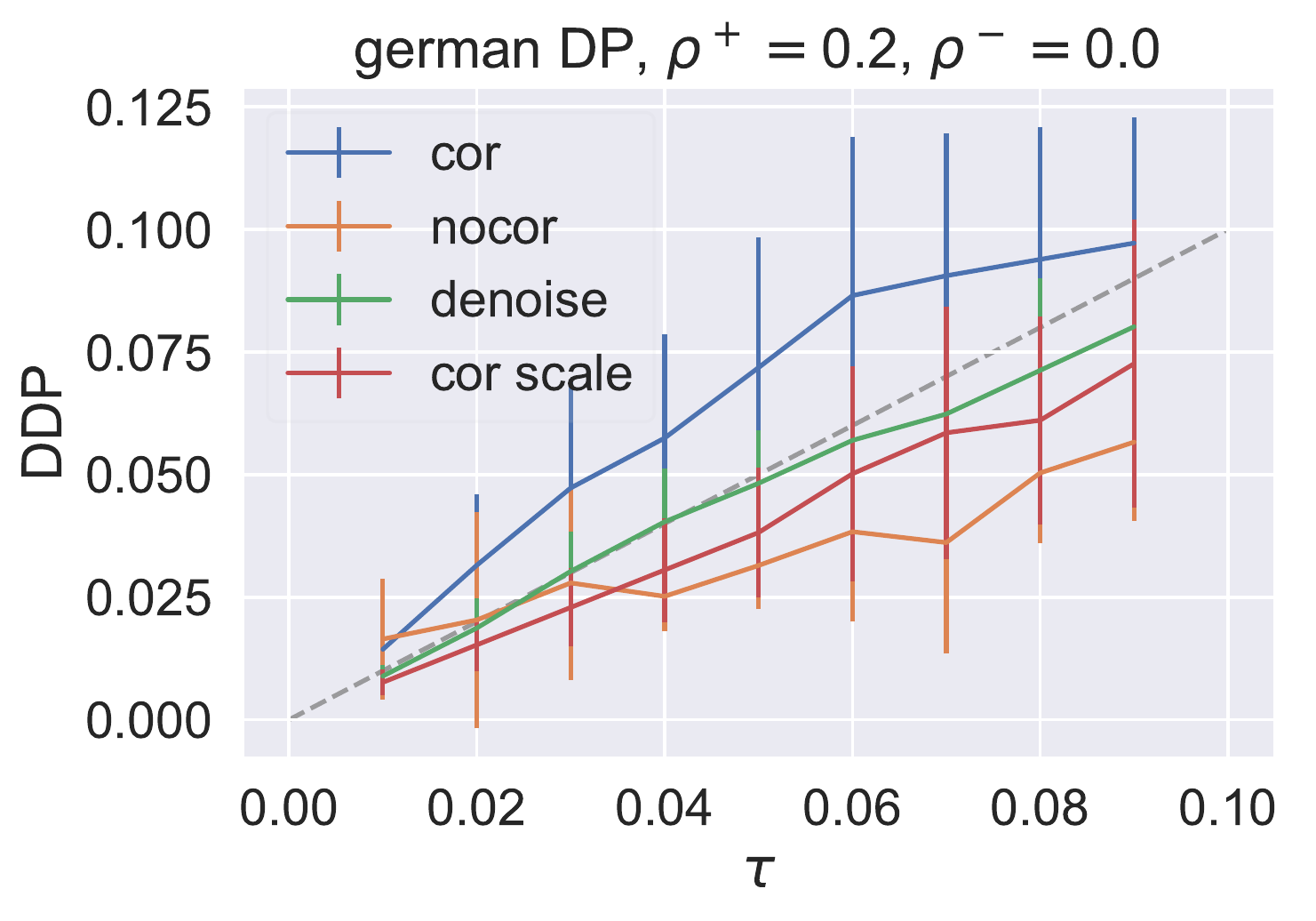}
    \includegraphics[width=0.24\textwidth]{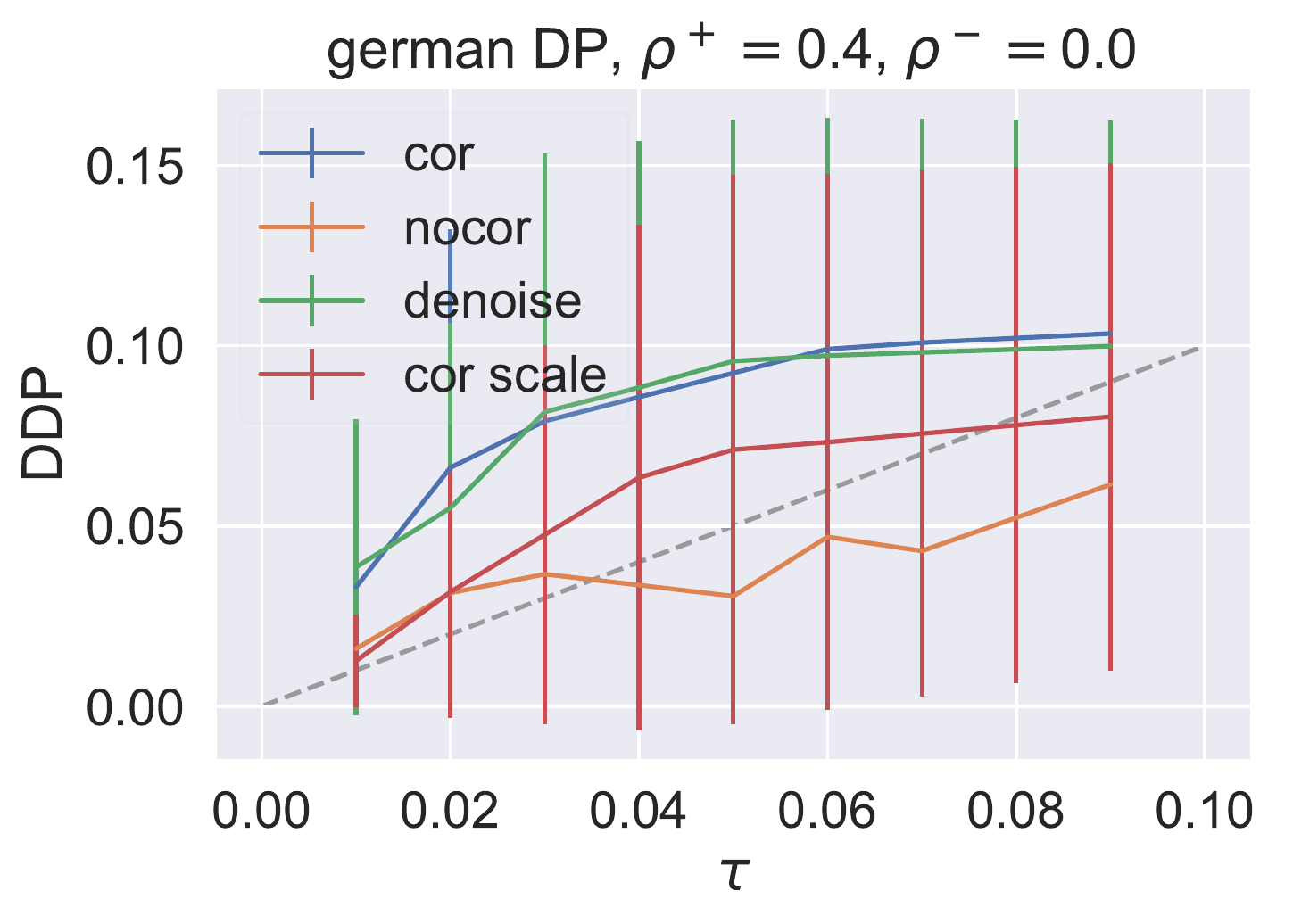}
    \includegraphics[width=0.24\textwidth]{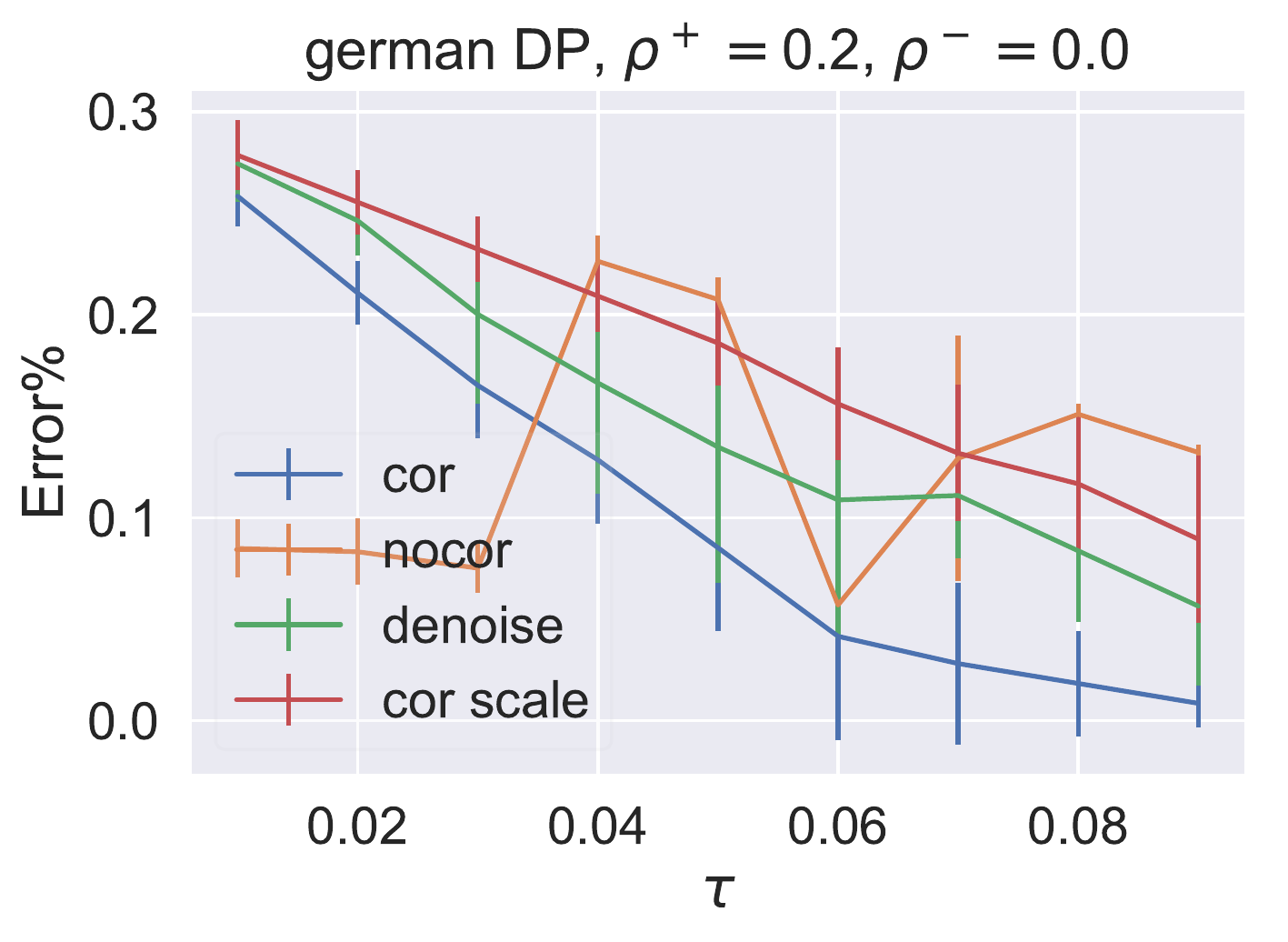}
    \includegraphics[width=0.24\textwidth]{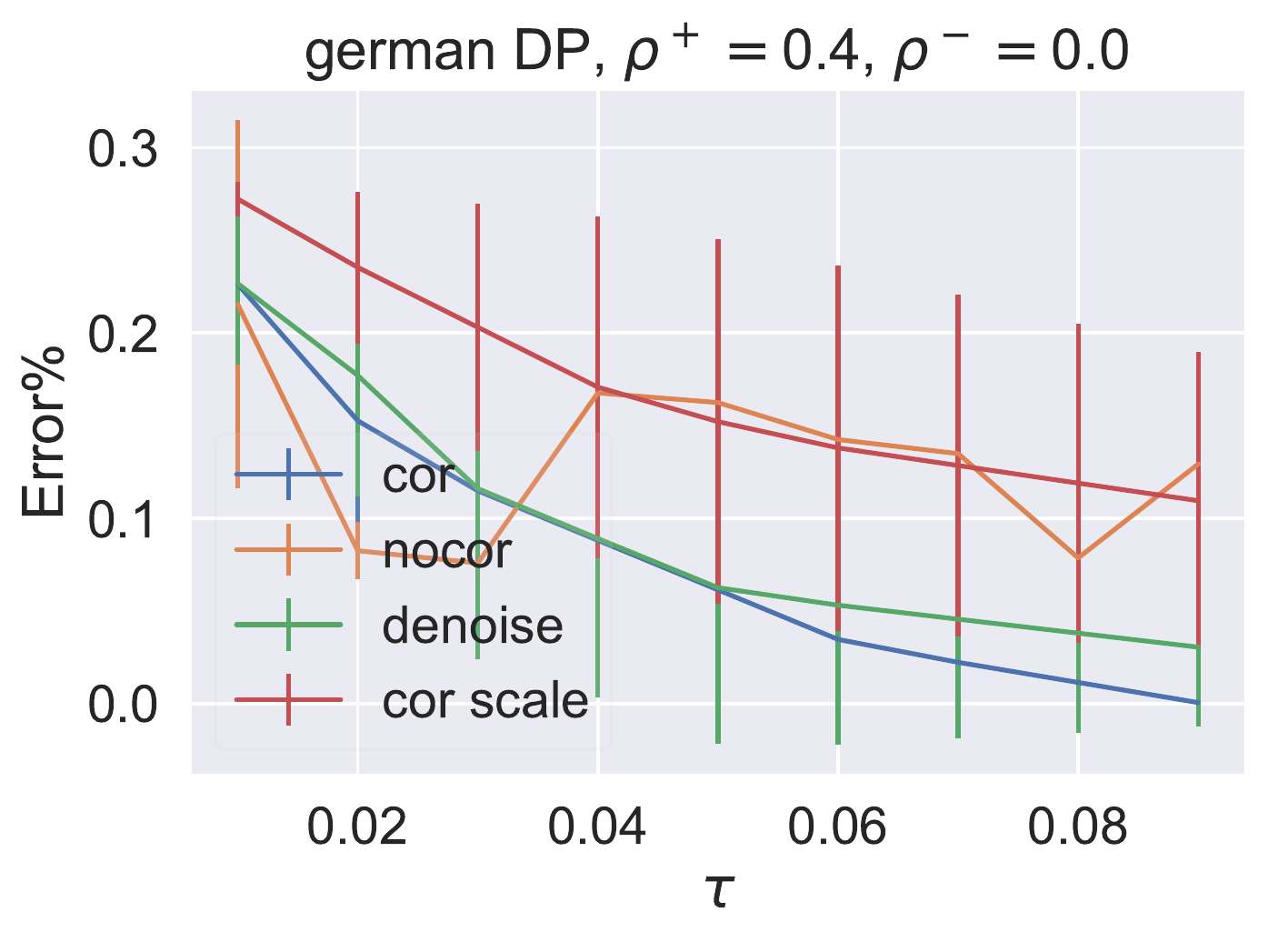}
    
    \includegraphics[width=0.24\textwidth]{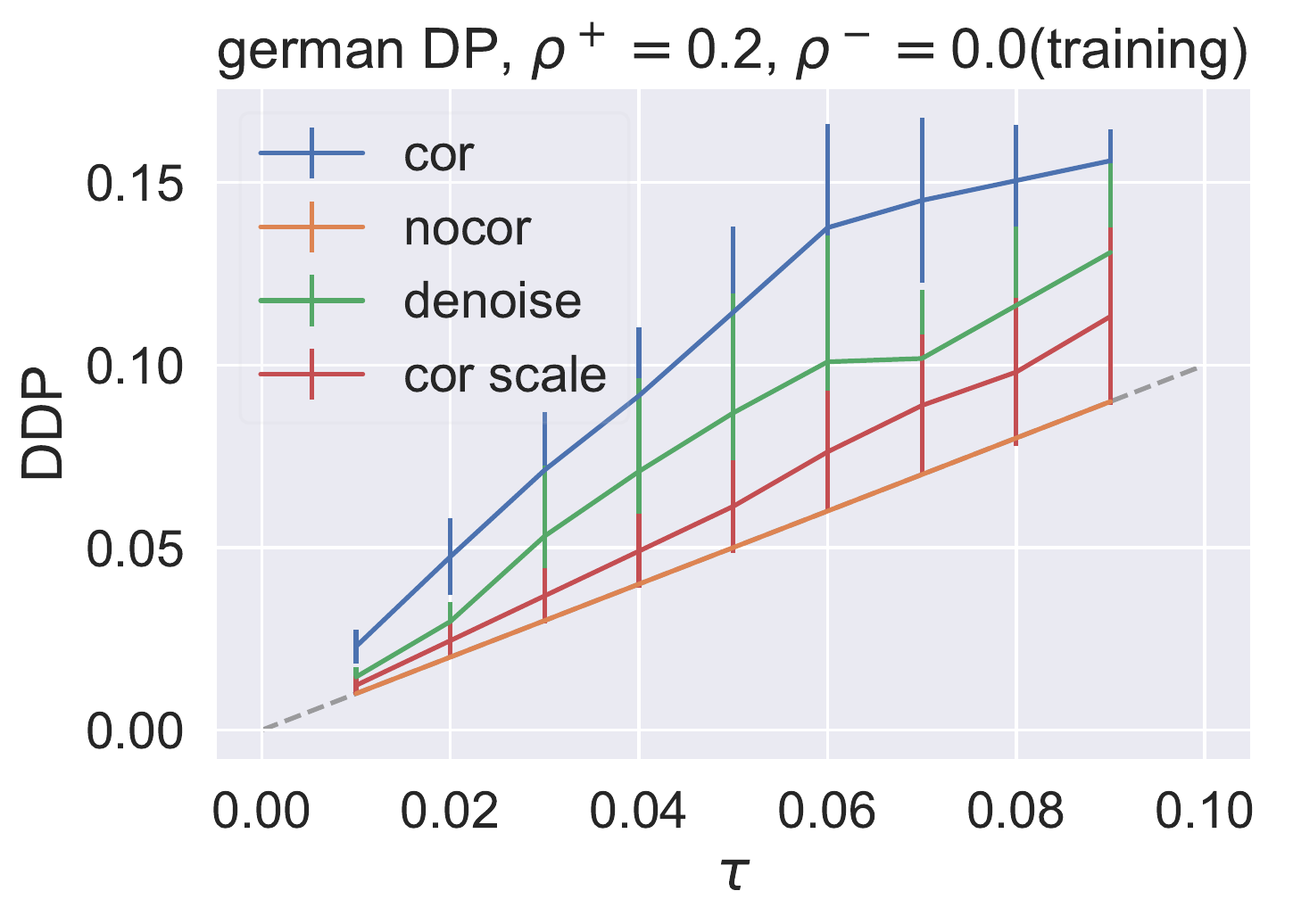}
    \includegraphics[width=0.24\textwidth]{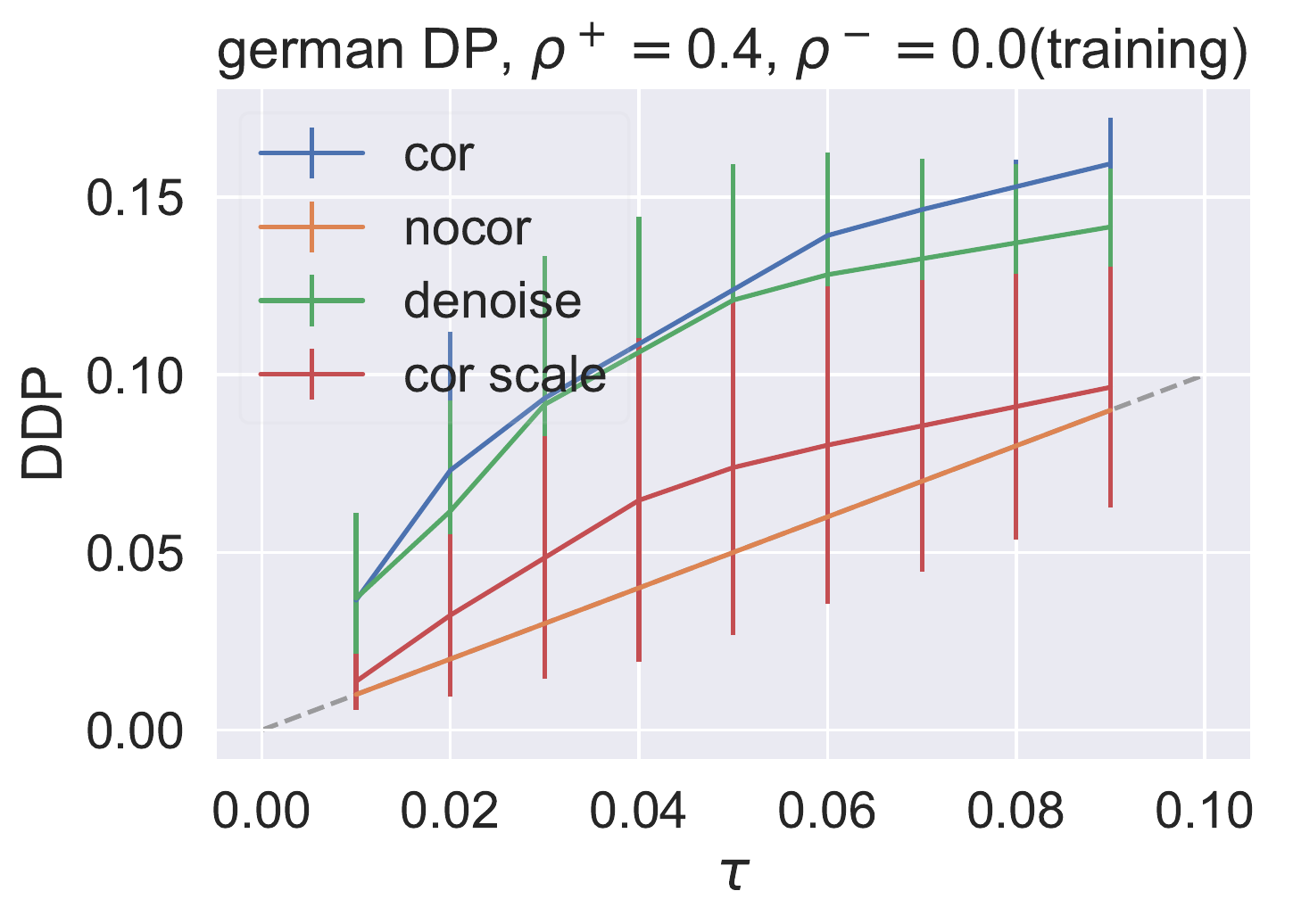}
    \includegraphics[width=0.24\textwidth]{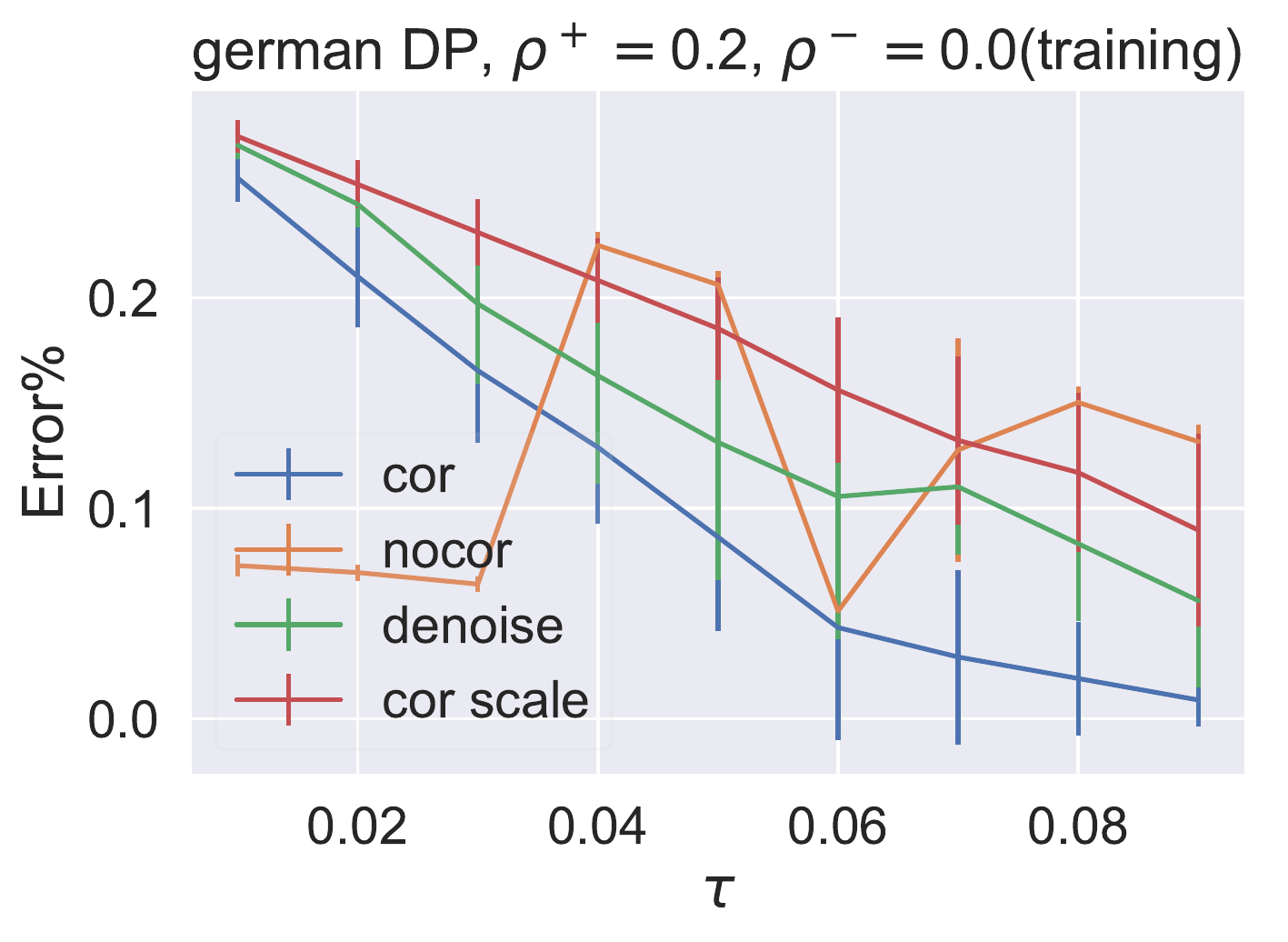}
    \includegraphics[width=0.24\textwidth]{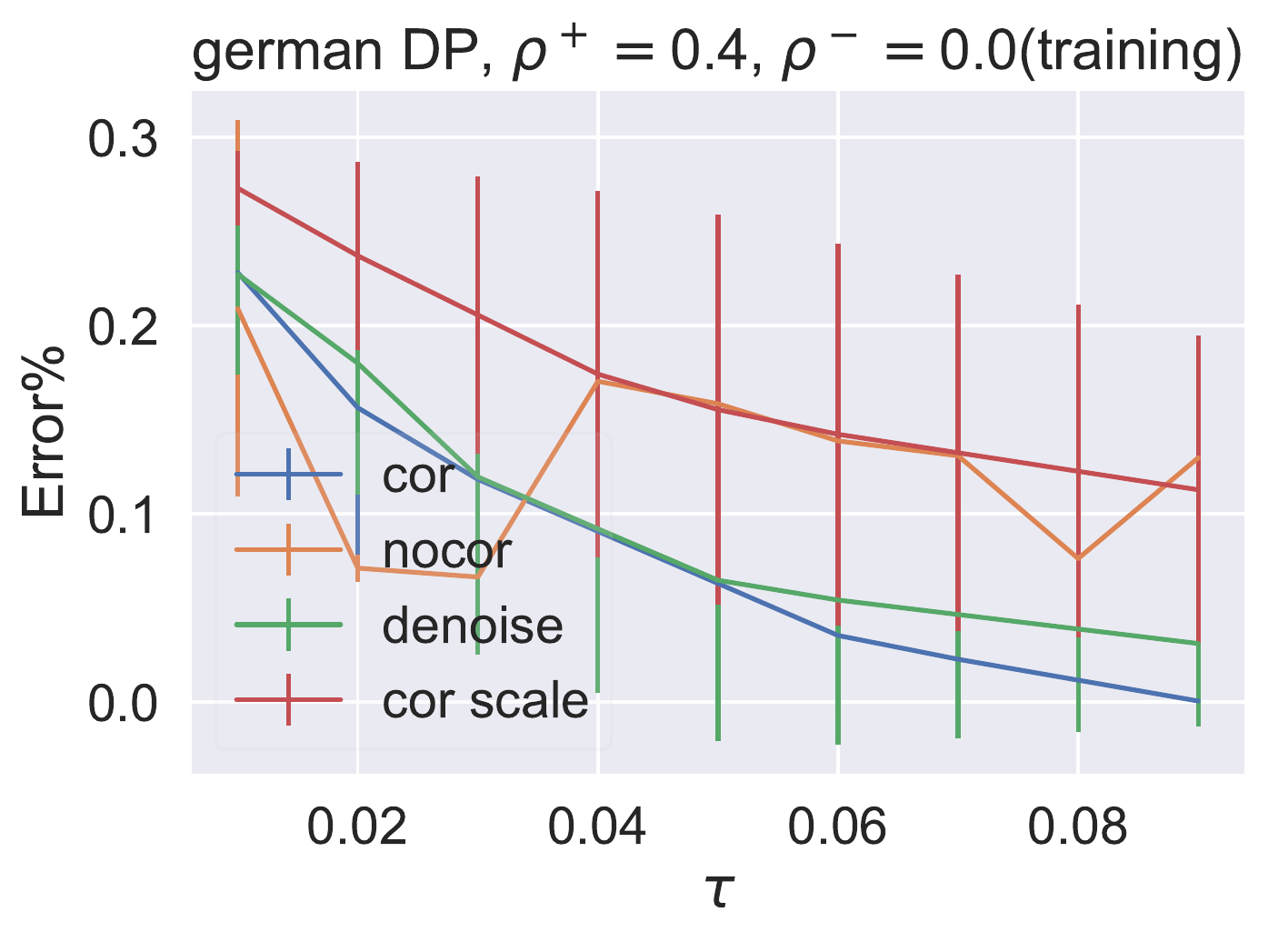}
    
    \caption{(DP)(training and testing) Relationship between input $\tau$ and fairness violation/error on the {\tt german} dataset.}
    \label{fig:DP_german}
\end{figure*}

\begin{figure*}[h]
    \centering
    \includegraphics[width=0.24\textwidth]{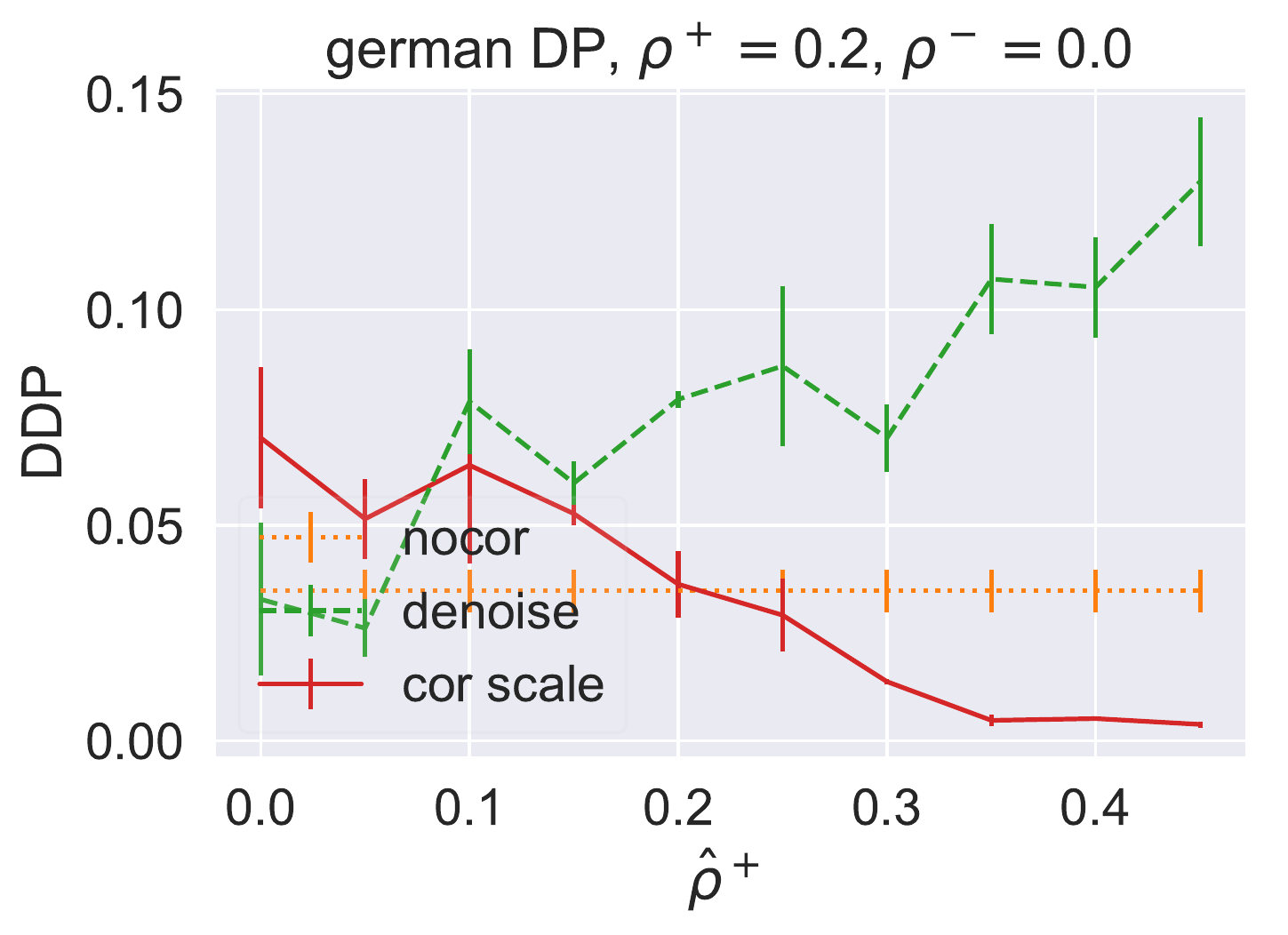}
    \includegraphics[width=0.24\textwidth]{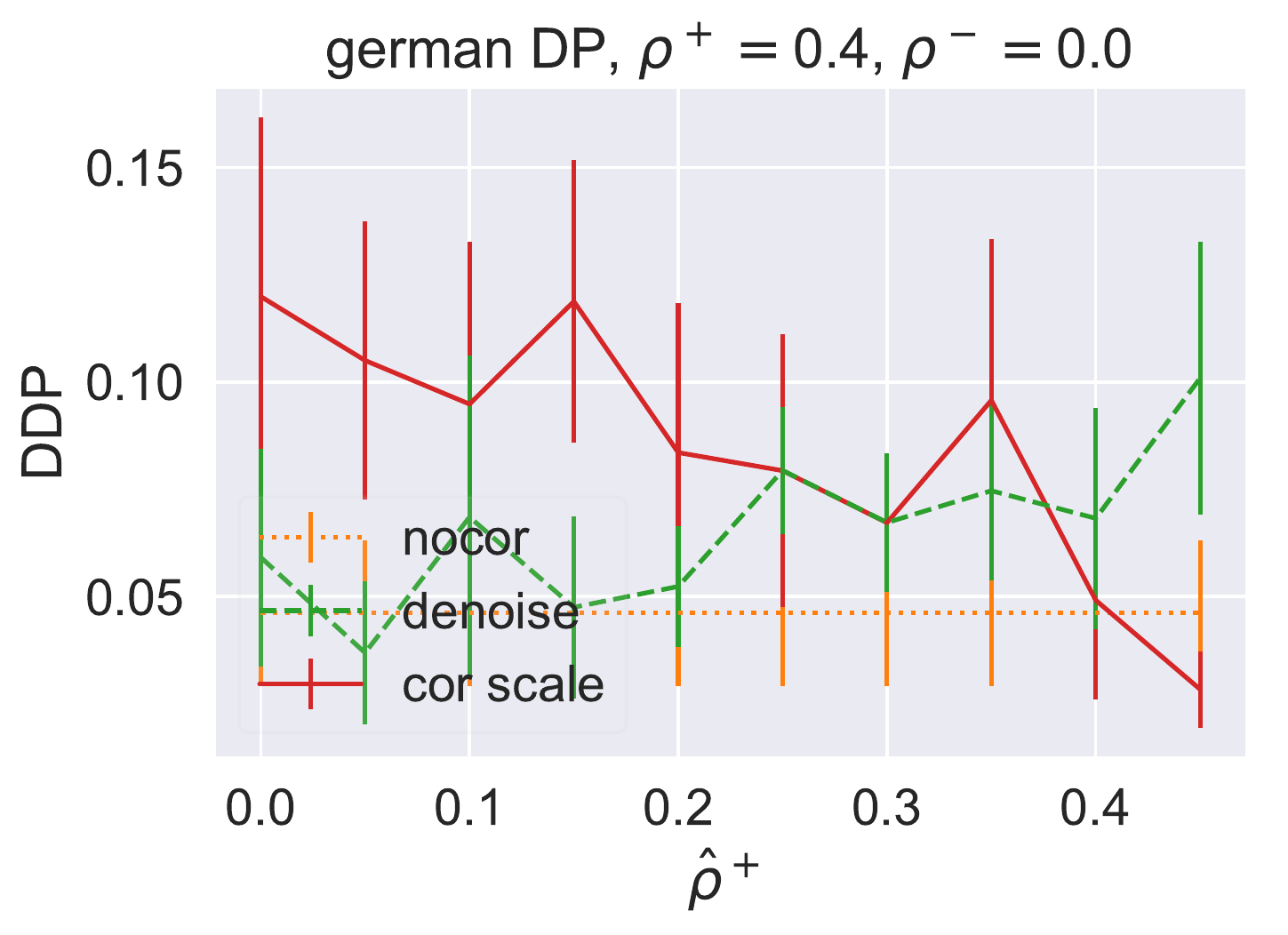}    
    \includegraphics[width=0.24\textwidth]{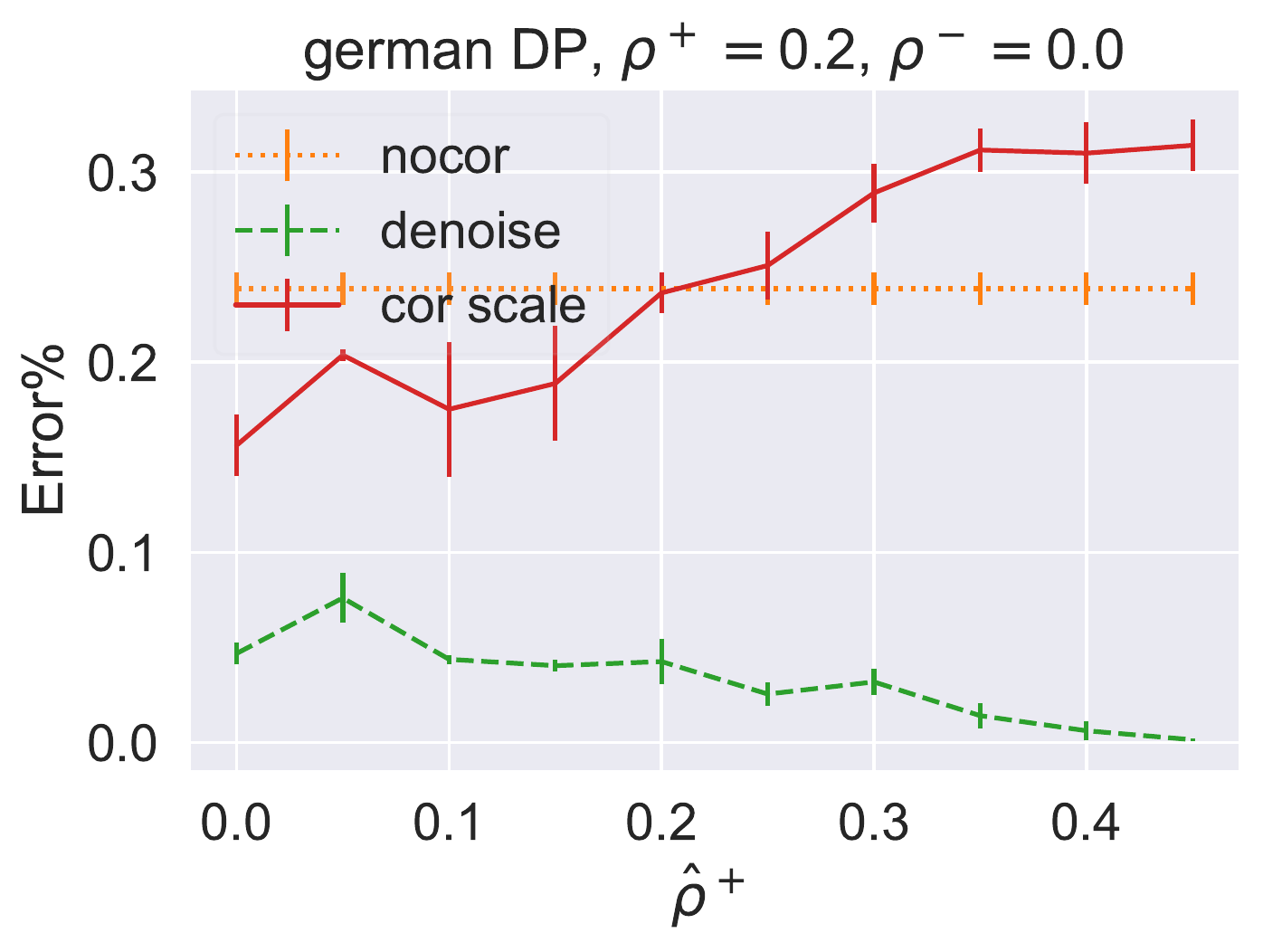}
    \includegraphics[width=0.24\textwidth]{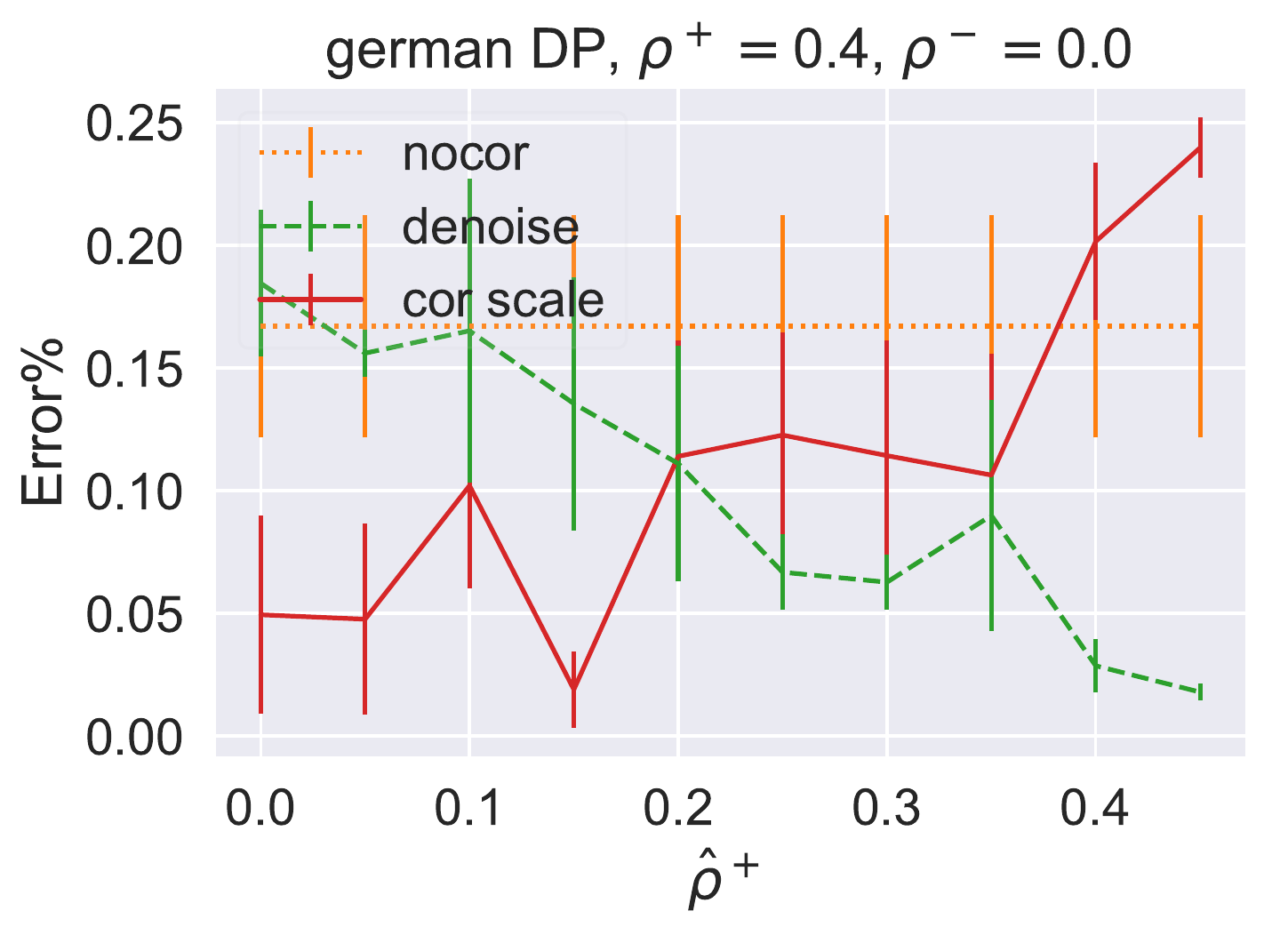}  
    \caption{Relationship between the estimated noise level $\hat{\rho}^+$ and fairness violation/error on the {\tt german} dataset using DP constraint (testing curves). Note that $\hat{\rho}^-$ is fixed to 0 and $\tau=0.04$.}
    \label{fig:german_est_err}
\end{figure*}

%\aditya{Placeholder, need to expand.}
% \AKMEDIT{Figure~\ref{fig:law_est_err_full} shows results for a range of noise levels.}
% {Figure~\ref{fig:law_est_err_full} shows additional results under noise rate estimation on {\tt law school} for different noise levels $\rho^+ = \rho^- \in \{0.2, 0.4\}$.}

% %auto-ignore
% \begin{figure*}[!t]
%     \centering
%     \includegraphics[width=0.24\textwidth]{{img_pu_law_noise_est/disp_test_law,0.0,0.2,1,DP,Agarwal,3,False,test_rho_est_err}.pdf}
%     \includegraphics[width=0.24\textwidth]{}    
%     \includegraphics[width=0.24\textwidth]{{img_pu_law_noise_est/error_test_law,0.0,0.2,1,DP,Agarwal,3,False,test_rho_est_err}.pdf}
%     \includegraphics[width=0.24\textwidth]{}
%     \caption{Relationship between the estimated noise level $\hat{\rho}^-$ and fairness violation/error on the {\tt law school} dataset using DP constraint (testing curves), with $\hat{\rho}^+ = 0$ and $\tau=0.2$.}
%     \label{fig:law_est_err_full}
% \end{figure*}

\clearpage

\section{The influence of different noise levels}
\label{appendix:noise-level}

Figure \ref{fig:diff_noise} explores the influence of the noise level on the trends and relationships between our method's performance and that of the benchmarks. We run these experiments on the UCI {\tt adult} dataset, which is another dataset from the UCI repository ~\citep{UCI}. The task is to predict if one has income more than 50K and gender is the sensitive attribute. The data comprises 48842 examples and 14 features. We run these experiments with the DP constraint under different CCN noise levels ($\rho^+ = \rho^- \in \set{0.01, 0.1, 0.2, 0.3, 0.4, 0.48}$). We include both training and testing curves for completeness. As we can see, as the noise increases the gap between the corrupted data curves and the uncorrupted data curve increases. It becomes very hard to get close to the non-corrupted case when noise becomes too high.

\begin{figure*}[h!]
    \centering
    \includegraphics[width=0.24\textwidth]{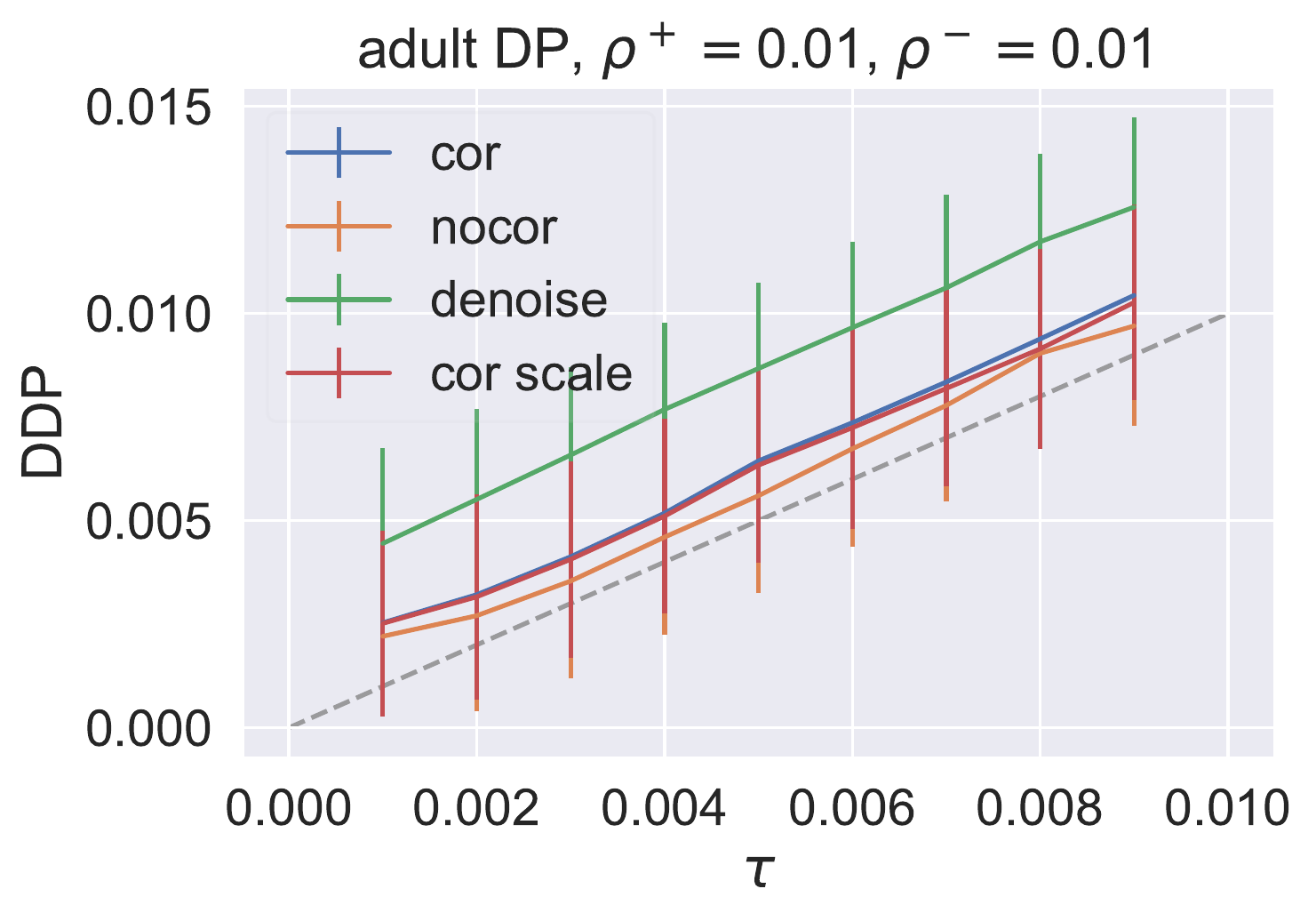}
    \includegraphics[width=0.24\textwidth]{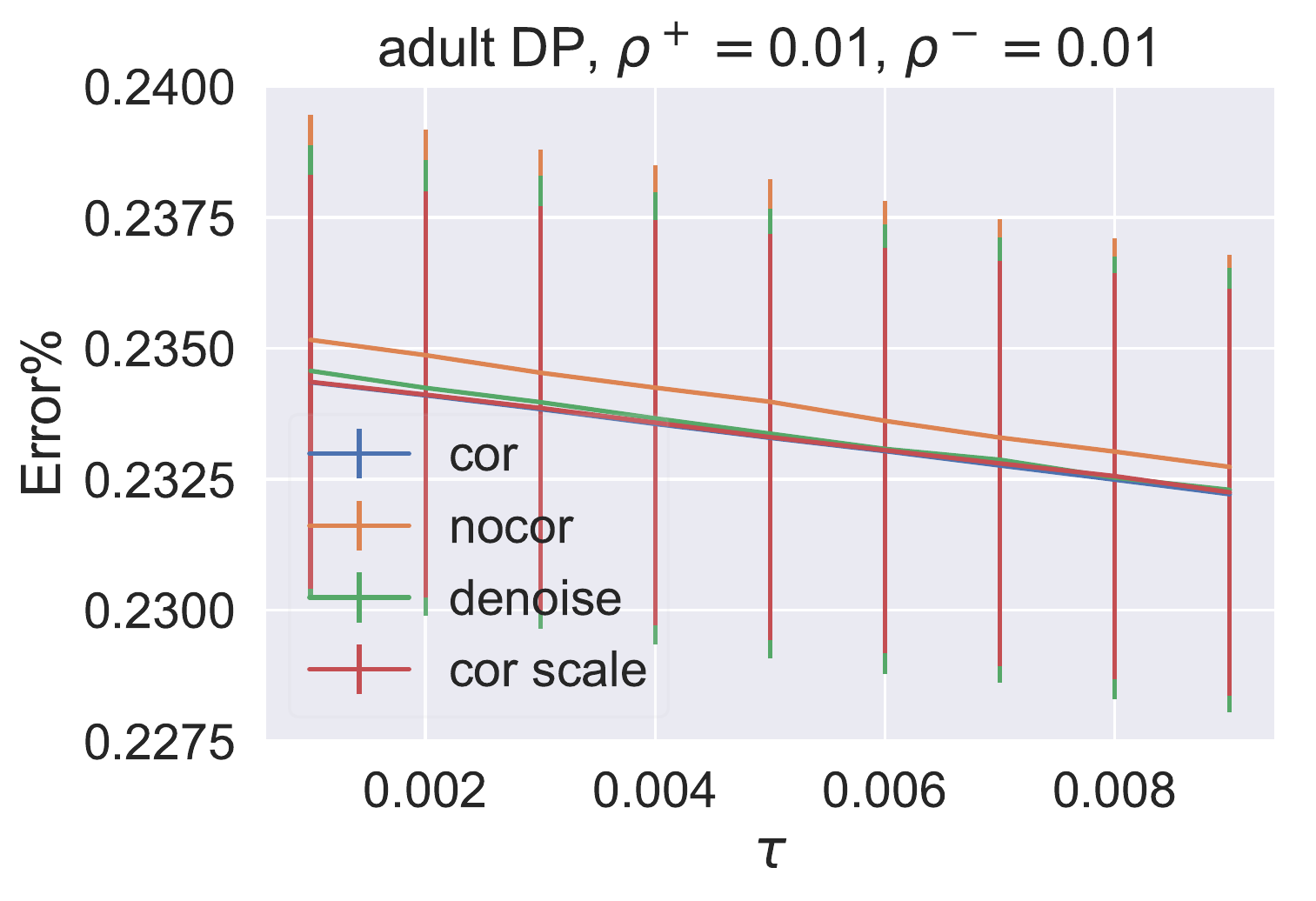}
    \includegraphics[width=0.24\textwidth]{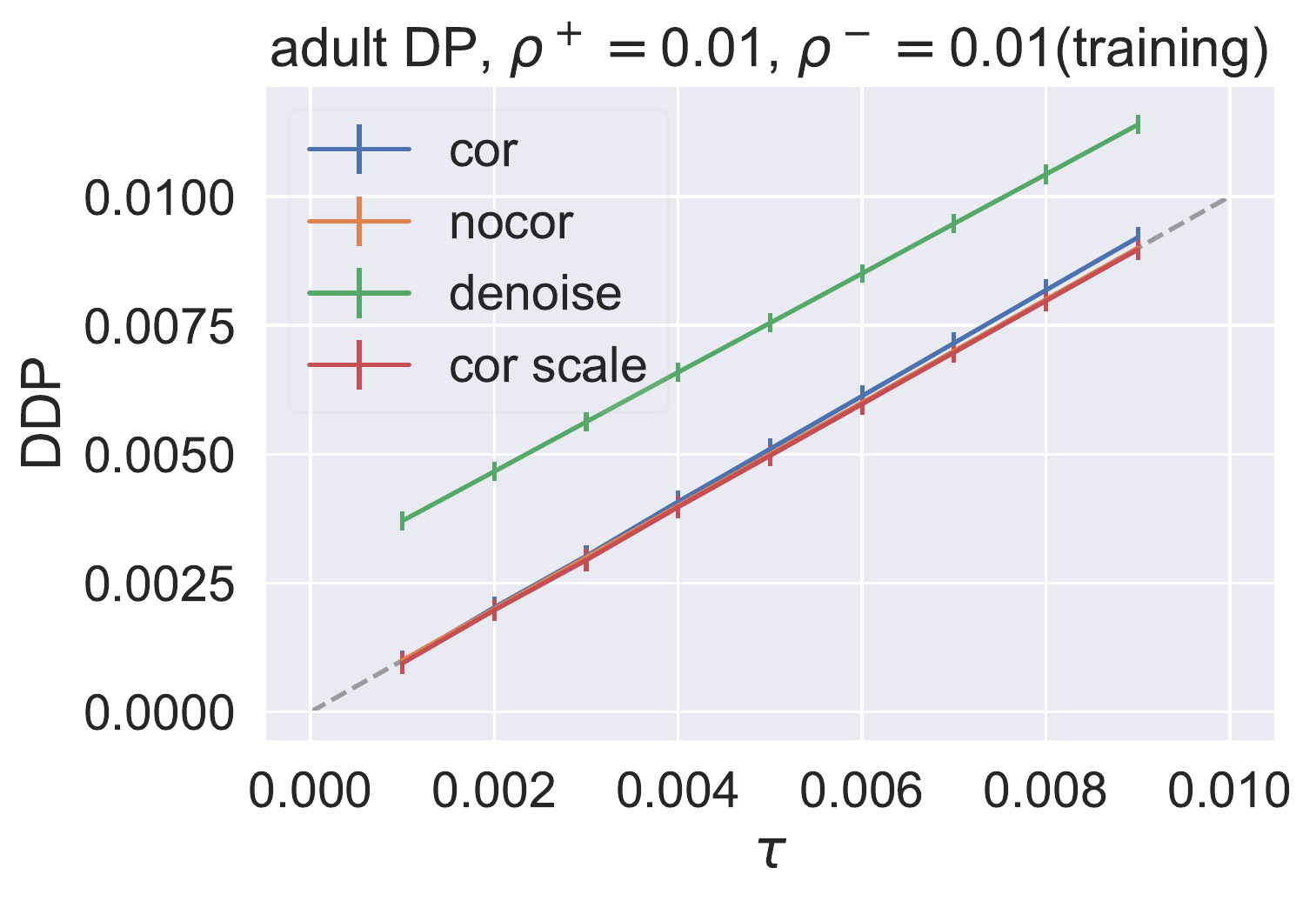}
    \includegraphics[width=0.24\textwidth]{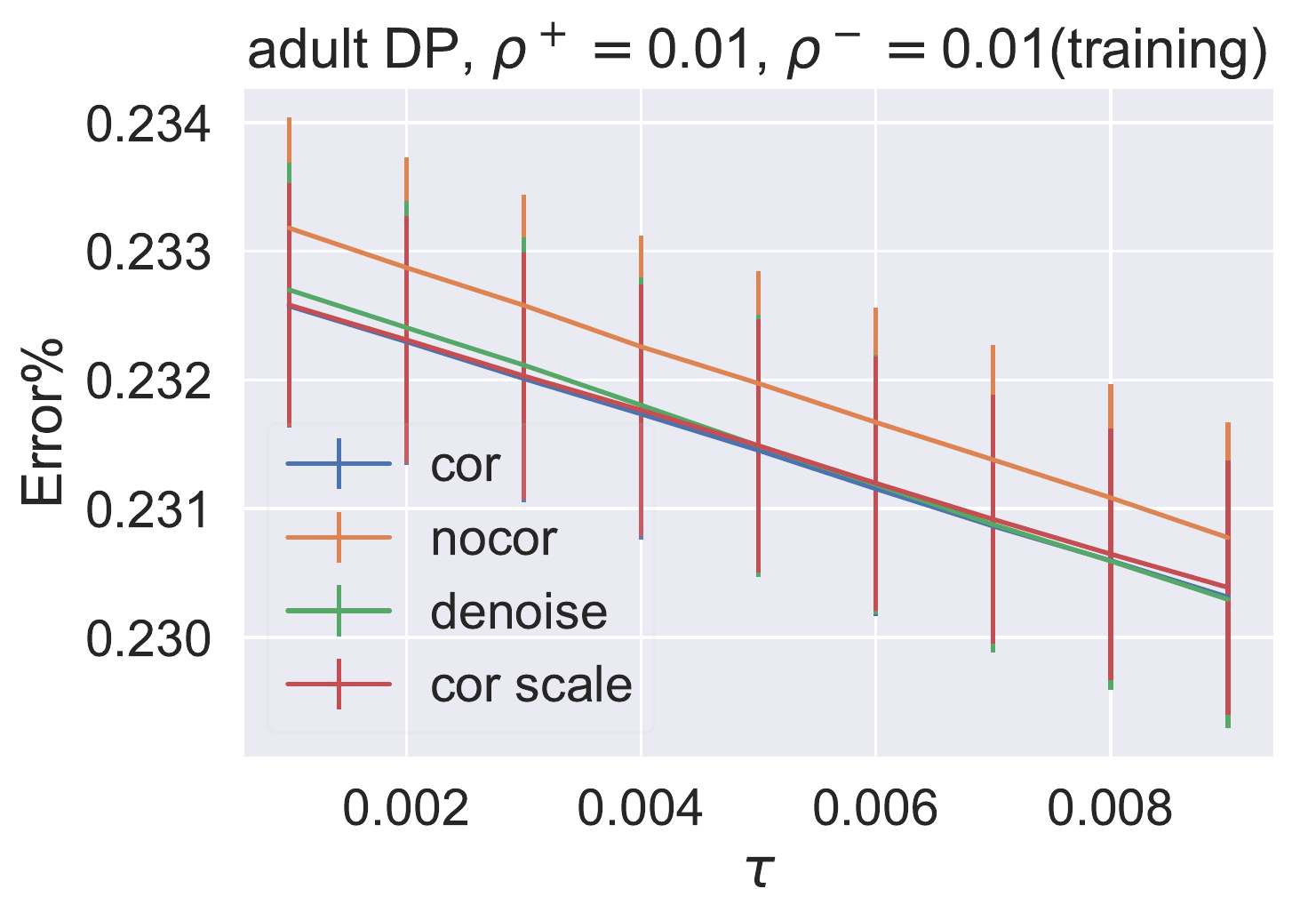}
    
    \includegraphics[width=0.24\textwidth]{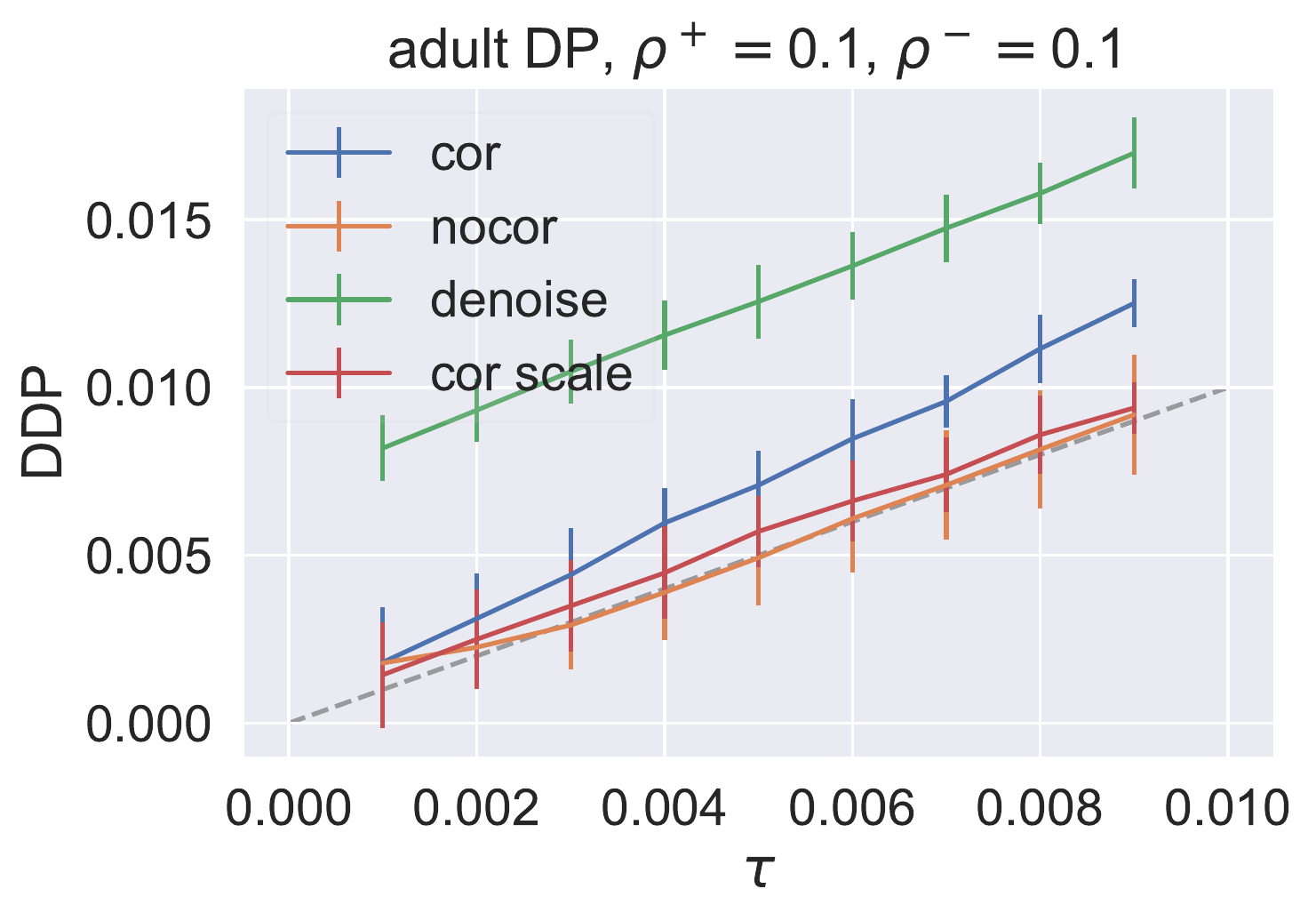}
    \includegraphics[width=0.24\textwidth]{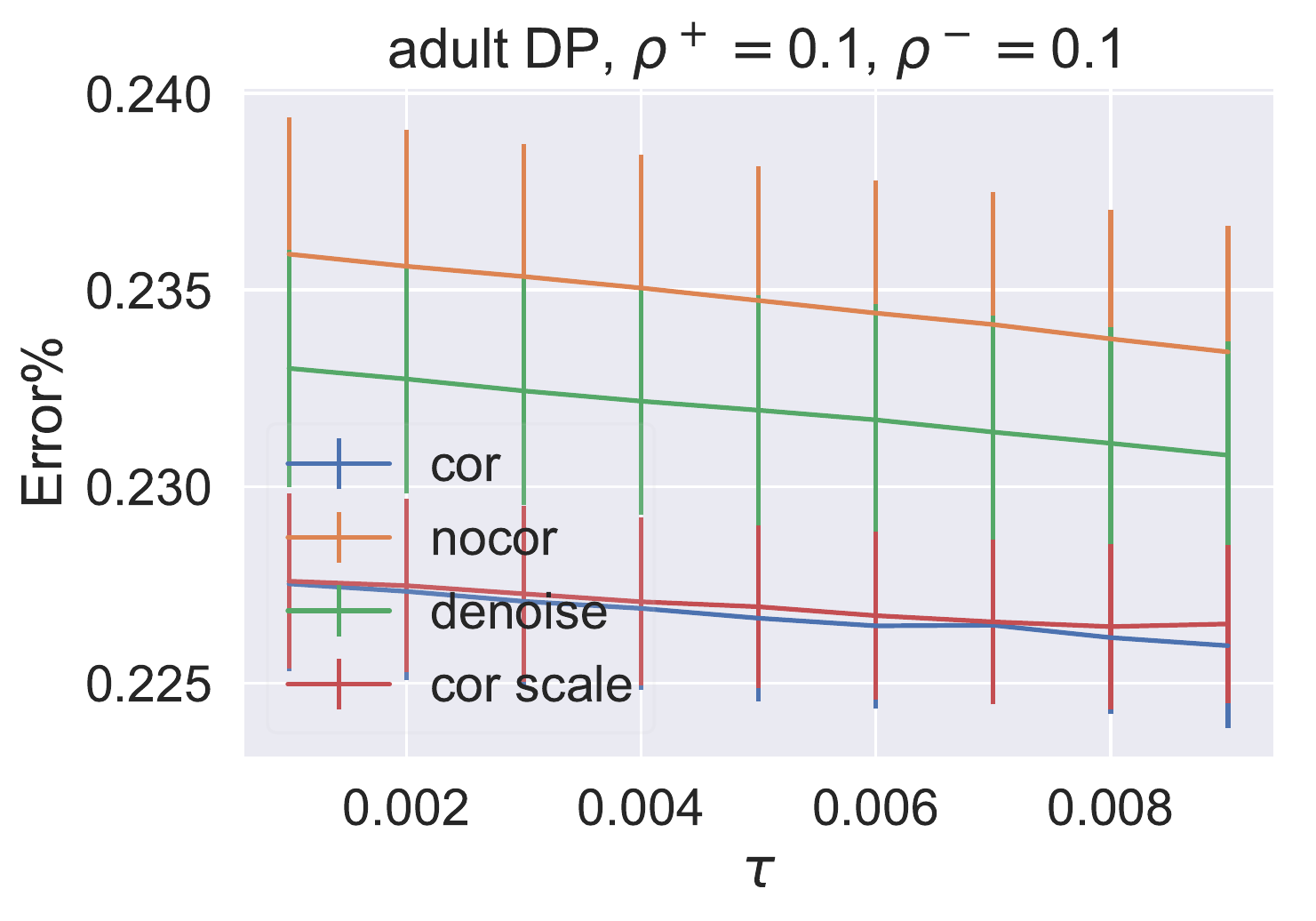}
    \includegraphics[width=0.24\textwidth]{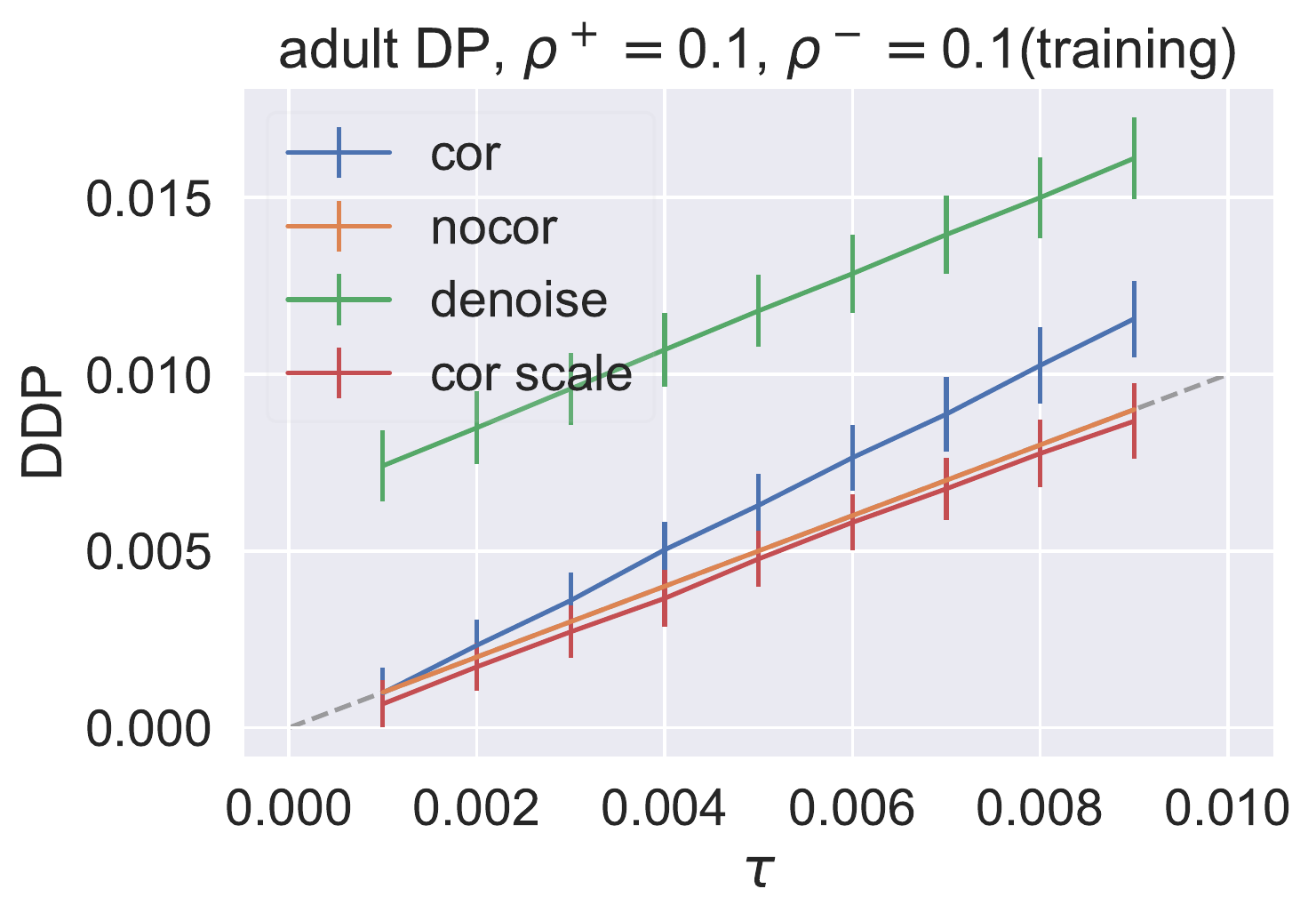}
    \includegraphics[width=0.24\textwidth]{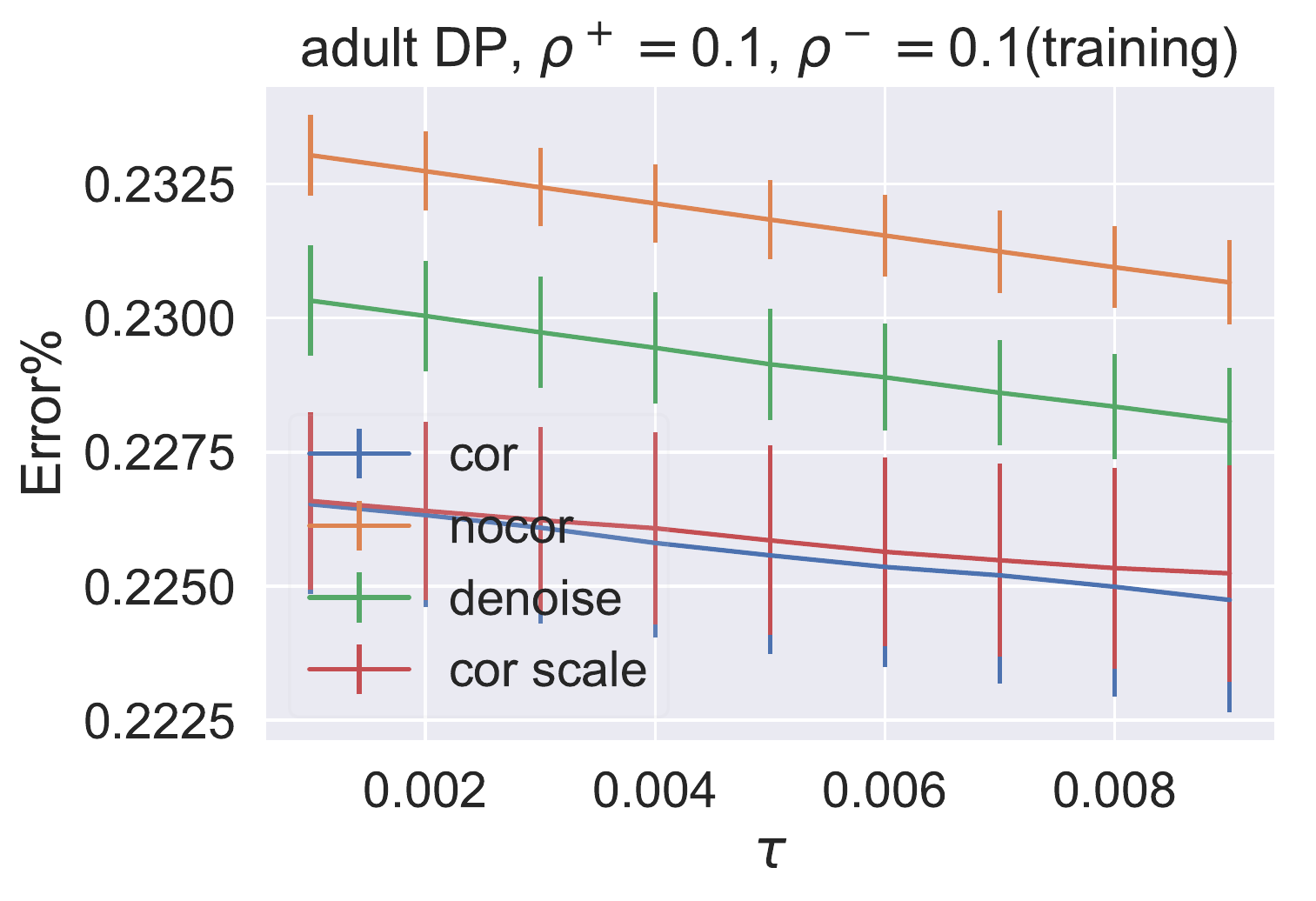}
    
    \includegraphics[width=0.24\textwidth]{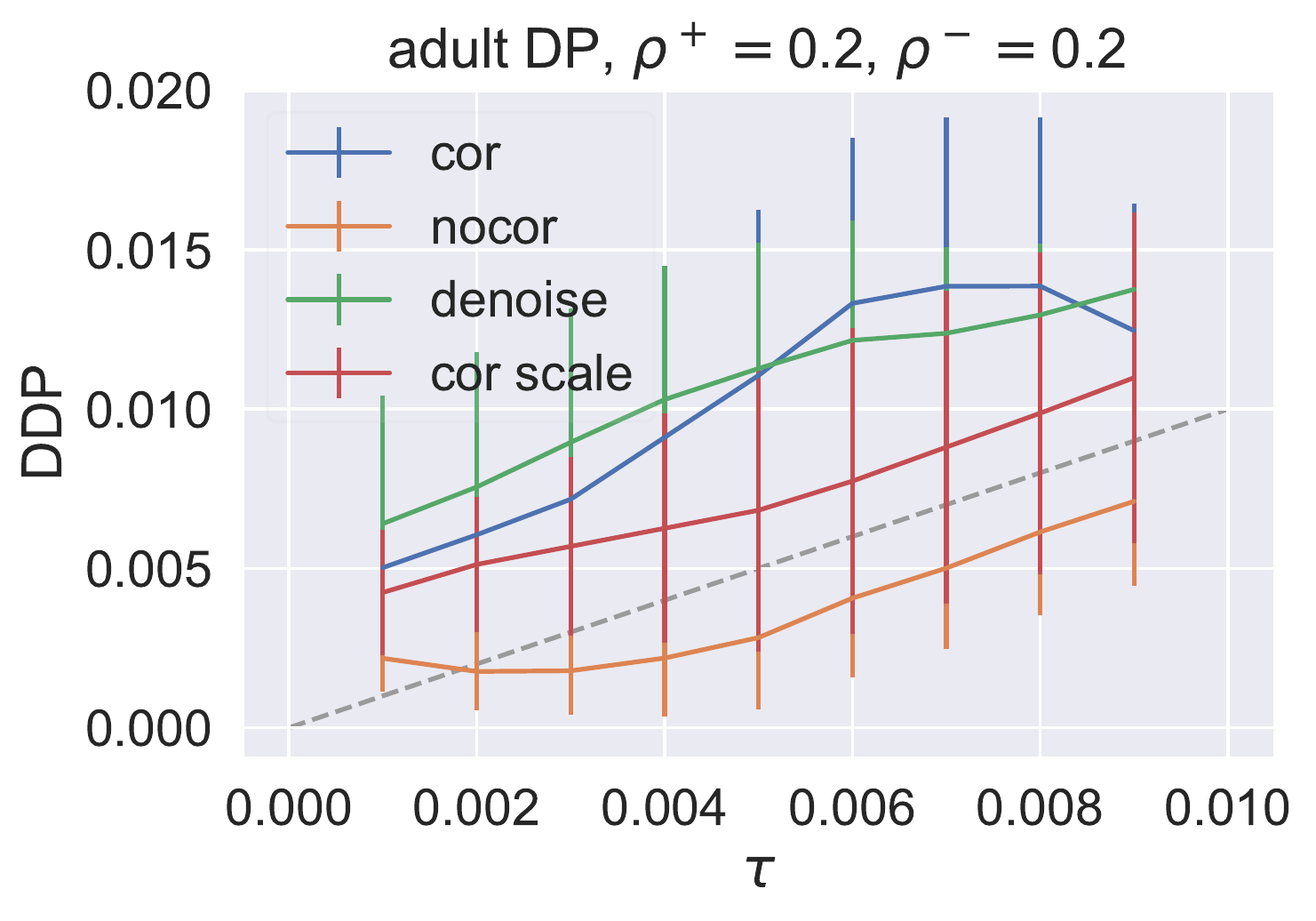}
    \includegraphics[width=0.24\textwidth]{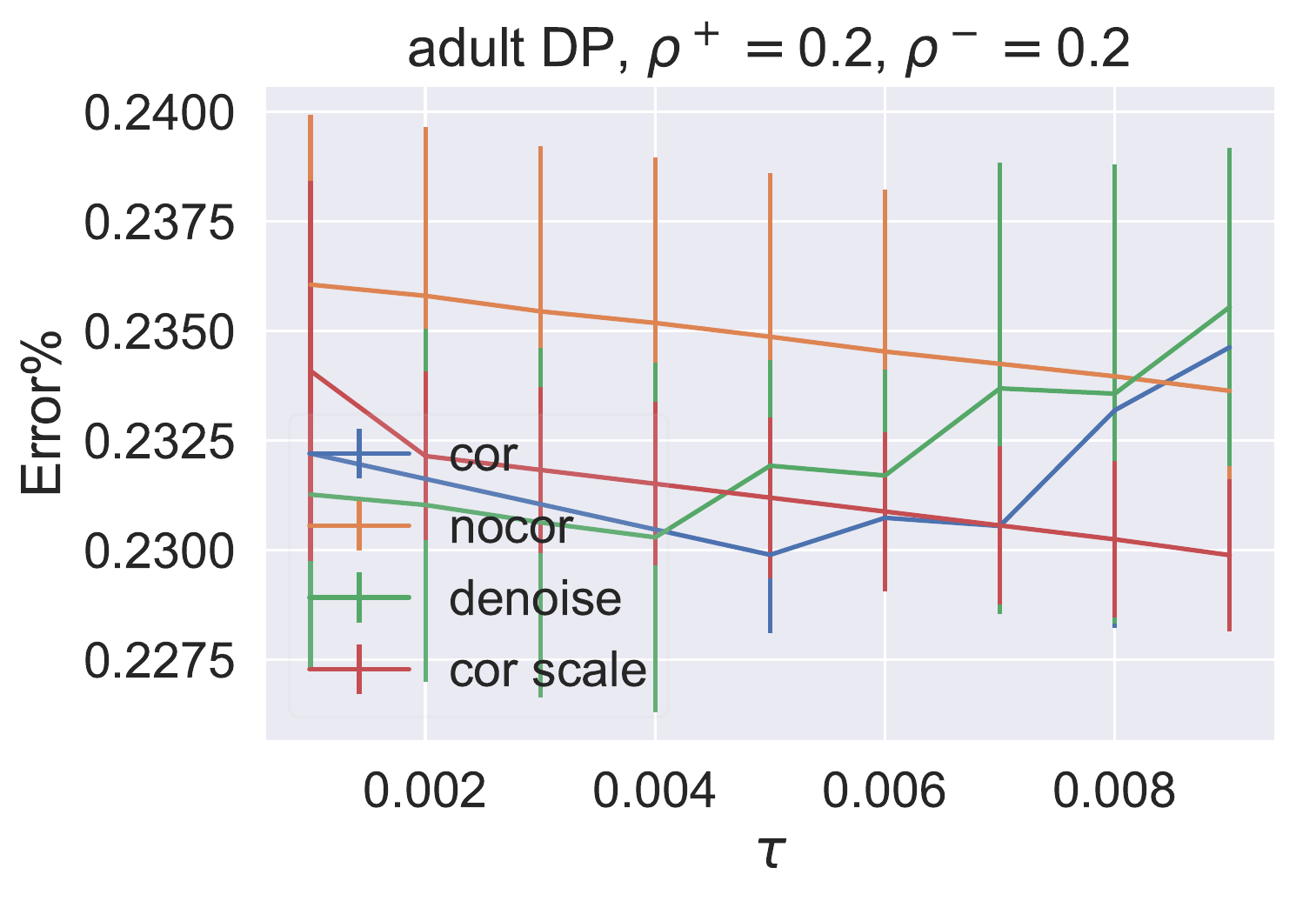}
    \includegraphics[width=0.24\textwidth]{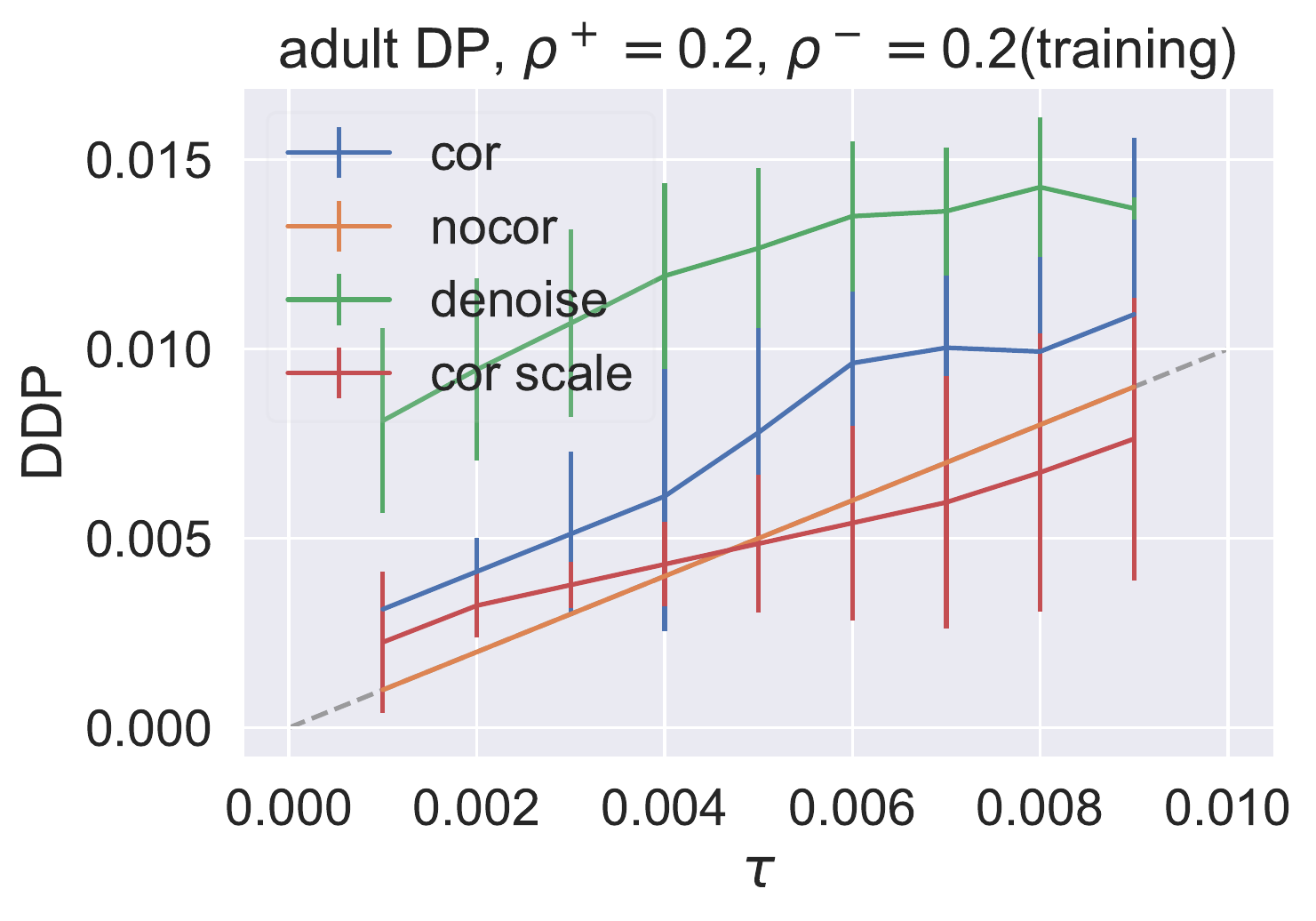}
    \includegraphics[width=0.24\textwidth]{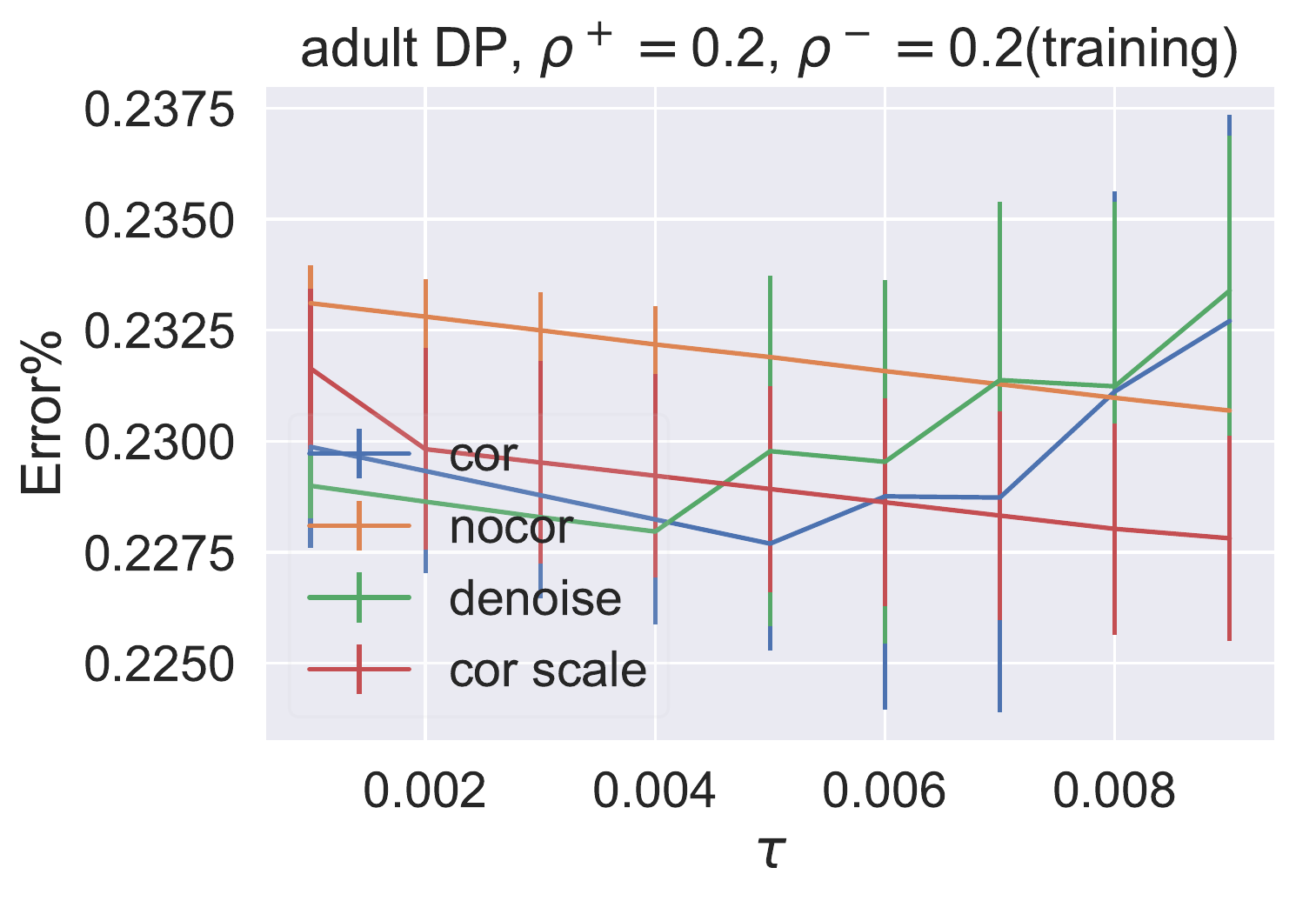}
    
    \includegraphics[width=0.24\textwidth]{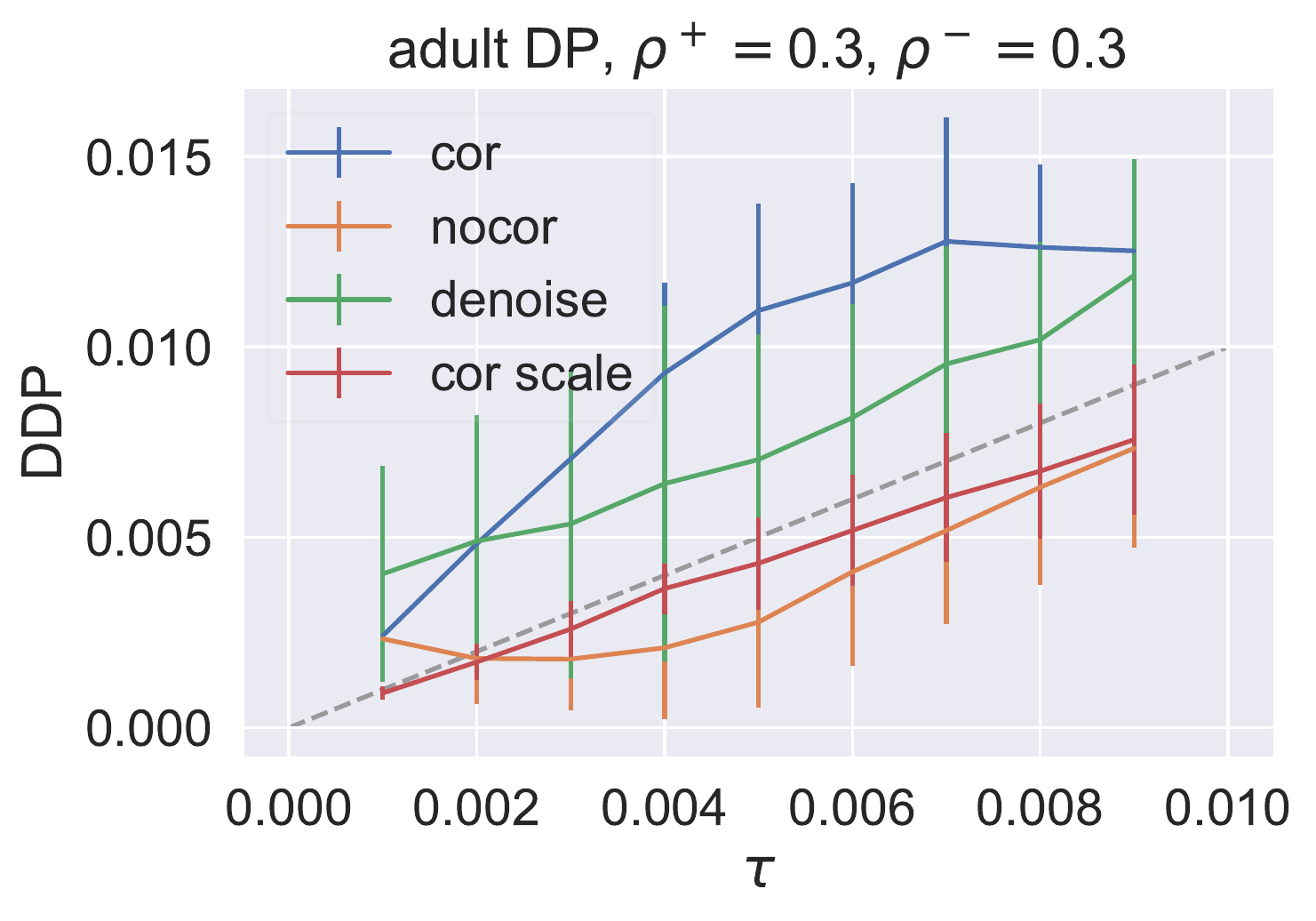}
    \includegraphics[width=0.24\textwidth]{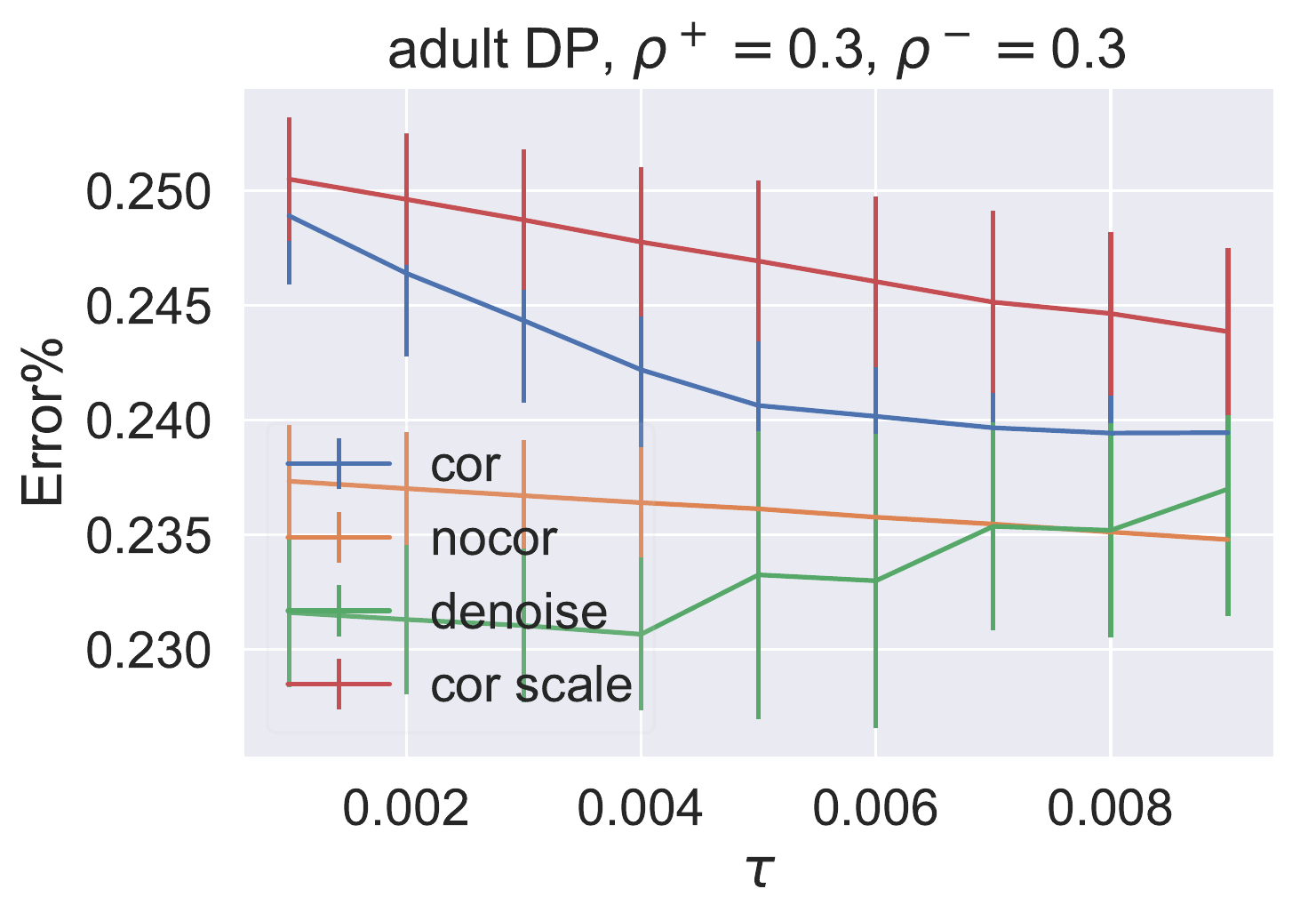}
    \includegraphics[width=0.24\textwidth]{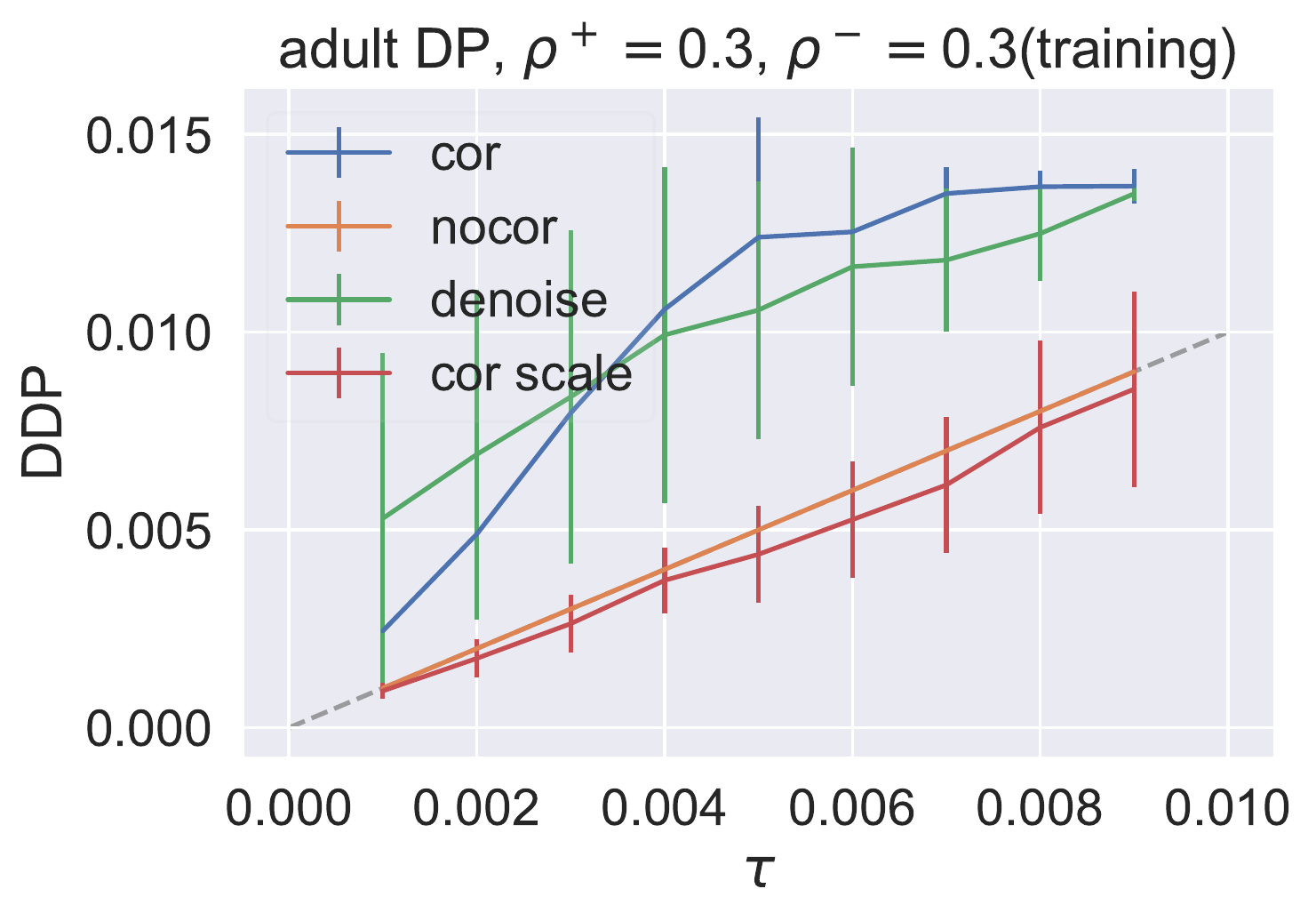}
    \includegraphics[width=0.24\textwidth]{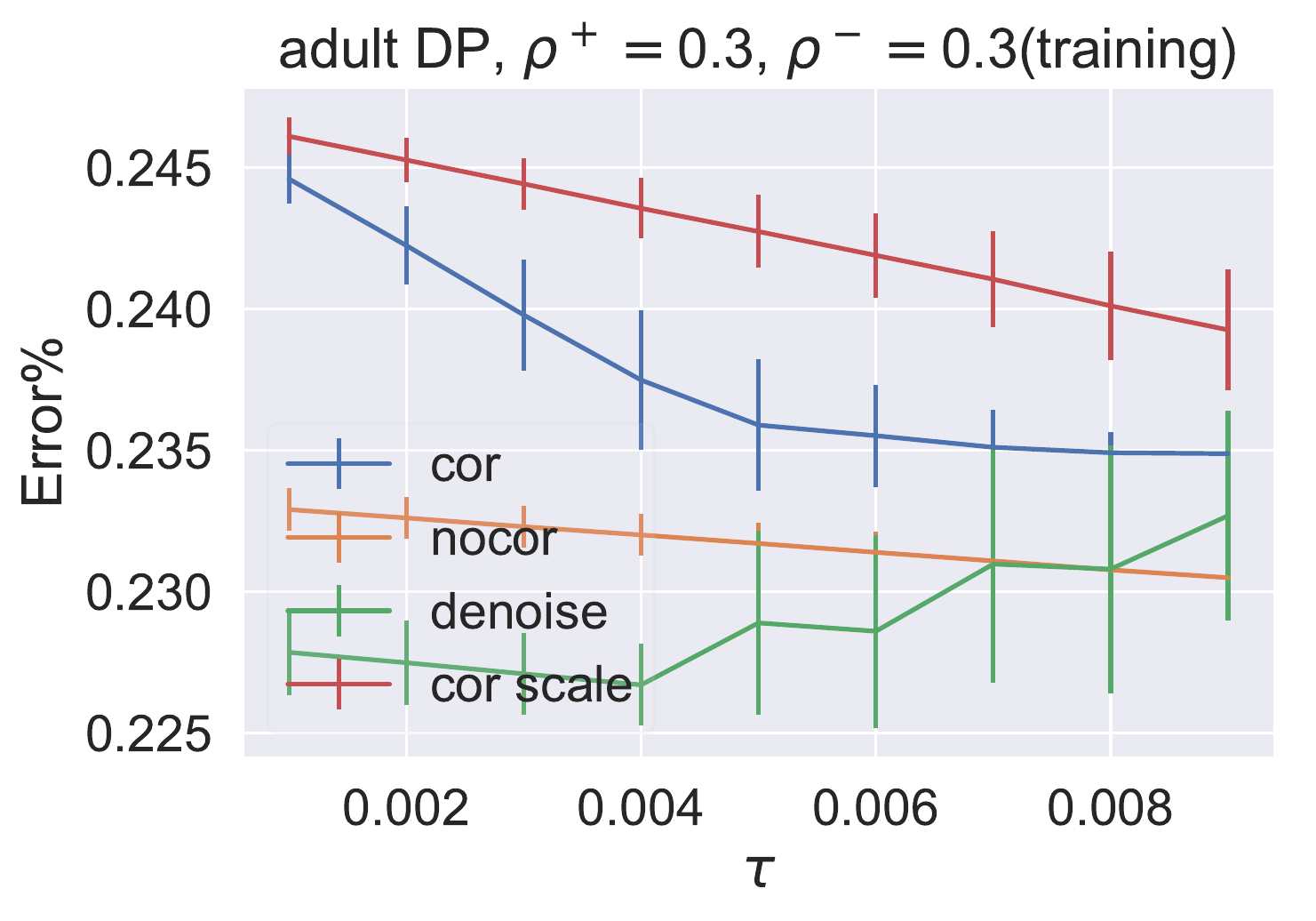}
    
    \includegraphics[width=0.24\textwidth]{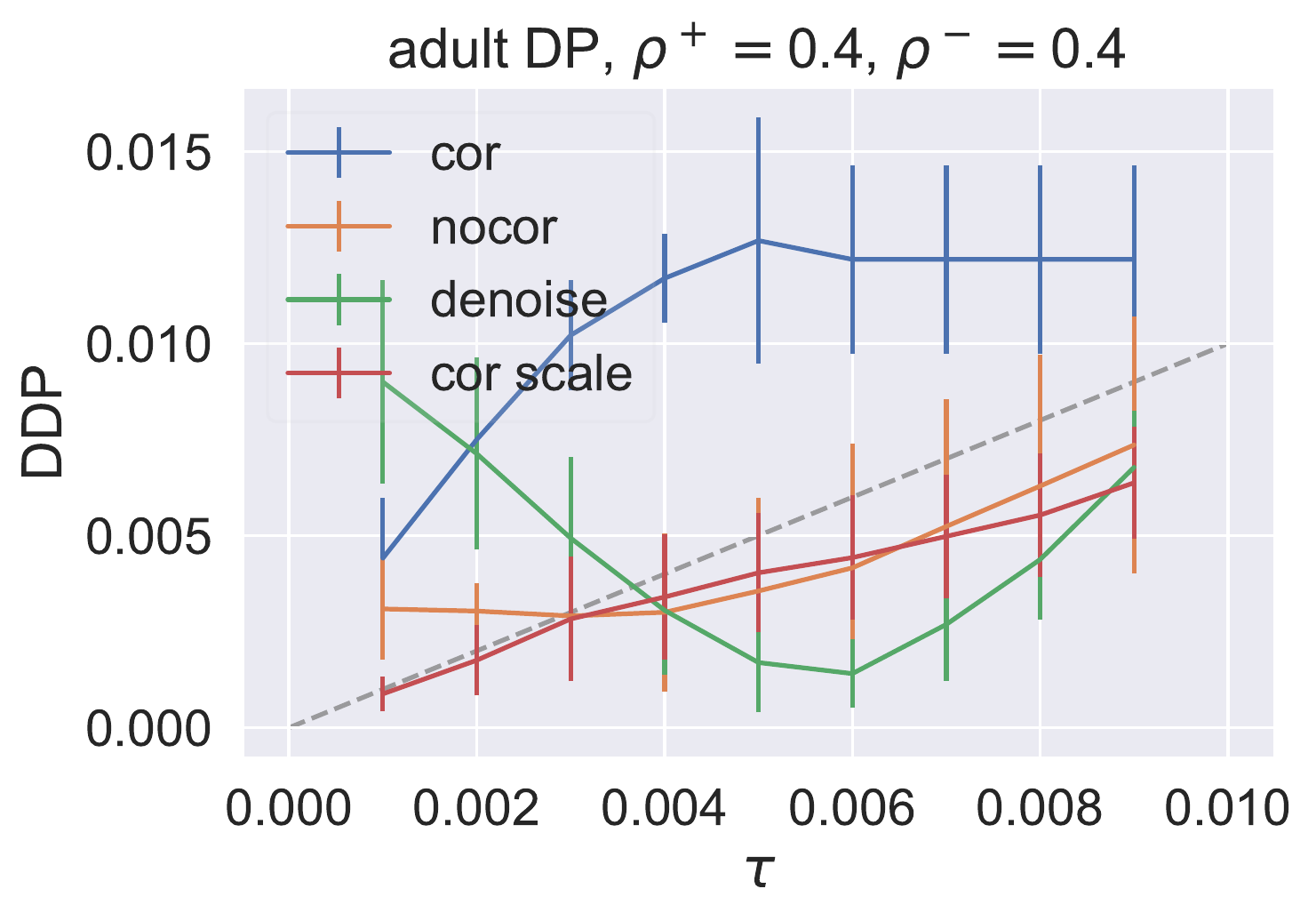}
    \includegraphics[width=0.24\textwidth]{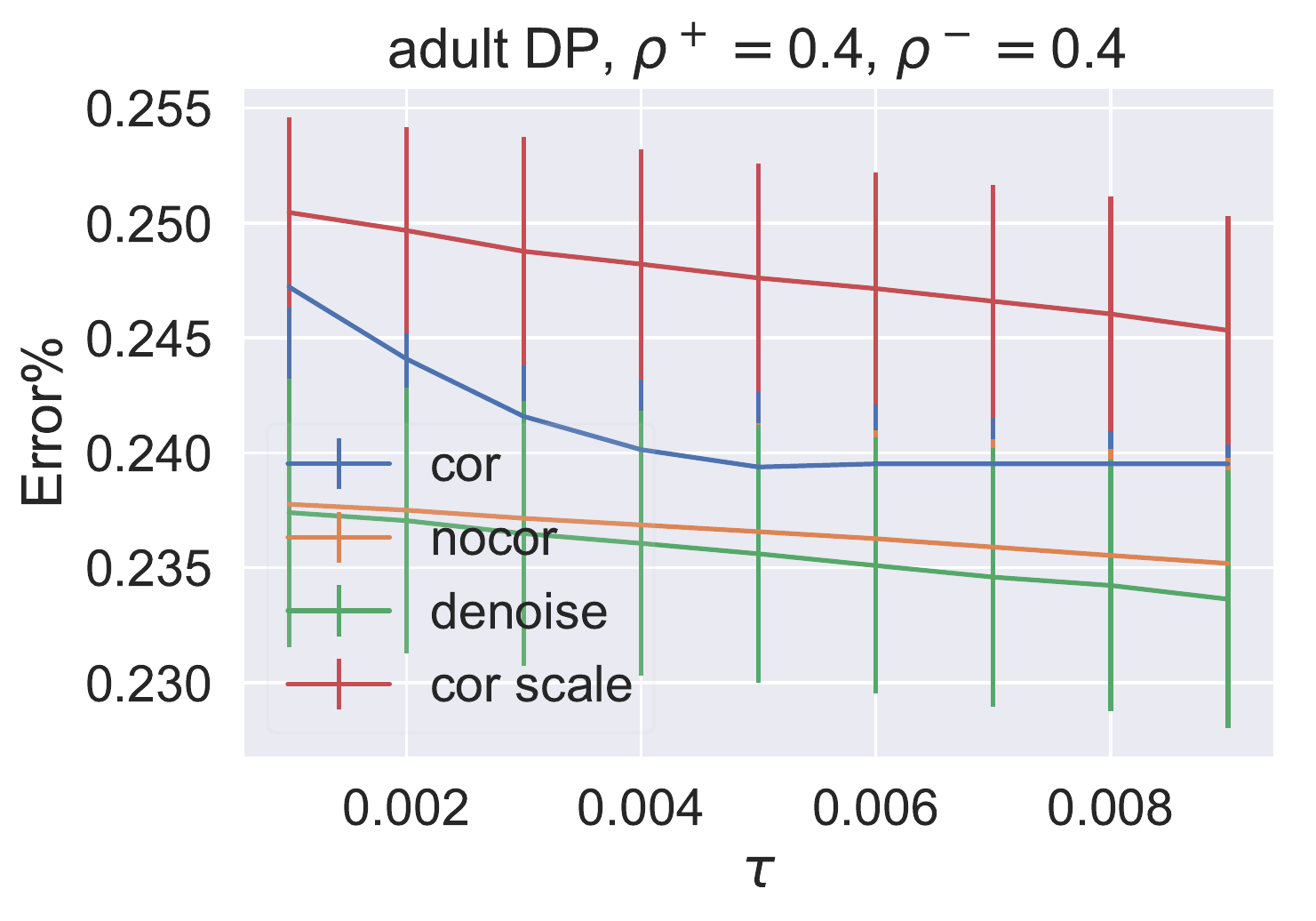}
    \includegraphics[width=0.24\textwidth]{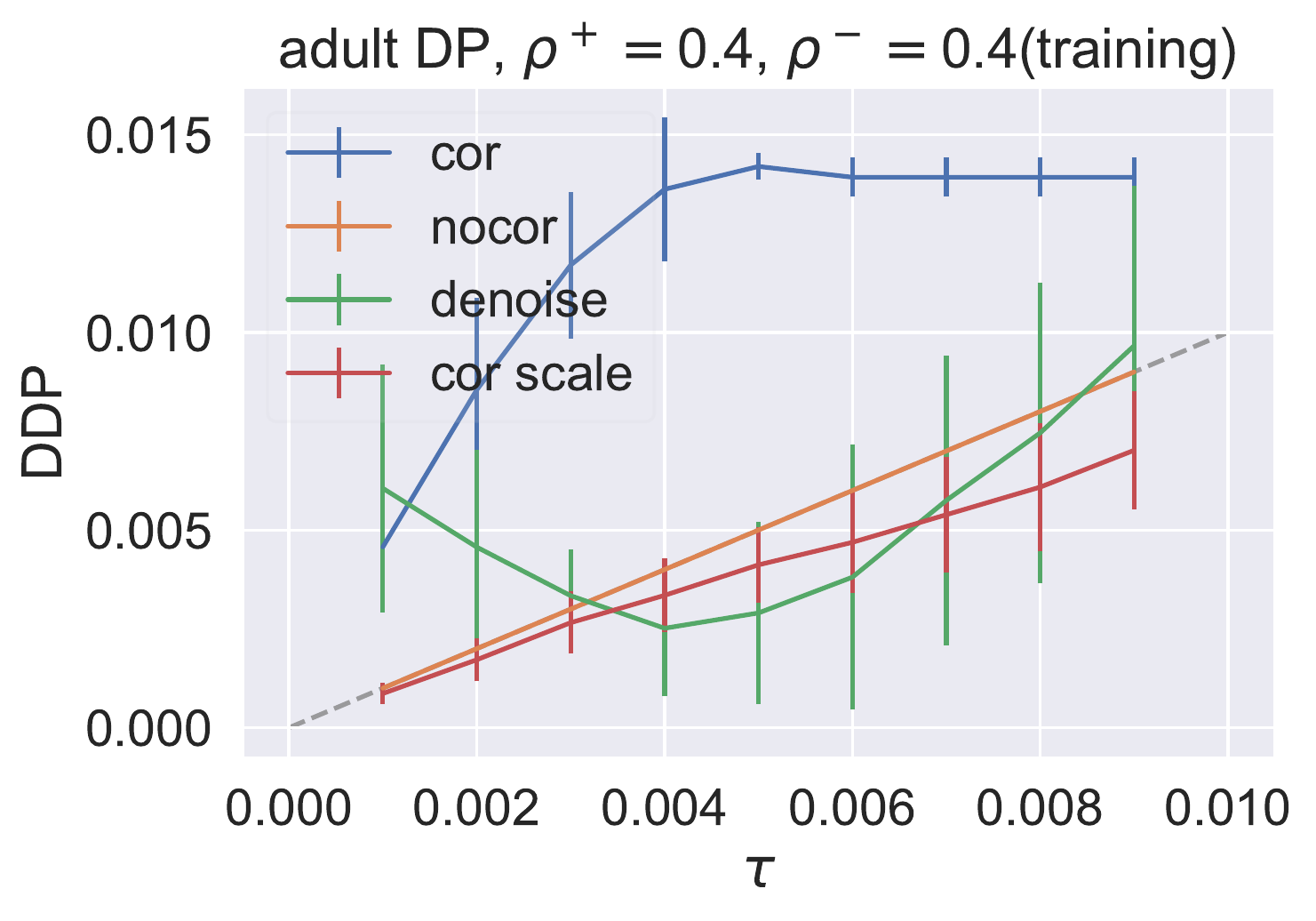}
    \includegraphics[width=0.24\textwidth]{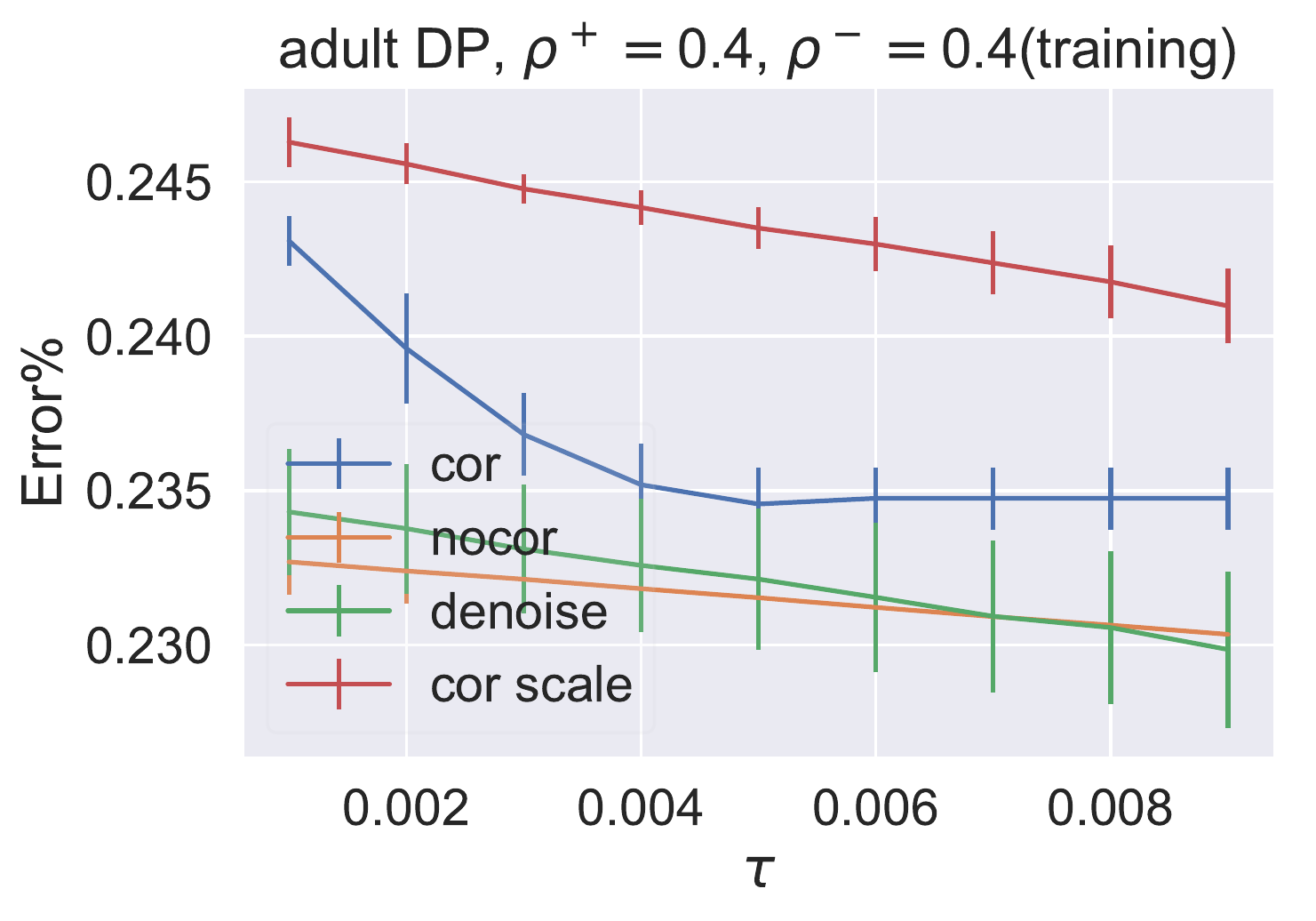}
    
    \includegraphics[width=0.24\textwidth]{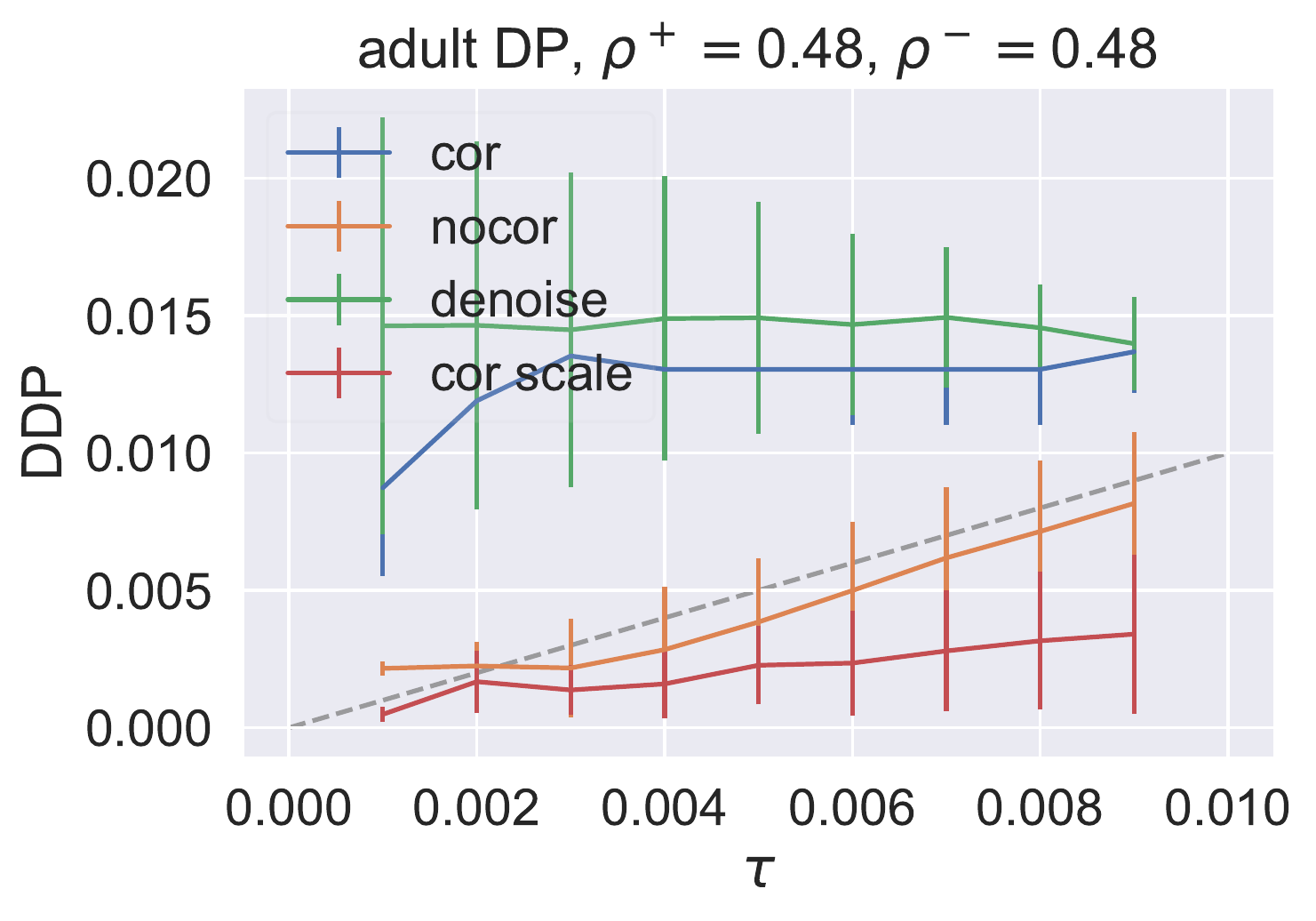}
    \includegraphics[width=0.24\textwidth]{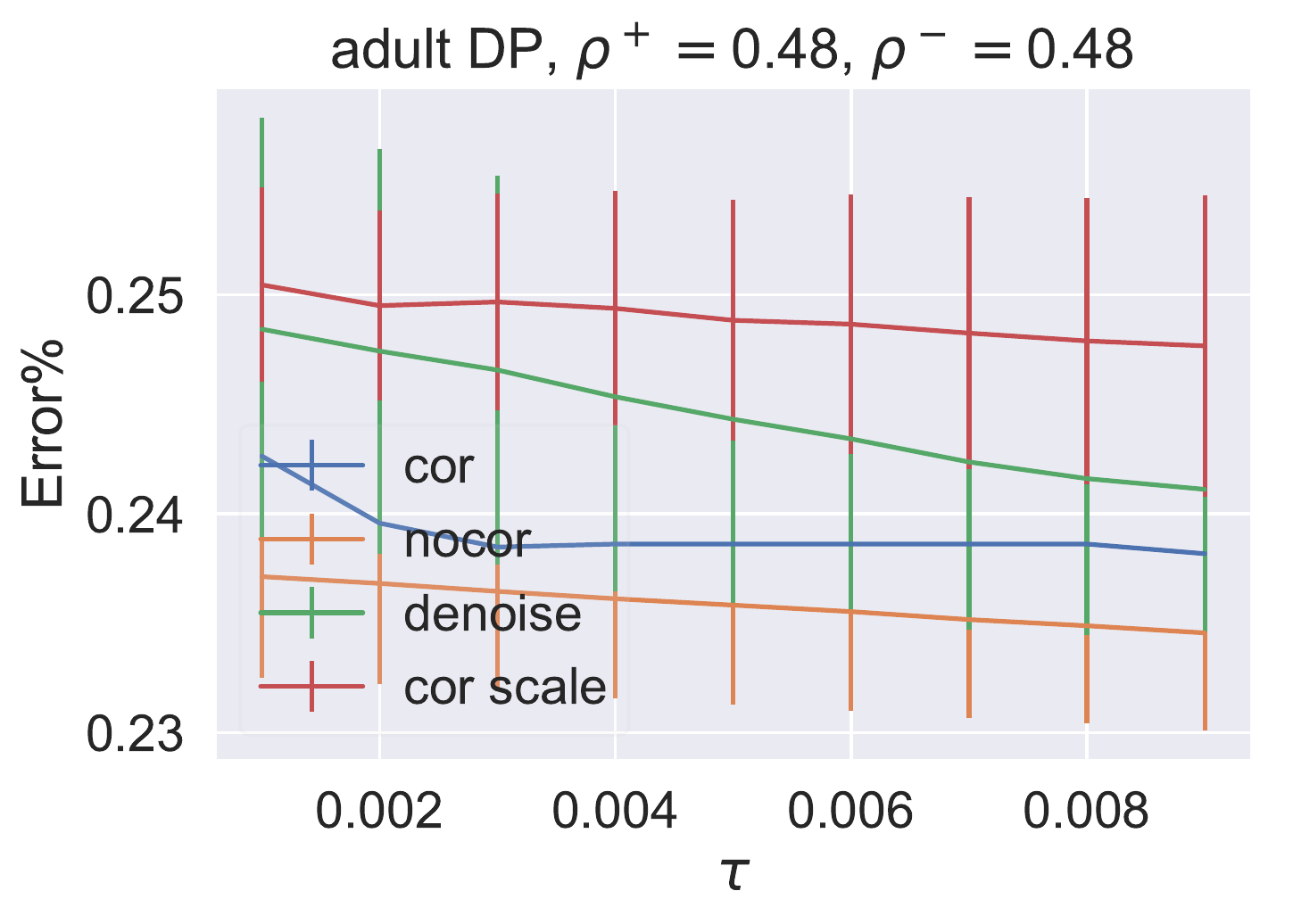}
    \includegraphics[width=0.24\textwidth]{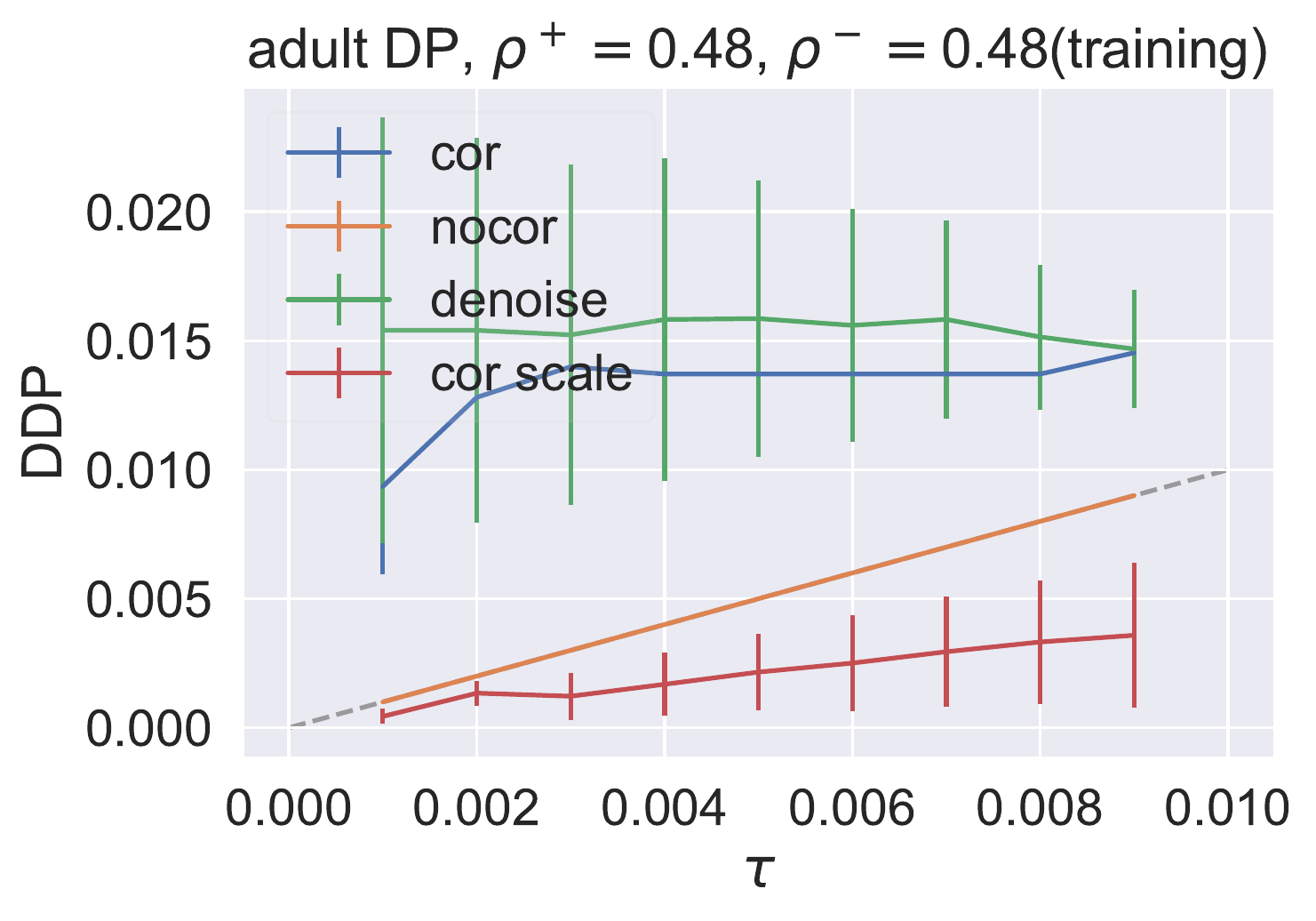}
    \includegraphics[width=0.24\textwidth]{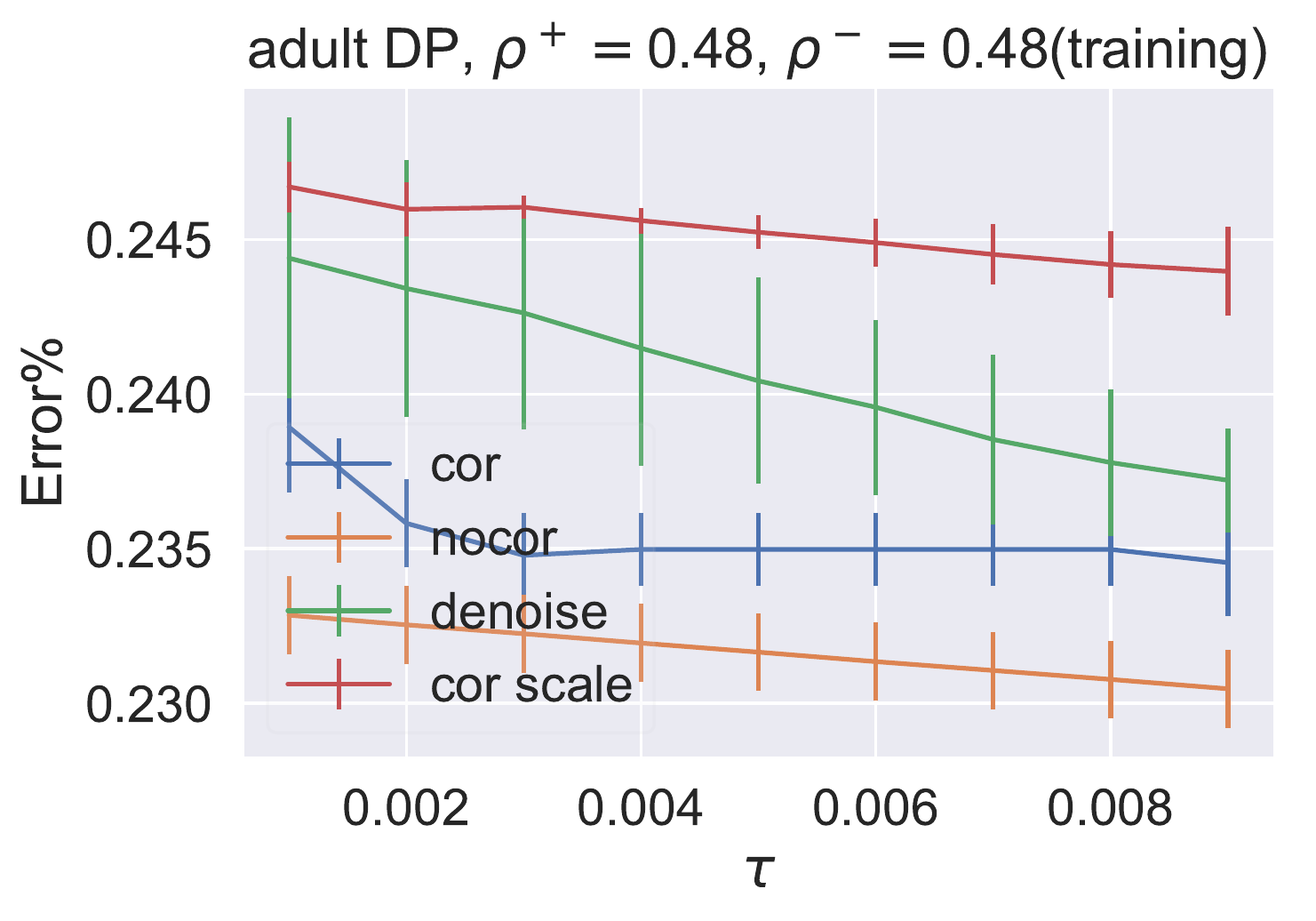}

    \caption{Relationship between input $\tau$ and fairness violation/error on the {\tt adult} dataset for various noise levels. From left to right: testing fairness violation, testing error, training fairness violation, and training error. Different noise levels from top to bottom.}
    \label{fig:diff_noise}
\end{figure*}

\end{appendices}

\end{document}